\newcommand{\E}[2]{\mathbb{E}_{#1}\left[ #2 \right]}
\DeclareMathOperator{\Tr}{Tr}
\DeclareMathOperator*{\argmin}{arg\,min}
\newcommand{\rank}{\mathrm{rank}}
\newcommand{\diag}{\mathrm{diag}}
\newcommand{\map}[1]{\hat{#1}}
\newcommand{\RR}{\mathbb{R}}
\newtheorem{theorem}{Theorem}
\newtheorem*{theorem*}{Theorem}
\newtheorem{lemma}{Lemma}
\newcommand{\KL}{D_{\mathrm{KL}}}
\newcommand{\Cat}{\mathrm{Cat}}
\newcommand{\norm}[1]{\lVert #1 \rVert}
\newcommand{\eqnn}[1]{
	\begin{equation*}
	    \begin{split}
		#1
	    \end{split}
	\end{equation*}
}
\newcommand{\eqn}[2]{
	\begin{equation}
		\label{#2}
	    \begin{split}
	    	#1
	    \end{split}
	\end{equation}
}
\begin{document}

\twocolumn[
\aistatstitle{Low Rank Based Subspace Inference for the Laplace Approximation of Bayesian Neural Networks}
\aistatsauthor{Josua Faller\textsuperscript{*} \and Jörg Martin\textsuperscript{*}}
\aistatsaddress{Physikalisch-Technische Bundesanstalt. Abbestr. 2-12, 10587 Berlin, Germany} ]

\runningauthor{Josua Faller, Jörg Martin}

\begingroup
\renewcommand\thefootnote{*}
\footnotetext{Both authors contributed equally.}
\endgroup

\begin{abstract}
Subspace inference for neural networks assumes that a subspace of their parameter space suffices to produce a reliable uncertainty quantification. In this work, we underpin the validity of this assumption by using low rank techniques. We derive an expression for a subspace model to a Bayesian inference scenario based on the Laplace approximation that is, in a certain sense, optimal given a specific dataset. We empirically show that a Laplace approximation constructed with a dimensionally reduced covariance matrix closely matches the full Laplace approximation obtained using the exact covariance matrix. Where feasible, this subspace model can serve as a baseline for benchmarking the performance of subspace models. 
In addition, we provide a scalable approximation of this subspace construction that is usable in practice and compare it to existing subspace models from the literature. In general, our approximation scheme outperforms previous work. Furthermore, we present a metric to qualitatively compare the approximation quality of different subspace models even if the exact Laplace approximation is unknown. 
\end{abstract}

\section{INTRODUCTION}

Bayesian modelling is an elegant and flexible method to quantify uncertainties of parametric models. Treating the parameters of the model as random variables allows to incorporate model uncertainty. Bayesian neural networks \citep{Gal2016,blundell2015weight,kendall2017uncertainties,hernandez2015probabilistic,maddox2019simple} implement this idea for neural networks (NNs). In practice, however, full posterior inference over Bayesian NNs is intractable due to the large number of parameters that define NNs. Thus, to quantify the uncertainty of a certain model, practitioners have to approximate the exact posterior distribution by a simpler one.
Several methods were developed to make this approximation feasible:
The posterior distribution can be approximated, e.g., by variational inference \citep{blundell2015weight,kingma2015variational,Gal2016,kendall2017uncertainties,Jordan1999,Wainwright2008}. A different idea, that goes in fact back to the 90s, is to use the Laplace approximation (LA) \citep{MacKay1992}, which has found increasing popularity in recent years due to scalable approximations \citep{lecun1989optimal,Ritter2018} and its flexible usability \citep{LaplaceRedux2021}. Moreover, in contrast to variational-inference-based approaches, it can be applied to off-the-shelf networks without any retraining: Given a maximum a posteriori (MAP) solution, that often coincides with the minimum of canonical loss functions, the LA replaces the exact posterior by a Gaussian distribution with the MAP as the mean and the inverse of the negative Hessian of the log posterior at the MAP as covariance matrix.

However, this approximation is still infeasible for NNs since the Hessian scales quadratically in the number of parameters such that often it cannot be computed or even stored, let alone be inverted. In addition, training NNs is a high dimensional non-convex optimization problem. In practice fully trained NNs are not located in a minimum of the loss function but rather on a saddle point \citep{Dauphin2014}. Hence, the so-computed Hessian is in general not positive semi-definite \citep{Sagun2016, Papyan2018}.
A partial solution to these issues is provided by approximating the Hessian by the generalized Gauss-Newton (GGN) matrix, which is identical to the Fisher Information matrix for common likelihoods \citep{Schraudolph2002,Pascanu2013,Martens14}. The GGN matrix is positive semi-definite and is constructed from objects that are feasible to compute, cf. Section \ref{sec:terminlogy_and_background} for details. 

However, it's sheer size makes the GGN matrix still unstorable, even for medium sized networks. 
Thus, to make the LA feasible for NNs, additional steps are necessary to reduce the size of the Hessian and to allow for an easier computation of its inverse. Common approaches include approximations via a diagonal \citep{lecun1989optimal,Salimans2016,kirkpatrick2017overcoming}, last layer \citep{kristiadi2020}, a Kronecker-factored \citep{Ritter2018} structure or dimensional reductions from the original $p$-dimensional space to the $s$-dimensionial subspace.

The last method gained more attention through a recent series of works which argue that it might suffice to consider partially stochastic NNs \citep{kristiadi2020,Snoek2015,Izmailov2019,Daxberger2021,Sharma2023} that is NNs where the Bayesian inference is performed in a lower dimensional subspace. NNs are heavily overparametrized and the idea is that a subset or well-selected linear combinations of parameters are sufficient to obtain reliable uncertainty estimates. 
We refer to this idea in this work as \emph{subspace inference}. \citet{Daxberger2021} apply this idea to make the LA for Bayesian NNs feasible by storing only a submatrix of the full GGN matrix. The submatrix is constructed using a subset of parameters that can be found via a diagonal approximation of the Hessian \citep{Daxberger2021}, via the magnitude of the parameters \citep{Cheng2017} or via an application of SWAG \citep{maddox2019simple}. 

Methods to reduce the size of the Hessian offer several computational advantages in comparison to the full LA. On the one hand, the memory complexity of the covariance matrix is reduced from $\mathcal{O}\left(p^2\right)$ to $\mathcal{O}\left(s^2\right)$, and on the other hand, the time complexity to draw Monte Carlo samples is of the order of $\mathcal{O}\left(s^3\right)$ instead of $\mathcal{O}\left(p^3\right)$.

The aim of our work is to give a systematic, generic and statistically sound approach to study the usability of subspace inference for the LA of Bayesian NNs.
Similar as \citet{Daxberger2021} we use the widespread combination of the LA with a linearization of our NN $f_\theta$ around the MAP value $\hat{\theta}$ of the parameters $\theta$ \citep{Foong2019,Immer2021,Deng2022,Ortega2023}:
\begin{align}
   \label{eq:intro_lin_model}
   f_{\mathrm{Lin},\theta}(X) = f_{\hat{\theta}}(X) + J_X (\theta-\hat{\theta})\,,
\end{align}
where $J_X = \nabla_\theta f_\theta(X)\vert_{\theta=\map{\theta}}$. This method is known as the linearized LA. Our method differs from existing work by making the \emph{predictive covariance} of the linearized Laplace approximation the centerpiece of our analysis. We do so for the following reasons: 
\begin{enumerate}
    \item The posterior predictive distribution is the actual objective: This was pointed out already in previous work, e.g. in \citep{Izmailov2021BNNReallyLike}: as prediction happens in function space the posterior predictive and not the posterior is the object we actually care about. 
    \item The predictive covariance determines the posterior predictive distribution completely under LA: As LA always starts with the MAP solution, the only object that differs from approach to approach is the variance estimate that is entirely determined by the predictive covariance. Thus, its approximation is all that matters.
\end{enumerate}
This viewpoint allows us to give some precise statements of approximation quality and optimality.
The contributions of our article are as follows:

\begin{enumerate}
    \item We specify an optimality criterion for subspace LAs based on closeness to the full GGN Laplace approximation. In contrast to prior work using parameter-based heuristics, we target the \emph{predictive covariance} directly, which determines predictive uncertainty in the linearized LA framework.
    \item We prove existence and uniqueness of the optimal subspace (Theorem \ref{thm:OptSubmodel}) and provide an explicit formula. Empirically, this baseline can achieve faithful approximation with <1\% of parameters.
    \item As the subspace from 2 can typically not be computed in practice,  we provide a feasible approximation and observe that it performs in many cases superior to the subset selection of \citep{Daxberger2021,LaplaceRedux2021}.
    \item  We propose a trace-based criterion for comparing subspace models when the full LA is unknown, and validate empirically that it correlates with KL-divergence to the full LA.  
\end{enumerate}

This article is organized as follows: In Section \ref{sec:recent_work} we recall recent work on the subject of the article and then evoke some background on the LA for Bayesian NNs in Section \ref{sec:terminlogy_and_background}. In Section \ref{sec:LA_for_subspace_models} we provide the main theoretical contributions of this work. In Section \ref{sec:Experiment} several experiments to empirically verify our theoretical analysis are carried out. Additional information is provided in the Appendix.

\section{RECENT WORK}
\label{sec:recent_work}

\textbf{Laplace Approximation.} The first application of the LA using the Hessian for NNs was introduced by \citet{MacKay1992}. \citet{MacKay1992Class} also proposed an approximation similar to the generalized Gauss-Newton (GGN) method. The combination of scalable factorizations or diagonal Hessian approximations with the GGN approximation \citep{Schraudolph2002,Martens14} made the LA applicable for larger networks. In particular, the GGN approximation  gained more attention due to 
the introduction of the Kronecker-Factored Approximate Curvature (KFAC) \citep{Ritter2018,Botev2017,Martens2015} which is scalable and outperforms the diagonal Hessian approximation.
Due to underfitting issues of the LA \citep{Foong2019}, the linearized
LA based on \eqref{eq:intro_lin_model} was developed \citep{Immer2021}. In this work, we use the same setting.

\textbf{Partially Stochastic Neural Networks.} 
The study of partially stochastic NNs gained some attention because of their computational efficiency. But even from a statistics viewpoint, partially stochastic NNs are attractive because they can capture the uncertainty of the full model by using only a fraction of the parameters \citep{Sharma2023,Andrade2024,2024Zhao}. \citet{Sharma2023} developed the concept of Universal Conditional Distribution Approximators and proved that certain partially stochastic NNs can form samplers of any continuous target conditional distribution arbitrary well. \citet{calvoordonez2024} extended this idea to infinitely deep Bayesian NNs.  

\citet{2020Lee} and \citet{yang2024bayesian} apply low rank approximations for dimensional reduction in the context of the KFAC approximation. \citet{Izmailov2019} developed a low-dimensional affine subspace inference scheme. They select a linear combination of parameter vectors which span a vector space around the MAP. Since they consider low-dimensional subspaces ($s\leq 10$) different methods are available to approximately sample from the posterior distribution. However, they observed that the posterior obtained in these low-dimensional subspaces is too concentrated, so that only a tempered posterior leads to reasonable uncertainties.
\citet{dold2024bayesian} apply this idea to semi-structured models.
\citet{Daxberger2021} choose a subset of parameters to construct a subspace model. This subset is selected by the parameters that have the highest posterior variance. 
However, this work often requires the selection of quite a large number of parameters ($s$ up to $4\cdot 10^4$). 
Our framework is closest to this work. In contrast to previous work, we study the predictive instead of the posterior distribution to obtain a feasible parameter subspace. In addition, we observe that neither an ad hoc tempering of the posterior distribution nor thousands of parameters are needed to construct faithful lower dimensional approximations.

\section{TERMINOLOGY AND BACKGROUND}
\label{sec:terminlogy_and_background}
\textbf{Setup and Notational Remarks.}
We consider the supervised learning framework. We model the relation between the independent observable $x$ and the target $y$ by a parametric distribution $p(y | x, \theta)$ with parameters $\theta \in \RR^p$. Different observations are, as usual, assumed to be independent and identically distributed.
We denote the training set of observations as $\mathcal{D} = \{(x_i,y_i) | 1 \leq i \leq N\}$\,, where $N$ denotes the number of observations.
We study regression and classification tasks. $C$ represents the number of outputs $f_\theta(x)=(f_\theta^1(x),\ldots, f_\theta^C(x))^\intercal\in \RR ^C$ of the NN $f_\theta$, for both, regression and classification problems. For regression we make a Gaussian model assumption $p(y | x, \theta) = \mathcal{N}(y | f_{\theta}(x), \sigma^2\mathbb{1}_C)$, where only the mean is modelled by the NN. Classification tasks with $C$ classes are modelled by a categorical distribution $p(y|x,\theta)=\mathrm{Cat}\left(y | \phi(f_{\theta}(x))\right)$ with probability vector $\phi(f_{\theta}(x))$, where $\phi$ denotes the softmax function. 

We will often consider not a single input sample to $f_\theta$ but a whole set such as $X=(x_1,\ldots,x_n)$. In this case $f_\theta(X)=(f_\theta(x_1)^\intercal,\ldots,f_\theta(x_n)^\intercal)^\intercal\in \RR^{nC}$ should be read as the concatenation of the outputs. We will frequently use the Jacobian of $f_\theta$ w.r.t. its parameter $\theta \in \RR^p$ evaluated at the MAP $\map{\theta}$ defined in \eqref{eq:MAP} below. Given a set $X$ we concatenate the single input Jacobians along the output dimension and use the symbol
\begin{equation}
    \label{eq:J_X}
    \begin{aligned}
        J_X := ( \nabla_\theta f_\theta(x_1)^\intercal,\ldots,  \nabla_\theta f_\theta(x_n)^\intercal)^\intercal\vert_{\theta=\map{\theta}}\in \RR^{nC\times p}\,.
    \end{aligned}
\end{equation}

\textbf{Bayesian Neural Networks.}
When taking a Bayesian view on NNs the parameter $\theta$ is considered as a random variable equipped with a prior distribution $p(\theta)$. Given the training data $\mathcal{D}=\{(x_i,y_i) | 1 \leq i \leq N\}$, the posterior distribution of $\theta$ is given by $p(\theta | \mathcal{D}) \propto p(\theta) p(D|\theta)= p(\theta)\prod_{i=1}^N p(y_i | x_i, \theta)$ (with $p(y_i|x_i, \theta)$ as above). A point estimate for $\theta$ is then given by the value that is most likely under $p(\theta|\mathcal{D})$, the so-called MAP (\emph{maximum a posteriori}) estimate, that is
\begin{align}
    \label{eq:MAP}
     \map{\theta} = \argmin_{\theta} \mathcal{L}_{\theta}(\mathcal{D})  \,,
\end{align} 
where we used the (unnormalized) negative log-posterior $\mathcal{L}_\theta(\mathcal{D}) = -\sum_{i=1}^N \ln p(y_i |x_i, \theta) - \ln p(\theta)$ \,.
In this work we will use the common choice $p(\theta) = \mathcal{N}(\theta|0,\lambda^{-1}\mathbb{1}_p)$ with precision $\lambda>0$ for which $\mathcal{L}_\theta$ just boils down to the MSE loss (for regression) or cross-entropy loss (for classification) combined with L2 regularization.

\textbf{Laplace Approximation.} With $\mathcal{L}_\theta(\mathcal{D})$ as above the posterior distribution $p(\theta|\mathcal{D})$ reads as
$p(\theta| \mathcal{D}) = \frac{1}{Z} p(\mathcal{D} | \theta) p(\theta) =: \frac{1}{Z} e^{-\mathcal{L}_{\theta}(\mathcal{D})}$
with the normalization constant $Z=\int d\theta\; p(\mathcal{D} | \theta) p(\theta)$. 
For complex models such as Bayesian NNs the exact posterior is typically infeasible to compute or sample from. Expanding $\mathcal{L}_\theta(\mathcal{D})$ to second order around the MAP $\map{\theta}$ from \eqref{eq:MAP}, we obtain 
the \emph{Laplace approximation} of the posterior 
\begin{align*}
 p(\theta| \mathcal{D}) \simeq \mathcal{N}(\theta | \map{\theta}, \Psi)
 \end{align*}
with mean $\map{\theta}$ and covariance $\Psi = \left(\nabla_{\theta}^2 \mathcal{L}_{\theta}(\mathcal{D}) \vert_{\theta=\map{\theta}}\right)^{-1} = \left( N H + \lambda\mathbb{1}_p  \right)^{-1} \in \mathbb{R}^{p\times p}$, where we denote by $H=-\frac{1}{N}\sum_{i=1}^N\nabla_\theta^2 \ln p(y_i|\theta,x_i)\vert_{\theta=\map{\theta}}$ the Hessian of the averaged negative log-likelihood.

\textbf{Generalized Gauss-Newton Matrix.}
The Hessian $H \in \mathbb{R}^{p \times p}$ from above is the second order derivative of $-\frac{1}{N}\ln p(\mathcal{D}|\theta)$ at the MAP $\hat{\theta}$. On the one hand, to compute $H$ is infeasible, and on the other hand, even if $H$ could be computed, it would be impractical, if not impossible, to store the $\frac{p \left(p+1\right)}{2}$ free components for typical values of $p$.
In addition, for trained NNs the Hessian does usually not have the nice property of positive semi-definiteness that is found, e.g., in the context of convex problems, because the learned MAP $\map{\theta}$ is, in general, not a local minimum but rather a saddle point. 
The difficulties of computational complexity and missing positive definiteness can be overcome by using the generalized Gauss-Newton (GGN) matrix \citep{Schraudolph2002} instead of $H$:
\begin{align}
    \label{eq:GGN}
    H_{\mathrm{GGN}} = \frac{1}{N} \sum_{i=1}^N J^\intercal_{f_i} H_{\text{-}\ln p(y_i|f_i)} J_{f_i}\,,
    \end{align}
where $J_{f_i} = \nabla_{\theta} f_{\theta}(x_i)\vert_{\theta=\hat{\theta}} \in \mathbb{R}^{C \times p}$ and $H_{\text{-}\ln p(y_i|f_i)} = - \nabla_{f}^2 \ln p(y_i|f_i)\vert_{f_i=f_{\hat{\theta}}(x_i)} \in  \mathbb{R}^{C \times C}$ is the Hessian of the negative log-likelihood w.r.t. the model output ${f_i} = f_{\theta}(x_i)$. $H_{\mathrm{GGN}}$ can be interpreted as the Hessian of the linearized model \citep{Martens14, Immer2021} and is positive semi-definite if all $H_{\text{-}\ln p(y|f_i)}$ are positive-semi definite \citep{Schraudolph2002}, which is the case in our work. More detailed  information on $H_{\mathrm{GGN}}$ and the relation between $H_{\mathrm{GGN}}$ and $H$ is provided in Appendix \ref{sec:FI_GGN_and_H}.
Combining \eqref{eq:GGN} with the term arising from the prior $p(\theta)$ we obtain as precision matrix of the LA
$
    \Psi_{\mathrm{GGN}}^{-1}=  \sum_{i=1}^N J^\intercal_{f_i} H_{\text{-}\ln p(y_i|f_i)} J_{f_i} + \lambda \mathbb{1}_p\,.
$
Where feasible, the LA $p(\theta|\mathcal{D})\simeq \mathcal{N}(\theta|\hat{\theta},\Psi_{\mathrm{GGN}})$ will serve as gold standard in this work. We refer to it as the \textit{full LA} and we identify, if not stated otherwise, $\Psi$ with its positive semi-definite estimate $\Psi_{\mathrm{GGN}}$.

\textbf{Approximations.}
While the GGN relation \eqref{eq:GGN} consists of objects, $J_{f_i}$ and $H_{\text{-}\ln p(y_i|f_i)}$, that are scalable in their computation we usually can't compute $H_{\mathrm{GGN}}$ or $\Psi_{\mathrm{GGN}}^{-1}$ as the resulting matrices have still too many dimensions for modern NNs. In particular, we can't invert $\Psi_{\mathrm{GGN}}^{-1}$ to obtain the posterior covariance
$\Psi_{\mathrm{GGN}}$.
As a consequence, various approximations have been developed that modify the structure in such a way that it takes less storage and is easier to invert. An easy solution is to only keep the diagonal of $\Psi_{\mathrm{GGN}}$. In the KFAC approximation the Hessian is reduced to a form where it is the Kronecker product of two smaller matrices.

\textbf{Predictive Distribution.}
For the posterior distribution $p(\theta|\mathcal{D})$ and a set of $n$ inputs $X$ the posterior predictive distribution is given by
\begin{align}
    \label{eq:pred_distribution}
    p(Y|X, \mathcal{D}) = \int d\theta\; p(Y| X, \theta) p(\theta | \mathcal{D}).
\end{align}
Under the LA and using the linearized model \eqref{eq:intro_lin_model} for $p(Y|X, \mathcal{D})$ we can give an explicit formula to this distribution for regression problems
\begin{align}
    \label{eq:PredDistrGauss}
    p(Y|X,\mathcal{D}) & \simeq \mathcal{N}(Y|f_{\map{\theta}}(X), \Sigma_X + \sigma^2 \mathbb{1}_{nC} )
    \\
    \label{eq:Sigma_X}
    \text{with} \quad
    \Sigma_X & = J_{X} \Psi J_{X}^\intercal  \in \RR^{nC\times nC}
\end{align}
denoting the model uncertainty part of the predictive covariance.
For classification tasks the predictive distribution can be approximated, for a single input $x_i$, by the probit approximation \citep{Bishop2006} 
\eqn{
    p(y_i|x_i,\mathcal{D}) \simeq \Cat\left(y_i | \phi\left(\frac{f_{ \map{\theta}}(x_i)}{\sqrt{1 + \frac{\pi}{8} \diag\Sigma_{x_i}}}\right)\right)
}{eq:PredDistrCat}
with $\Sigma_{x_i}\in \RR^{C\times C}$ being the corresponding block diagonal element of \eqref{eq:Sigma_X} and $\phi$ denoting the softmax function. Note that in both cases, regression and classification, the predictive distribution is essentially fixed by $\Sigma_X$ from \eqref{eq:Sigma_X}, which is why this object will be the linchpin of our analysis below. We will call $\Sigma_X$ the \emph{epistemic predictive covariance}.

\section{THE LAPLACE APPROXIMATION FOR SUBSPACE MODELS}
\label{sec:LA_for_subspace_models}
\textbf{Subspace Models.}
In this work we study, as in \citep{Izmailov2019}, models that are defined on an affine subspace of the parameter space $\mathbb{R}^p$ chosen to contain the MAP $\hat{\theta}$ from \eqref{eq:MAP}. That is, we consider a re-parametrization 
\begin{align}
    \label{eq:reparametrization}
    \theta = \hat{\theta} + P \mu \,,
\end{align}
where $P\in \RR^{p\times s}$ is a matrix that we call, somewhat loosely, the projection matrix (in general it's not related to a mathematical projection) and $\mu$ is a new parameter that runs through $\RR^s$ where $s\leq p$ is the subspace dimension.
The assumption in considering Bayesian inference of NNs in a subspace is that only a fraction of the parameter space is actually needed to represent the (epistemic) uncertainty faithfully. 
To emphasize that a Bayesian inference is done under the subspace model \eqref{eq:reparametrization} we will use in the following $p_P$ instead of $p$ as a symbol for our probability densities. 

Note that the selection of a subset of parameters on which to perform inference, as it is done by \cite{Daxberger2021} and \cite{Sharma2023}, is a special case of \eqref{eq:reparametrization}. 

\textbf{Bayesian Inference for $\mu$.} 
To perform Bayesian inference in the subspace model \eqref{eq:reparametrization}, we choose the following prior 
\begin{align}
    \label{eq:mu_prior}
    p_P(\mu) = \mathcal{N}(\mu|0, \left(\lambda P^\intercal P \right)^{-1}) \,,
\end{align}
where we recall that $\lambda$ is the precision of $p(\theta)$. 
Together with the following likelihood 
\begin{align}
    \label{eq:mu_likelihood}
    p_P(\mathcal{D}|\mu) = p(\mathcal{D} |\hat{\theta} + P\mu)\,,
\end{align}
that is induced by \eqref{eq:reparametrization}, the following lemma holds: 
\begin{lemma}
\label{lem:Bayes_model}
In the setting above, consider a full rank $P\in \RR^{p\times s}$.  For the posterior $p_P(\mu|\mathcal{D})\propto p_P(\mu) p_P(\mathcal{D}|\mu)$ with prior $p_P(\mu)$ as in \eqref{eq:mu_prior} we have the LA
\begin{align}
    \label{eq:LA_mu}
    p_P(\mu|\mathcal{D}) \simeq \mathcal{N}(0,(P^\intercal \Psi^{-1} P)^{-1}) \,.
\end{align}
\end{lemma}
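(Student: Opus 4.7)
My plan is to apply the Laplace approximation machinery directly to the subspace posterior $\tilde{p}(\mu|\mathcal{D})$ by Taylor-expanding its negative log density to second order around $\mu = 0$. This is the natural expansion point because, under the reparametrization $\theta = \hat{\theta} + P\mu$, the value $\mu=0$ corresponds to the full-space MAP $\hat{\theta}$, mirroring the convention used for the full-space LA in the display preceding equation~\eqref{eq:BayesPosterior}.

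The main computational step is to write $\tilde{\mathcal{L}}_\mu := -\ln\tilde{p}(\mathcal{D}|\mu) - \ln\tilde{p}(\mu)$ as $-\sum_{i=1}^N \ln p(y_i|x_i,\hat{\theta}+P\mu) + \tfrac{\lambda}{2}\mu^\intercal P^\intercal P\mu$ up to additive constants, and then to compute its Hessian at $\mu = 0$ via the chain rule. The data part contributes $P^\intercal\bigl(-\sum_i \nabla^2_\theta \ln p(y_i|x_i,\theta)\vert_{\hat{\theta}}\bigr)P = P^\intercal(NH)P$, the prior contributes $\lambda P^\intercal P$, and together they assemble to $P^\intercal(NH+\lambda\mathbb{1}_p)P = P^\intercal \Psi^{-1}P$, yielding the covariance $(P^\intercal \Psi^{-1}P)^{-1}$ of~\eqref{eq:LA_mu}.

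The subtle point that I expect to require the most care is justifying the mean. A direct computation using the full-space MAP condition $\sum_i \nabla_\theta \ln p(y_i|x_i,\theta)\vert_{\hat{\theta}} = \lambda \hat{\theta}$ gives $\nabla_\mu \tilde{\mathcal{L}}_\mu\vert_{\mu=0} = -\lambda P^\intercal \hat{\theta}$, which is generally nonzero, so $\mu = 0$ is not literally the mode of $\tilde{p}(\mu|\mathcal{D})$. The cleanest way to reconcile this with~\eqref{eq:LA_mu} -- and arguably the intended reading -- is to interpret the subspace LA as the restriction of the full-space LA $\mathcal{N}(\hat{\theta},\Psi)$ to the affine subspace $\{\hat{\theta} + P\mu\}$. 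Substituting the reparametrization directly gives $\mathcal{N}(\hat{\theta}+P\mu\,|\,\hat{\theta},\Psi)\propto \exp\bigl[-\tfrac{1}{2}\mu^\intercal P^\intercal \Psi^{-1} P\mu\bigr]$, which is precisely the claimed Gaussian in $\mu$ and simultaneously verifies the mean and covariance in~\eqref{eq:LA_mu} without further work. This dual viewpoint matches the centred-at-expansion-point convention of the Hessian computation and closes the argument.
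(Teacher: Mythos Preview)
Your Hessian computation is exactly the paper's one-sentence justification: the paper simply observes that $-\nabla_\mu^2 \ln\bigl(\tilde{p}(\mathcal{D}|\mu)\tilde{p}(\mu)\bigr)\big\vert_{\mu=0} = P^\intercal(NH+\lambda\mathbb{1}_p)P = P^\intercal\Psi^{-1}P$ and stops there.

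Your observation that $\mu=0$ need not be the exact subspace MAP (since $\nabla_\mu\tilde{\mathcal{L}}_\mu\vert_{\mu=0}=-\lambda P^\intercal\hat{\theta}$ is generally nonzero) is correct and is a genuine subtlety the paper does not raise. The paper implicitly adopts the convention of expanding at the point corresponding to the full-space MAP and taking the Hessian there as the precision, rather than locating the true subspace mode; this mirrors how the LA is applied in practice to trained networks that are not at exact stationary points anyway. Your second argument---reading \eqref{eq:LA_mu} as the restriction of the full-space Laplace Gaussian $\mathcal{N}(\hat{\theta},\Psi)$ to the affine slice $\{\hat{\theta}+P\mu\}$---is a cleaner alternative route that pins down both mean and covariance simultaneously without appealing to that convention, and it is not the route the paper takes.
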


Lemma \ref{lem:Bayes_model} follows from \eqref{eq:mu_likelihood} and \eqref{eq:mu_prior}, we can deduce $-\nabla_\mu^2 \ln\left(p_P(\mathcal{D}|\mu) p_P(\mu)\right) \vert_{\mu=0} = P^\intercal (NH+ \lambda \mathbb{1}_p)P = P^\intercal \Psi^{-1} P$.

We will find in Theorem \ref{thm:OptSubmodel} below that the family of posteriors \eqref{eq:LA_mu} is rich enough to approximate the full LA optimally in a certain sense when a suitable $P$ is chosen. Moreover, in the case where $P$ encodes a subset, \eqref{eq:LA_mu} coincides with the posterior considered by \citet{Daxberger2021}.

\textbf{Predictive Distributions of Subspace Models.}
Similar to \eqref{eq:intro_lin_model} we linearize $\tilde{f}_{\mu} =f_{\map{\theta} + P \mu}$ around $\mu=0$ to obtain for a set of $n$ inputs $X$ 
\begin{align}
    \label{eq:lin_model_mu}
    \tilde{f}_{\mathrm{Lin},\mu}(X) = f_{\hat{\theta}}(X) + J_X P (\mu-0) \,,
\end{align}
where we denoted, as in \eqref{eq:intro_lin_model}, by $J_X=\nabla_\theta f_{\theta}(X)\vert_{\theta=\map{\theta}} \in \RR^{nC\times p}$ the Jacobian of the full network at the MAP.

Combining \eqref{eq:LA_mu} with \eqref{eq:lin_model_mu} we obtain as above for the predictive distribution $p_P(Y|X,\mathcal{D})$
\begin{equation}
    \begin{gathered}
        \mathcal{N}(Y |f_{\map{\theta}}(X), \Sigma_{P,X}+ \sigma^2 \mathbb{1}_{nC} ) \mbox{ or} 
        \\
        \Cat\left(y_i | \phi\left(\frac{f_{ \map{\theta}}(x_i)}{\sqrt{1 + \frac{\pi}{8} \diag\Sigma_{P,x_i}}}\right)\right)\,, 
    \end{gathered}
    \label{eq:LaplaceSubmodel}
\end{equation}
for regression and classification respectively, with the notation 
\begin{align}
    \label{eq:Sigma_p}
    \Sigma_{P,X} = J_X P (P^\intercal \Psi^{-1} P)^{-1} P^\intercal J_X^\intercal \in \mathbb{R}^{nC\times nC}
\end{align}
for its epistemic predictive covariance. In \eqref{eq:LaplaceSubmodel} we use $\Sigma_{P,x_i}\in \RR^{C\times C}$ to denote the block diagonal elements of $\Sigma_{P,X} \in \RR^{nC \times nC}$ corresponding to the single inputs $x_i$.
 
\subsection{The Subspace LA Closest to the Full LA}
\label{subsec:the_optimal_subspace_model}

Consider a set of $n$ inputs $X=\left(x_1, \ldots, x_n \right)$. As already noted above, the predictive distribution $p(Y|X,\mathcal{D})$ of the full model and $p_P(Y|X,\mathcal{D})$ of the subspace model only differ by their epistemic predictive covariances $\Sigma_P$ and $\Sigma_{P,X}$. Given $X$ and a fixed subspace dimension $s\leq p$, we might therefore consider $P^*\in \RR^{p\times s}$ optimal if it solves the following minimization problem.
\begin{align}
    \label{eq:minimization_problem}
    P^* \in \argmin_{P\in \RR^{p\times s}, \,\rank \,P=s} \|\Sigma_{P,X} - \Sigma_X\|_F \,,
\end{align}
where $\|\ldots\|_F$ denotes the Frobenius norm. A solution to \eqref{eq:minimization_problem} is never unique. In fact, for any $P^*$ that solves \eqref{eq:minimization_problem} we can also consider $P^*Q$ for an arbitrary invertible $Q\in \RR^{s\times s}$  since we have $\Sigma_{P^*Q,X}=\Sigma_{P^*,X}$, cf. \eqref{eq:Sigma_p}. Note that such a change of $P^*$ corresponds to a reparameterization $\mu \rightarrow Q^{-1}\mu$. We will prove below, however, that up to such reparametrizations the solution to \eqref{eq:minimization_problem} is unique.

For the solution of the problem \eqref{eq:minimization_problem} we will need the eigenvalue decomposition $\Sigma_X = J_X \Psi J_X = U \Lambda U^\intercal$ where $U$ is an orthogonal matrix and $\Lambda \in \mathbb{R}^{nC \times nC}$ is a positive semi-definite diagonal matrix. We choose this eigendecomposition such that the diagonal entries of $\Lambda$ are decreasing. We will use the Eckart-Young-Mirsky-Theorem \citep{Schmidt1907,Eckart1936,Mirsky1960} which states that the following low rank problem has an explicit solution
\begin{align}
    \label{eq:BestLowRankSigma}
    U_s \Lambda_s U_s^\intercal \in \argmin_{A\in \mathbb{R}^{nC \times nC}: \,\rank\,A \leq s} \| A- \Sigma_X \|_F \,,
\end{align}
where $U_s \in \mathbb{R}^{nC \times s}$ contains the first $s$ eigenvectors, called dominant eigenvectors from now on, and $\Lambda_s \in \mathbb{R}^{s\times s}$ is the reduced diagonal matrix obtained by taking the upper $s \times s$ block containing the $s$ leading eigenvalues of $\Sigma_X$.
While solving the low rank problem \eqref{eq:BestLowRankSigma} is standard, solving \eqref{eq:optimal_P} for $P^{*}$ is not. In the following theorem we give an explicit expression to $P^{*}$ and show that its epistemic predictive covariance solves \eqref{eq:BestLowRankSigma}.

\begin{theorem}[Existence and uniqueness for \eqref{eq:minimization_problem}]
\label{thm:OptSubmodel}
Consider the problem \eqref{eq:minimization_problem} with $s\leq s_{\mathrm{max}}=\min(nC,p)$.
Suppose that $J_X\in \RR^{nC\times p}$ has full rank. For any invertible $Q\in \RR^{s\times s}$ the matrix
\begin{align}
    \label{eq:optimal_P}
    P^* = \Psi J_X^\intercal U_s Q
\end{align}
solves \eqref{eq:minimization_problem}.
For any such $P^*$ we have $\Sigma_{P^*,X} = U_s \Lambda_s U_s^\intercal$.
If the diagonal elements of $\Lambda$ satisfy $\sigma_s > \sigma_{s+1}$ then the solution of the minimization problem \eqref{eq:minimization_problem} is unique up to reparametrization: any minimizer of \eqref{eq:minimization_problem} is of the form \eqref{eq:optimal_P}.
\end{theorem}

The proof of Theorem \ref{thm:OptSubmodel} is provided in Appendix \ref{sec:AppendixProofSubmodel}.
The condition $\sigma_s > \sigma_{s+1}$ on the singular values of $\Sigma_X$ is typically true in practice since ``noisy real-world'' matrices almost surely have distinct singular values \citep{hartfiel1995dense}. The restriction to dimensions below $s_{\mathrm{max}} = \min(nC,p)$ and the assumption on the full rank of $J_X$ is needed to assure that $P^*$ has full rank which is required for $\Sigma_{P^*,X}$ in order to be well-defined. If $J_X$ doesn't have full rank, we restrict ourselves to $s$ below the rank of the Jacobian. This is the case for some regression problems in Section \ref{sec:Experiment}. In Appendix \ref{subsec:additional_remark_to_theorem} we discuss some additional consequences of Theorem \ref{thm:OptSubmodel}.
\begin{algorithm}[tb]
   \caption{Subspace Construction}
   \label{alg:construction}
\begin{algorithmic}[1]
   \REQUIRE Trained $f_{\hat{\theta}}$, dimension $s$, subset size $n$
   \ENSURE Projection matrix $P \in \mathbb{R}^{p \times s}$
   \STATE Compute $\Psi_{\text{approx}} \in \{\mathrm{KFAC},\, \mathrm{Diagonal}\}$
   \STATE Sample $X' \subset \mathcal{D}$ with $|X'| = n$
   \STATE Compute $J_{X'} = \nabla_{\theta} f_{\theta}(X')|_{\theta=\hat{\theta}}$
   \STATE Compute $M = J_{X'}\,\Psi_{\text{approx}}\,J_{X'}^{\top} \in \mathbb{R}^{nC\times nC}$
   \STATE SVD: $M = U\,\Lambda\,U^{\top}$, extract top $s$ eigenvectors $U_s$
   \STATE \textbf{Return} $P = \Psi_{\text{approx}}\,J_{X'}^{\top}\, U_s$
\end{algorithmic}
\end{algorithm}

\begin{figure*}[t!]
    \centering

    
    
    \begin{subfigure}{0.192\textwidth}
        \centering
        \includegraphics[width=\linewidth]{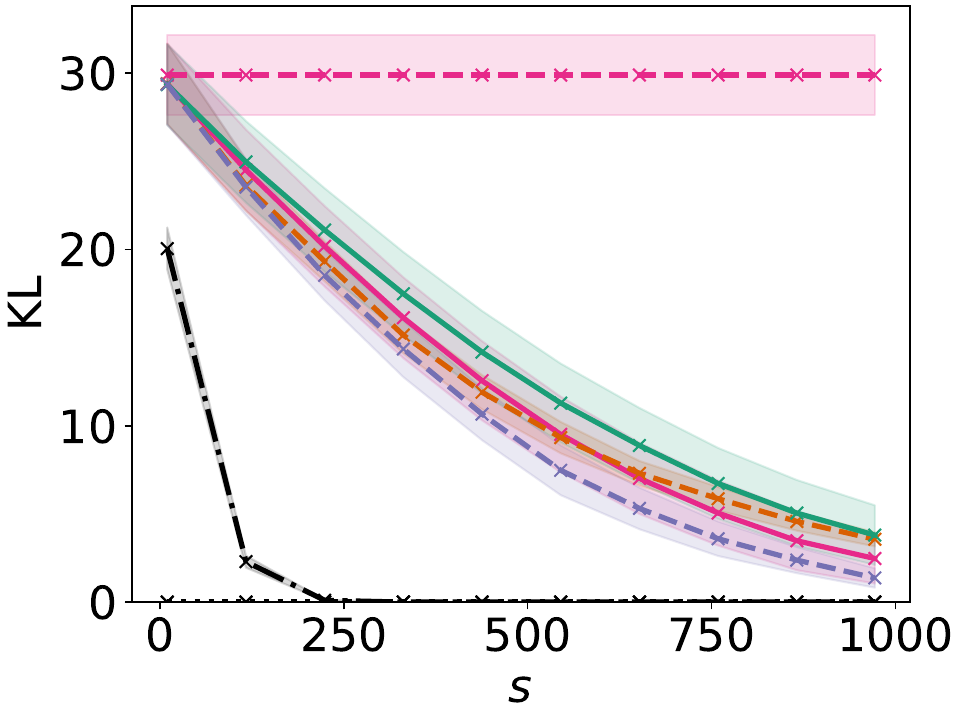}
    \end{subfigure}
    \hfill
    \begin{subfigure}{0.192\textwidth}
        \centering
        \includegraphics[width=\linewidth]{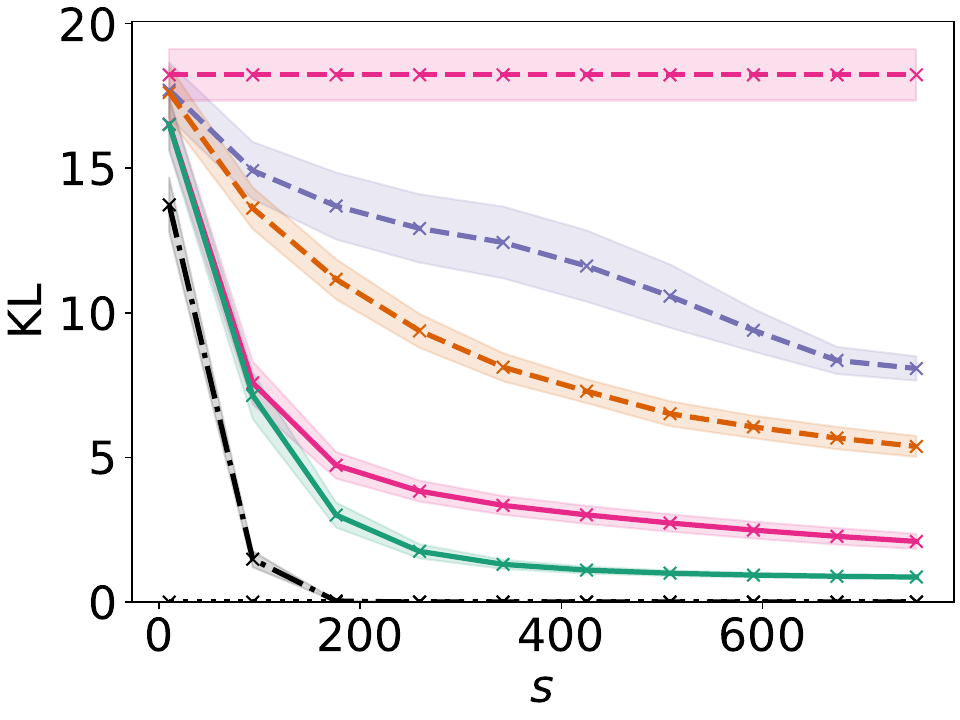}
    \end{subfigure}
    \hfill
    \begin{subfigure}{0.192\textwidth}
        \centering
        \includegraphics[width=\linewidth]{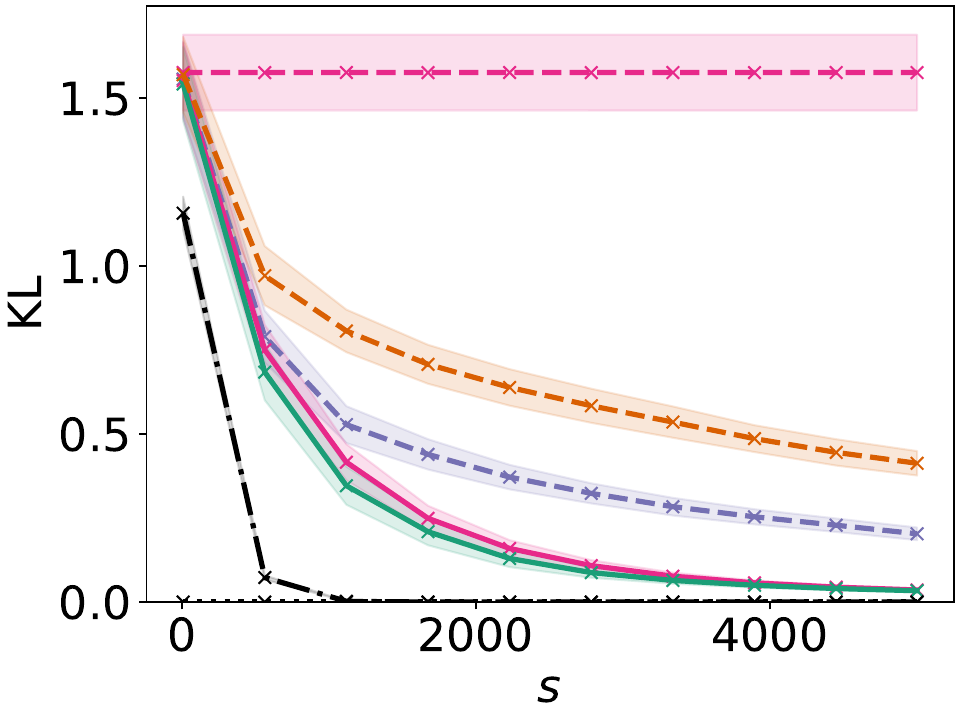}
    \end{subfigure}
    \hfill
    \begin{subfigure}{0.192\textwidth}
        \centering
        \includegraphics[width=\linewidth]{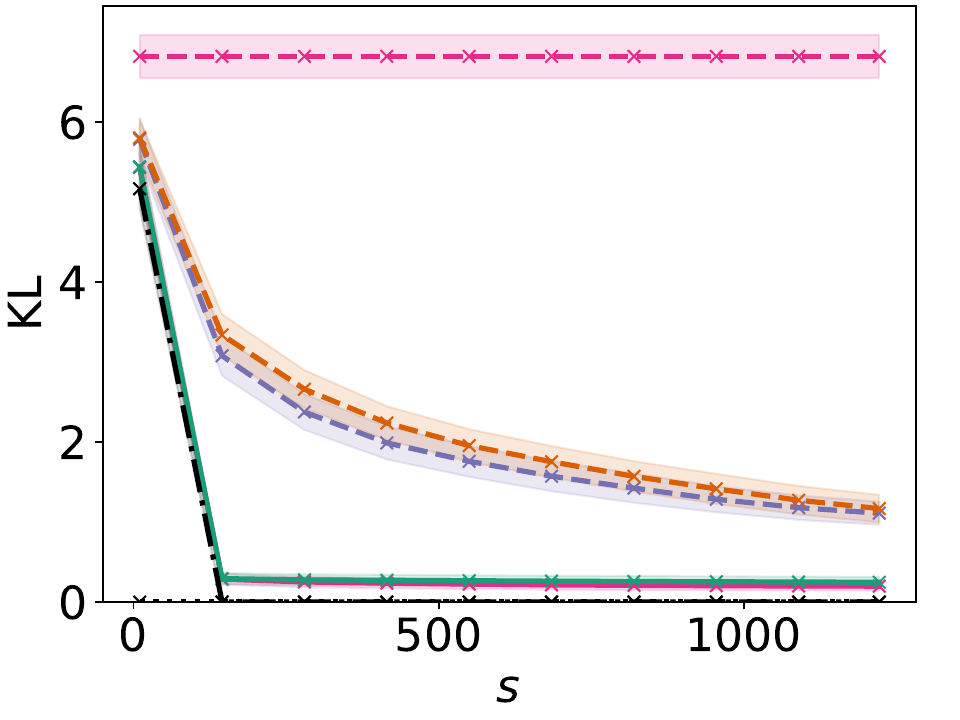}
    \end{subfigure}
    \hfill
    \begin{subfigure}{0.192\textwidth}
        \centering
        \includegraphics[width=\linewidth]{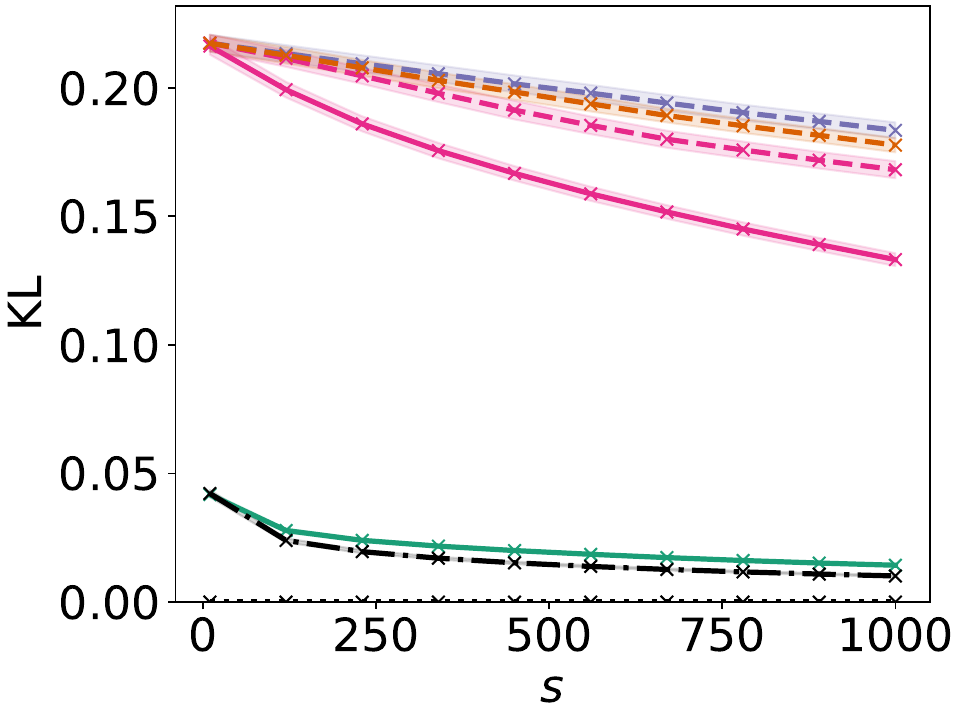}
    \end{subfigure}
    
    \vspace{0.1cm} 
    
    \begin{subfigure}{0.195\textwidth}
        \centering
        \includegraphics[width=\linewidth]{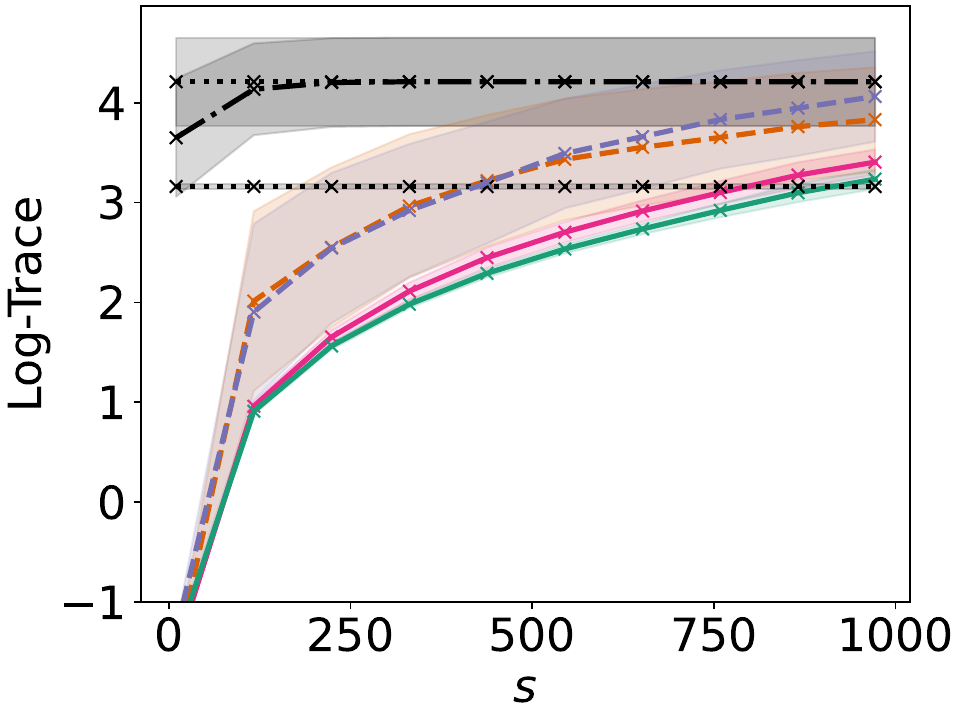}
        \caption{ENB}
    \end{subfigure}
    \hfill
    \begin{subfigure}{0.195\textwidth}
        \centering
        \includegraphics[width=\linewidth]{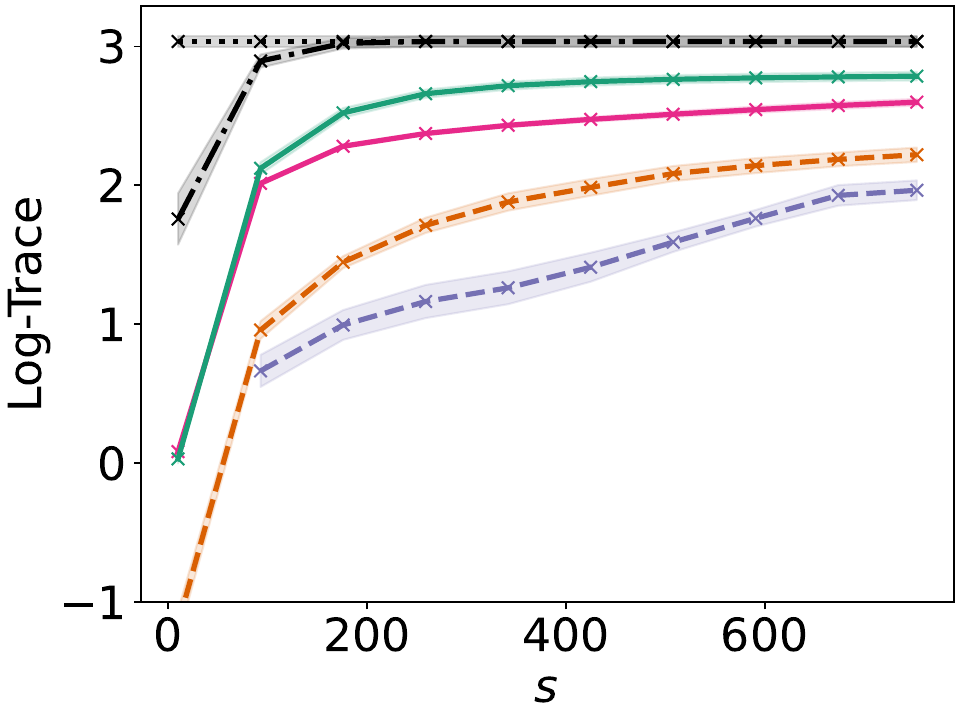}
        \caption{Red Wine}
    \end{subfigure}
    \hfill
    \begin{subfigure}{0.195\textwidth}
        \centering
        \includegraphics[width=\linewidth]{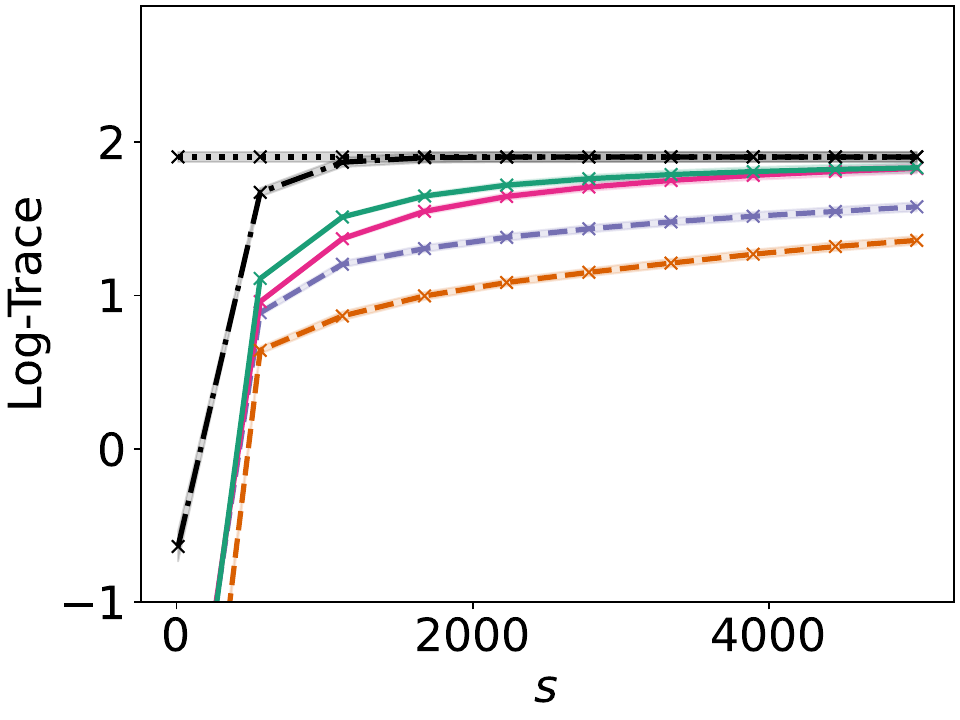}
        \caption{California}
    \end{subfigure}
    \hfill
    \begin{subfigure}{0.195\textwidth}
        \centering
        \includegraphics[width=\linewidth]{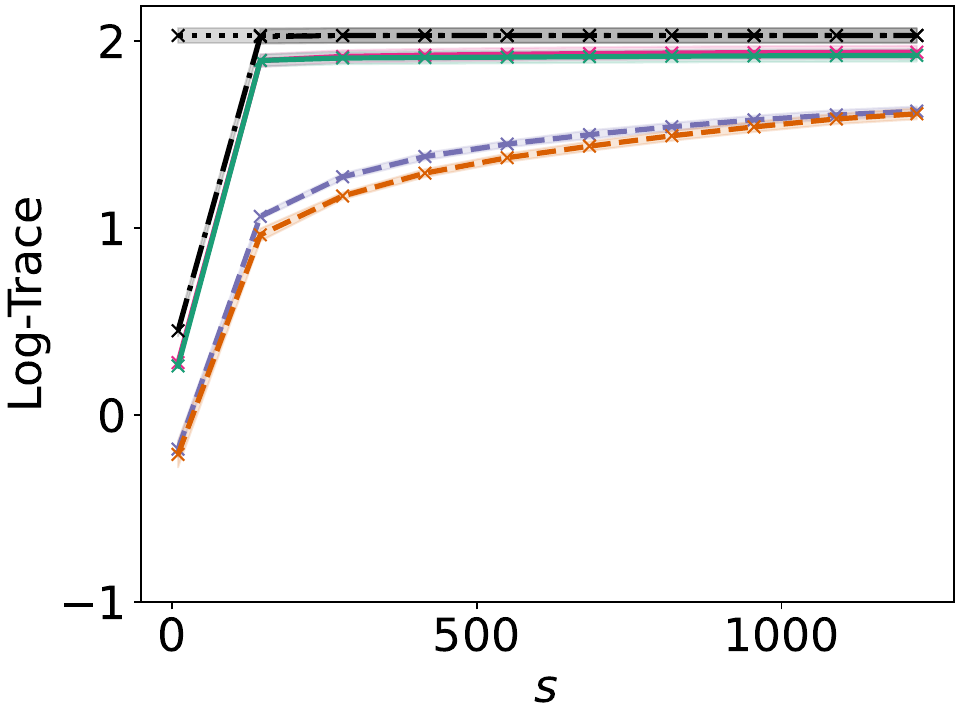}
        \caption{Naval Propulsion}
    \end{subfigure}
    \hfill
    \begin{subfigure}{0.195\textwidth}
        \centering
        \includegraphics[width=\linewidth]{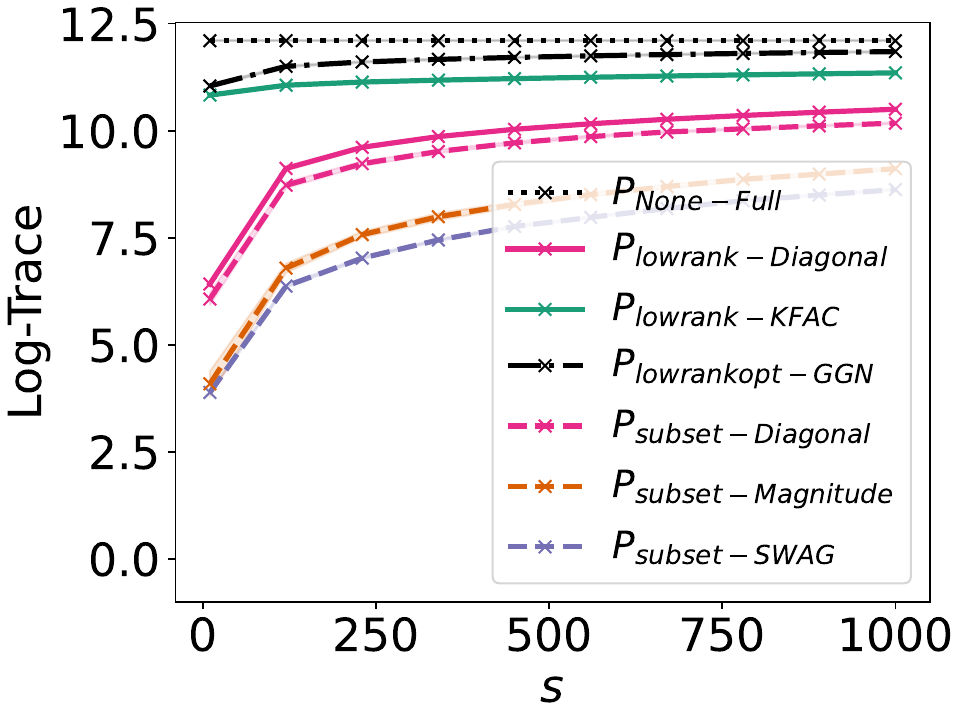}
        \caption{Fashion-MNIST}
    \end{subfigure}
    \caption{Comparison of subspace models for different datasets and dimensions $s$. Different choices of $P$ are marked by different colours and line types (cf. the legend in lower rightmost plot). The first row displays the KL-divergence \eqref{eq:KL} and the second the logarithm of the trace \eqref{eq:TraceOrderMain}. Missing values in the logarithm of trace plots have a trace of zero at these values of $s$ (e.g. $P_{\mathrm{subset-Diagonal}}$ for all regression datasets and all considered $s$ or $P_{\mathrm{subset-SWAG}}$ for the lowest $s$ in Red Wine).}
    \label{fig:plots_with_feasible_optimal_solution}
\end{figure*}

\subsection{A Feasible Approximation} 
\label{sec:thm_in_practice}

\begin{algorithm}[tb]
   \caption{Predictive Covariance Computation}
   \label{alg:prediction}
\begin{algorithmic}[1]
   \REQUIRE Projection $P$ (from Algorithm~\ref{alg:construction}), test inputs $X$
   \ENSURE Epistemic predictive covariance $\Sigma_{P,X}$
   \STATE Compute $J_X = \nabla_{\theta} f_{\theta}(X)|_{\theta=\hat{\theta}}$ (batchwise)
   \STATE Compute $V^\top P$ where $\Psi_{\mathrm{GGN}} = VV^\top$ (batchwise)
   \STATE Compute $(P^\top \Psi_{\mathrm{GGN}} P)^{-1} \in \mathbb{R}^{s \times s}$
   \STATE \textbf{Return} $\Sigma_{P,X} = J_X P (P^\top \Psi_{\mathrm{GGN}} P)^{-1} P^\top J_X^\top$
\end{algorithmic}
\end{algorithm}
Theorem \ref{thm:OptSubmodel} states that there is an optimal solution to problem \eqref{eq:minimization_problem} and it is, to the best knowledge of the authors, the first systematic solution to a subspace modelling for Bayesian NNs in the context of LA. 
However, the applicability of Theorem \ref{thm:OptSubmodel} is limited in practice, due to the following two reasons:

\textbf{Epistemic Limitation.} Training datasets $\mathcal{D}$ are often so large that computing the eigendecomposition of $\Sigma_X$ with $X=\mathcal{D}$, and thus of $U_s$, is infeasible. However, even if we can pick $X=\mathcal{D}$ we actually want the subspace model to work for unseen data points, that is data points that are not contained in $\mathcal{D}$.

\textbf{No Access to $\Psi$.} The posterior covariance $\Psi=\Psi_{\mathrm{GGN}}$ from the full LA is in general not feasible.

In practice, we will therefore use the following workflow:
\begin{enumerate}
    \item Fix an approximation $\Psi_{\mathrm{approx}}$ to $\Psi$ such as the KFAC or diagonal approximation.
    \item Use a subset $X'$ of size $n$ of the inputs in the training set to construct $J_{X'} \Psi_{\mathrm{approx}} J_{X'}^\intercal \in \RR^{nC\times nC}$ and determine its $s$ dominant eigenvectors $U_s\in \RR^{nC \times s}$.
    \item Construct $P$ via $P=\Psi_{\mathrm{approx}} J_{X'}^\intercal U_s$ (we will in this work fix $Q$ to be always the identity).
    \item For the $X$ of interest (usually not contained in the training set), compute the predictive covariance $J_X P (P^\intercal \Psi P)^{-1} P^\intercal J_X^\intercal$. Note that we can really use the GGN $\Psi$ here, since $\Psi_{\mathrm{GGN}}=VV^\intercal$ can be written as an outer product which allows for a batch-wise computation (see Appendix \ref{sec:FI_GGN_and_H}). In our experiments, we will use the full GGN for the dimensionally reduced posterior for all projectors (subset and lowrank), and we use $X$ of size $n$ that are randomly drawn from the test data.
\end{enumerate}
The first three steps outline the construction of our projection $P$. These  are summarized in Algorithm \ref{alg:construction}. And the last step, is presented in Algorithm \ref{alg:prediction}. Algorithm \ref{alg:prediction} is used for methods based on lowrank approximations as well as subset methods. 

\textbf{Limitations.} As the subspace construction in Section \ref{sec:thm_in_practice} deviates due to $X'\neq X$ and $\Psi_{\mathrm{approx}}\neq \Psi$ from the setting in Theorem \ref{thm:OptSubmodel} it should only be considered as an approximation.  Computational bottlenecks of this algorithm are discussed in Appendix 
\ref{subsec:size_of_training_data_subset}. The main bottlenecks arise from $s$ via the need to store $P\in \RR^{p\times s}$, which is unavoidable in an approach with general affine relations, and from $nC$ via the need of computing a $nC\times nC$ matrix and performing a SVD on it. Due to the last restriction only a subset of ImageNet was considered in Section \ref{sec:Experiment}. A transition to randomized algorithms or using suitable submatrices might lift such a bottleneck but was not studied in this work.

\section{EXPERIMENTS}
    \label{sec:Experiment}

We use various \texttt{OpenML} \citep{OpenML2013, OpenMLPython2019} regression datasets as well as common classification tasks such as FashionMNIST \citep{Xiao2017FashionMNIST}, CIFAR10 \citep{Krizhevsky2009cifar} and a subset of ImageNet \citep{ImageNet2009}, called ImageNet10, that contains the ten classes listed in Appendix \ref{sec:AppendixExp}, to validate different subspace methods. In addition, we study the performance of the methods on out-of-distribution (OOD) data: We chose a corrupted version of CIFAR10 \citep{hendrycks2019cifarc} for this analysis and provide some additional results on corrupted MNIST \citep{Mu2019mnistc} in Appendix \ref{subsec:corrupted_mnist}. Details about the used NNs can be found in Appendix \ref{sec:AppendixExp}. We compare the following LAs:
\begin{itemize}
    \item The projectors $P_{\mathrm{subset-Magnitude}}, P_{\mathrm{subset-Diagonal}}$ and $P_{\mathrm{subset-SWAG}}$ (dashed lines) select a subset of parameters according to the magnitude of parameters, the diagonal GGN approximation or variances produced via SWAG. We use the term \textit{subset methods} for these approximations from \cite{Daxberger2021} and \cite{LaplaceRedux2021} because they select certain parameters to construct $P$. 
    \item $P_{\mathrm{lowrank-KFAC}}$ and $P_{\mathrm{lowrank-Diagonal}}$ (solid lines) are constructed as in Section \ref{sec:thm_in_practice} and use a KFAC or a diagonal GGN approximation to estimate $\Psi$ in Step 2 of Section \ref{sec:thm_in_practice}. We use the term \textit{low rank methods} for these since Theorem \ref{thm:OptSubmodel} bases its argument on a low rank approximation. A subset of the training data was used for the construction of these subspace models, cf. Appendix \ref{subsec:size_of_training_data_subset}.
    \item Moreover, where feasible, we show results for a $P_{\mathrm{lowrankopt-GGN}}$ (dashed-dotted line) that is exactly constructed as in Theorem \ref{thm:OptSubmodel} by using the \emph{test} data and $\Psi_{\mathrm{GGN}}$ for the construction of the subspace model. 
    This is the subspace model that minimizes \eqref{eq:minimization_problem}. $P_{\mathrm{None-Full}}=\mathbb{1}_p$ (dotted line) is the full LA without any dimensional reduction.
\end{itemize}
All experiments are done with five different seeds and the average of the results is plotted with markers. To enhance the visualization, the markers are in some plots linearly interpolated by lines whose type indicates the methods used to approximate the LA. We further show the sample standard error via shaded areas or error bars. All plots use the same colour coding.

\subsection{Faithfulness of Subspace Approximation}
    \label{sec:ExpSubspaceApprQuality}

First, we study whether the subspace methods provide a faithful approximation of the full LA. As the posterior predictive distribution \eqref{eq:pred_distribution} is the object of genuine interest for predictions via Bayesian NNs, we assess whether the distributions in \eqref{eq:LaplaceSubmodel} are similar to the distributions \eqref{eq:PredDistrGauss}, \eqref{eq:PredDistrCat} obtained by the full LA. We consider two cases: One in which the actual distributions \eqref{eq:PredDistrGauss}, \eqref{eq:PredDistrCat} are known and the other in which they are unknown.

\textbf{Evaluation of $P$ When Full Model Is Known.} We would like to find a subspace model \eqref{eq:reparametrization} whose LA closely aligns  with the full LA.
To assess this, we use the Kullback-Leibler divergence between \eqref{eq:PredDistrGauss} (for regression) or \eqref{eq:PredDistrCat} (for classification) and the according distribution from \eqref{eq:LaplaceSubmodel}:
\begin{align}
    \label{eq:KL}
    \KL(p(Y|X,\mathcal{D}) \| p_P(Y|X,\mathcal{D})) \,.
\end{align}
Since the probit approximation \eqref{eq:PredDistrCat} only holds for single inputs, we use for classification the ``block diagonal approximations'' $p(Y|X,\mathcal{D})\simeq\prod_{i=1}^N p(y_i|x_i, \Sigma_{x_i})$ and $p_P(Y|X,\mathcal{D})\simeq\prod_{i=1}^N p_P(y_i|x_i, \Sigma_{P,x_i})$.

\begin{table*}[t]
\begin{small}
\centering
\begin{tabular}{lllcccccc}
\toprule
metric & dataset & $s$ & lowrank-D. & lowrank-KFAC & subset-D. & subset-M. & subset-SWAG & $\hat{\theta}$ \\
\midrule
\multirow{6}{*}{NLL} & \multirow{2}{*}{FashionMNIST} & 120 & 0.289 & \textbf{0.248} & 0.292 & 0.292 & 0.292 & 0.293 \\
 &  & 450 & 0.282 & \textbf{0.249} & 0.290 & 0.288 & 0.288 & 0.293 \\
 & \multirow{2}{*}{CIFAR10} & 120 & 0.287 & \textbf{0.263} & 0.289 & 0.287 & 0.287 & 0.289 \\
 &  & 450 & 0.281 & \textbf{0.259} & 0.289 & 0.282 & 0.283 & 0.289 \\
 & \multirow{2}{*}{ImageNet10} & 50 & 0.274 & \textbf{0.270} & 0.277 & 0.276 & 0.277 & 0.277 \\
 &  & 100 & \textbf{0.271} & 0.277 & 0.277 & 0.276 & 0.276 & 0.277 \\
\midrule
\multirow{6}{*}{ECE} & \multirow{2}{*}{FashionMNIST} & 120 & 0.039 & \textbf{0.010} & 0.040 & 0.040 & 0.040 & 0.041 \\
 &  & 450 & 0.037 & \textbf{0.013} & 0.040 & 0.039 & 0.039 & 0.041 \\
 & \multirow{2}{*}{CIFAR10} & 120 & 0.037 & \textbf{0.029} & 0.038 & 0.037 & 0.037 & 0.038 \\
 &  & 450 & 0.035 & \textbf{0.027} & 0.038 & 0.036 & 0.036 & 0.038 \\
 & \multirow{2}{*}{ImageNet10} & 50 & \textbf{0.039} & 0.041 & 0.040 & 0.040 & 0.040 & 0.040 \\
 &  & 100 & \textbf{0.038} & 0.048 & 0.040 & 0.039 & 0.040 & 0.040 \\
\midrule
\multirow{6}{*}{Brier} & \multirow{2}{*}{FashionMNIST} & 120 & 0.135 & \textbf{0.128} & 0.135 & 0.135 & 0.135 & 0.135 \\
 &  & 450 & 0.134 & \textbf{0.129} & 0.135 & 0.135 & 0.135 & 0.135 \\
 & \multirow{2}{*}{CIFAR10} & 120 & 0.125 & \textbf{0.123} & 0.126 & 0.125 & 0.125 & 0.126 \\
 &  & 450 & 0.125 & \textbf{0.122} & 0.126 & 0.125 & 0.125 & 0.126 \\
 & \multirow{2}{*}{ImageNet10} & 50 & 0.129 & \textbf{0.127} & 0.129 & 0.129 & 0.129 & 0.129 \\
 &  & 100 & 0.128 & \textbf{0.127} & 0.129 & 0.129 & 0.129 & 0.129 \\
\bottomrule
\end{tabular}
\end{small}
\caption{Evaluation of the uncertainty quantification produced of various methods for subspace inference using the NLL, ECE metric and the Brier score for the subspace inference methods studied in his article. The last column shows the performance of the model with parameter $\hat{\theta}$ from \eqref{eq:MAP}.} 
\label{tab:eval_UQ}
\end{table*}

\textbf{Evaluation of $P$ When Full Model Is Unknown.}  The KL-divergence \eqref{eq:KL} quantifies the deviation of the subspace model to the full LA. As the latter is in general not feasible, we also consider the logarithm of the trace of the epistemic predictive covariances as a proxy. The predictive distributions for full and subspace models are essentially fixed by their covariances with identity between both distributions for the case $\Sigma_{P,X}=\Sigma_X$.
Heuristically, $\Sigma_{P,X}$ approximates better $\Sigma_X$ if it contains the dominant eigenspace, because in the directions of these eigenvectors the covariance has its largest contributions. Hence, we propose to use the following \emph{trace criterion}: If 
\begin{equation}
    \label{eq:TraceOrderMain}
    0 \leq \Tr \Sigma_{P_1,X} < \Tr \Sigma_{P_2,X} \leq \Tr \Sigma_{X}
\end{equation}
holds, $P_2$ is a better projector than $P_1$. A larger trace value indicates that the more dominant eigenspace is captured for a given $P$. The proof of $\Tr \Sigma_{P,X} \leq \Tr \Sigma_X$ and an extended explanation are given in Appendix \ref{sec:AppendixTraceCriterion}. In Appendix \ref{subsec:additional_remark_to_theorem} we show that the $P^\ast$ from Theorem \ref{thm:OptSubmodel} is also optimal under the trace criterion.
Empirically, we validate the trace criterion on cases in which we can compute \eqref{eq:PredDistrGauss}, \eqref{eq:PredDistrCat} exactly.

\textbf{Datasets With Feasible Baseline.} Figure \ref{fig:plots_with_feasible_optimal_solution} shows the KL-divergence \ref{eq:KL} and the logarithm of the trace criterion \eqref{eq:TraceOrderMain} for several datasets and the subspace models listed above for different $s$. All datasets in Figure \ref{fig:plots_with_feasible_optimal_solution} are such that the full LA and the baseline $P_{\mathrm{lowrankopt-GGN}}$ are feasible.
For ENB, Red Wine and Naval Propulsion the Jacobian is rank-deficient, so that only $s$ up to the rank of the Jacobian on the training data are considered. First, we observe for all datasets that the baseline $P_{\mathrm{lowrankopt-GGN}}$ (black dashed-dotted line) needs only a fraction of the total number of model parameters, listed in Table \ref{tab:NumParamsModel}, to reach a small KL-divergence with the full model. Hence, subspace models can indeed be suitable for quantifying the uncertainty provided by a LA. However, $P_{\mathrm{lowrankopt-GGN}}$ is usually unknown such that the ideal approximation isn't available. 
Comparing the feasible approximations in Figure \ref{fig:plots_with_feasible_optimal_solution} we find that low rank approximations demonstrate superior approximations compared to subset methods in general. In particular, the performance of $P_{\mathrm{lowrank-Diagonal}}$ is stricly superior to $P_{\mathrm{subset-Diagonal}}$. Only for ENB some subset methods obtain a slightly better performance. We speculate that the different performance on this dataset is related to the number of `dead parameters' (with gradient almost zero), whose complement provides a natural subset to be selected. Indeed, ENB has the most number of dead parameters with 93\%. More details on this investigation are given in Appendix \ref{sec:AppendixDeadParam}.

\begin{table*}
\centering
\begin{small}
\begin{tabular}{llcccccc}
\toprule
metric & corruption & lowrank-D. & lowrank-KFAC & subset-D. & subset-M. & subset-SWAG & $\hat{\theta}$ \\
\midrule
\multirow{4}{*}{NLL} & brightness & 0.340 & \textbf{0.313} & 0.344 & 0.340 & 0.340 & 0.344 \\
 & elastic transform & 0.522 & \textbf{0.490} & 0.527 & 0.522 & 0.522 & 0.527 \\
 & gaussian blur & 0.353 & \textbf{0.341} & 0.354 & 0.352 & 0.353 & 0.354 \\
 & impulse noise & 1.822 & \textbf{1.660} & 1.861 & 1.836 & 1.831 & 1.861 \\
\midrule
\multirow{4}{*}{ECE} & brightness & 0.044 & \textbf{0.035} & 0.046 & 0.044 & 0.045 & 0.046 \\
 & elastic transform & 0.063 & \textbf{0.050} & 0.065 & 0.064 & 0.064 & 0.065 \\
 & gaussian blur & 0.031 & \textbf{0.022} & 0.032 & 0.031 & 0.031 & 0.032 \\
 & impulse noise & 0.227 & \textbf{0.203} & 0.233 & 0.229 & 0.229 & 0.233 \\
\midrule
\multirow{4}{*}{Brier} & brightness & 0.149 & \textbf{0.145} & 0.149 & 0.149 & 0.149 & 0.149 \\
 & elastic transform & 0.230 & \textbf{0.225} & 0.231 & 0.230 & 0.230 & 0.231 \\
 & gaussian blur & 0.165 & \textbf{0.164} & 0.166 & 0.165 & 0.165 & 0.166 \\
 & impulse noise & 0.612 & \textbf{0.595} & 0.616 & 0.614 & 0.613 & 0.616 \\
\bottomrule
\end{tabular}
\end{small}
\caption{Performance of various subspace inference methods on corrupted versions of CIFAR10 for $s=200$. The method ``subset-Diagonal'' was omitted here as it yielded results indistinguishable from the model at the MAP $\hat{\theta}$ due to a negligible uncertainty. }
\label{tab:OOD}
\end{table*}

\begin{figure}[h]
    \centering
    \begin{subfigure}[b]{0.195\textwidth}
        \includegraphics[width=\textwidth]{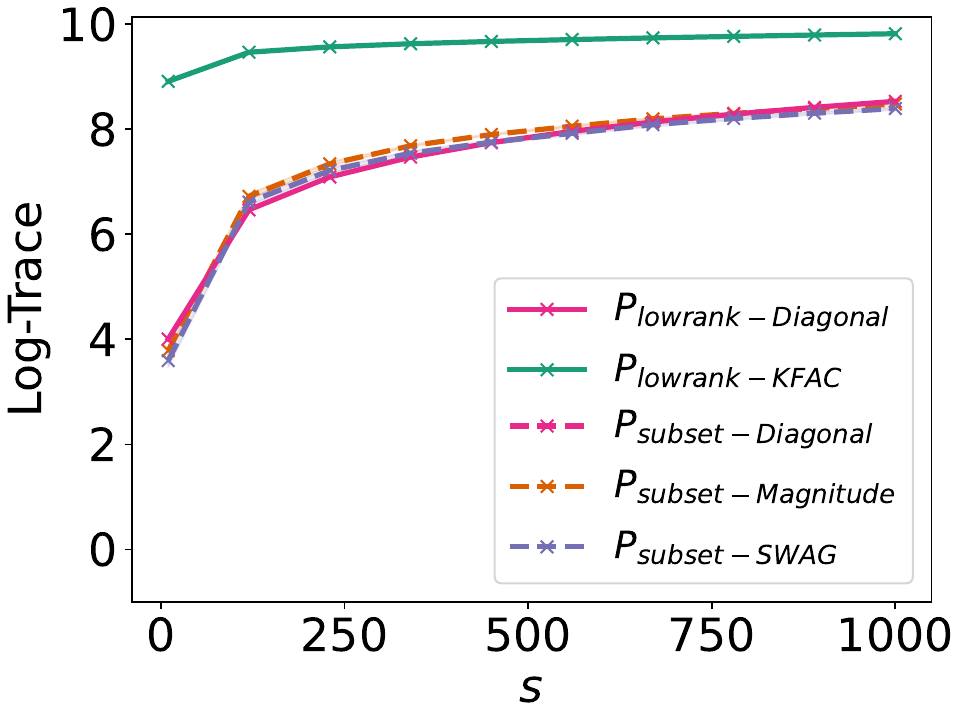}
        \caption{CIFAR10}
        \label{subfig:logtrace_cifar10_resnet9}
    \end{subfigure}
    \hfill
    \begin{subfigure}[b]{0.195\textwidth}
        \includegraphics[width=\textwidth]{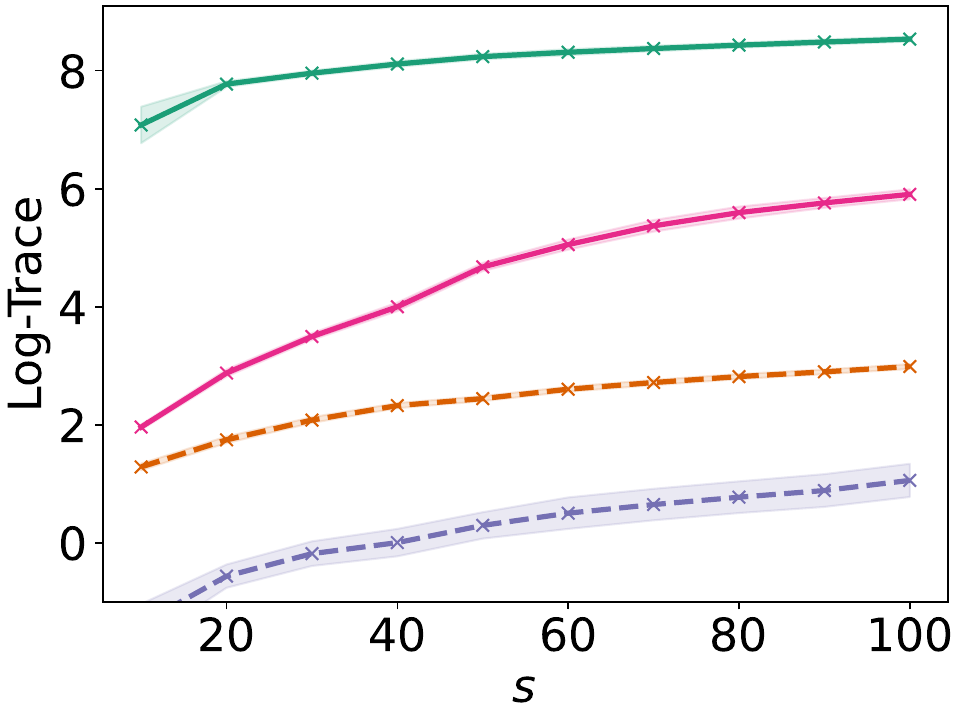}
        \caption{ImageNet10}
        \label{subfig:logtrace_imagenet10_resnet18}
    \end{subfigure}
    \caption{Evaluation the trace criterion \eqref{eq:TraceOrderMain} for CIFAR10 and ImageNet10 and different choices of $P$. The colour coding and linestyles are chosen as in Figure \ref{fig:plots_with_feasible_optimal_solution}.}
    \label{fig:trace_cifarImage_dimensional}
\end{figure}
\textbf{Datasets Without Feasible Baseline.} A comparison between the first and second row of Figure \ref{fig:plots_with_feasible_optimal_solution} demonstrates that the log-trace retains the ordering of the KL-divergence. Differences are rare, and if they occur, they are small and usually contained in the sample standard deviation. Thus, we use the log-trace to assess the approximation quality of the full LA for ResNet9 and ResNet18 applied on CIFAR10 and ImageNet10, respectively. For both networks the number of parameters is so large that the computation of the full LA is infeasible. Both experiments indicate again a better performance of the low-rank based methods, especially of $P_{\mathrm{lowrank-KFAC}}$: As revealed by the trace criterion in Figure \ref{fig:trace_cifarImage_dimensional} the eigenspace of $\Sigma_{P,X}$ spanned by the selected eigenvectors in parameter space is orders of magnitude higher for $P=P_{\mathrm{lowrank-KFAC}}$ than for all other considered methods.

\subsection{Uncertainty Quantification}
    \label{sec:ExpUncertainty}

To assess the uncertainty quantification of the studied methods, we study the common ``negative log-likelihood'' (NLL) criterion, which is the average of the values $- \ln p_{P}(y^*|x^*,\mathcal{D})$ for all pairs $(x^*,y^*)$ from the test data. For classification, we also include the expected calibration error (ECE), which measures the discrepancy between predicted confidence and empirical accuracy across probability bins, and the Brier score with the convention outlined in Appendix \ref{sec:addtional_results_on_UQ}.

For the regression problems studied in this article the NLL values are essentially identical within their standard errors across methods, cf. Appendix \ref{sec:addtional_results_on_UQ}. Results for the classification datasets considered in this work are contained in Table \ref{tab:eval_UQ} and also compared to the performance of the model at the MAP \eqref{eq:MAP}.  All three metrics, NLL, ECE and Brier score, indicate a better performance of the lowrank methods. 

\subsection{Performance on OOD Data}
    \label{sec:ExpOOD}
OOD data analysis reveals how well different BNNs approximation methods maintain meaningful uncertainty quantification when encountering inputs that differ from the training distribution. We trained a NN on the original CIFAR10 \citep{Krizhevsky2009cifar} dataset and applied it to corrupted versions from \cite{hendrycks2019cifarc}. The results are shown in Table \ref{tab:OOD} and show a better performance of lowrank methods, especially of $P_{\mathrm{lowrank-KFAC}}$. In Appendix \ref{subsec:corrupted_mnist} we also provide results for corrupted versions of MNIST from \cite{Mu2019mnistc}.

\section{CONCLUSION}

In this work we propose to look at subspace Laplace approximations of Bayesian neural networks through the lens of their predictive covariances. Using low rank techniques this approach allows us to derive an explicit relation for a subspace model that would be optimal given a set of datapoints. This subspace model can be used, where feasible, as a baseline for subspace inference. 

We propose a construction of a subspace model that is scalable but still conceptually based on our theoretical insights. Starting with a preliminary estimate of the predictive covariance we can use the derived relation to get an estimate of relevant directions in parameter space. In a second step these directions can be combined with the generalized Gauss-Newton (GGN) Laplace approximation in such a way that they do not require the computation of the, typically infeasible, GGN matrix.

Studying the performance of approximating the full Laplace approximation and of the resulting uncertainty quantification,  we find that this construction outperforms existing methods with respect to various metrics and is in some cases even close to the baseline constructed via the test data. Moreover, we observe that a well chosen method for subspace construction can often have more impact on the performance than an increase in the subspace dimension.

\newpage

\newpage
\subsubsection*{Acknowledgements}
The research leading to this work was funded by the “Metrology for Artificial Intelligence in Medicine” (M4AIM) program in the frame of the QI-Digital initiative.

The authors acknowledge with gratitude the contributions of their colleague, Clemens Elster, who took part in discussions leading to this article but passed away before its completion.
We will deeply miss his scientific contributions and the discussions with him as well as his personal support and advice.

\bibliographystyle{apalike} 
\bibliography{OptProj}

\section*{Checklist}
\begin{enumerate}
  \item For all models and algorithms presented, check if you include:
  \begin{enumerate}
    \item A clear description of the mathematical setting, assumptions, algorithm, and model. 
    \textit{Yes, cf. Section \ref{sec:terminlogy_and_background}, \ref{sec:LA_for_subspace_models} and Appendix \ref{sec:AppendixExp}.}
    \item An analysis of the properties and complexity (time, space, sample size) of any algorithm. 
    \textit{Yes, we included a section on computational costs, cf. Section \ref{Sec:Computational_Costs} and other computational aspects are discussed in Section \ref{sec:thm_in_practice} and \ref{subsec:size_of_training_data_subset}.}
    \item (Optional) Anonymized source code, with specification of all dependencies, including external libraries. 
    \textit{Yes, the source code is included in the supplementary material.}
  \end{enumerate}

  \item For any theoretical claim, check if you include:
  \begin{enumerate}
    \item Statements of the full set of assumptions of all theoretical results. \textit{Yes, cf. Section \ref{sec:terminlogy_and_background} - \ref{sec:LA_for_subspace_models}.}
    \item Complete proofs of all theoretical results. \textit{Yes, cf. Appendix \ref{sec:AppendixProofSubmodel}.}
    \item Clear explanations of any assumptions. \textit{Yes, cf. Section \ref{sec:terminlogy_and_background} - \ref{sec:LA_for_subspace_models}.}
  \end{enumerate}

  \item For all figures and tables that present empirical results, check if you include:
  \begin{enumerate}
    \item The code, data, and instructions needed to reproduce the main experimental results (either in the supplemental material or as a URL). \textit{Yes, source code for this will be part of the supplementary material.}
    \item All the training details (e.g., data splits, hyperparameters, how they were chosen). \textit{Yes, cf. Section \ref{sec:AppendixExp} and the source code in the supplementary material.}
    \item A clear definition of the specific measure or statistics and error bars (e.g., with respect to the random seed after running experiments multiple times). \textit{Yes, cf. Section \ref{sec:Experiment}.}
    \item A description of the computing infrastructure used. (e.g., type of GPUs, internal cluster, or cloud provider). \textit{Yes, cf. Section \ref{Sec:Computational_Costs}.}
  \end{enumerate}

  \item If you are using existing assets (e.g., code, data, models) or curating/releasing new assets, check if you include:
  \begin{enumerate}
    \item Citations of the creator If your work uses existing assets. \textit{Yes.}
    \item The license information of the assets, if applicable. \textit{Not Applicable.}
    \item New assets either in the supplemental material or as a URL, if applicable. \textit{Not Applicable.}
    \item Information about consent from data providers/curators. \textit{Not Applicable.}
    \item Discussion of sensible content if applicable, e.g., personally identifiable information or offensive content. \textit{Not Applicable.}
  \end{enumerate}

  \item If you used crowdsourcing or conducted research with human subjects, check if you include:
  \begin{enumerate}
    \item The full text of instructions given to participants and screenshots. \textit{Not Applicable.}
    \item Descriptions of potential participant risks, with links to Institutional Review Board (IRB) approvals if applicable. \textit{Not Applicable.}
    \item The estimated hourly wage paid to participants and the total amount spent on participant compensation. \textit{Not Applicable.}
  \end{enumerate}

\end{enumerate}

\newpage
\appendix
\thispagestyle{empty}

\onecolumn
\aistatstitle{
Low Rank Based Subspace Inference for the Laplace Approximation  of Bayesian Neural Networks: 
Supplementary Materials}

\section{EXPERIMENTS}
    \label{sec:AppendixExp}

\subsection{Architectures and Training}
\label{subsec:architectures_and_training}

The code of all experiments was developed in \texttt{PyTorch} \cite{PyTorch2019}. The subset methods were implemented using the \texttt{laplace} library \cite{LaplaceRedux2021}. The regression datasets were obtained from \texttt{OpenML} \cite{OpenML2013, OpenMLPython2019}. 

For the regression problems the expected mean $E_{y\sim p(y|x,\theta)}[y]$ is estimated by multi-layer perceptrons (MLPs) with ReLU activation functions and two hidden layers that include 128 units in each layer. The bias term is used as well. A full batch training is performed in each epoch, i.e. the batch size equals the size of the training set. The input data is normalized with respect to its mean value and its standard deviation. For Naval Propulsion the output data is also normalized in the same manner. Further details of the architecture and training procedure are given in Table \ref{tab:MLPTrainingArchitecture} and its caption.

\begin{table}[H]
\centering
 \caption{The architecture of all networks trained on regression datasets is an MLP with two hidden layers with 128 neurons each. After each hidden layer a ReLU is used. The models are trained on $n_{\mathrm{epoch}}$ epochs with learning rate $\alpha = \alpha_{\text{init}} \frac{b}{256}$ ($b$ is the batch size which here equals the number of training data points). For the fraction of epochs `warm up' the learning rate is linearly increased to $\alpha$ and starting from the fraction of epochs `decay' the learning rate is linearly decreased.}
 \label{tab:MLPTrainingArchitecture}
 \begin{tabular}{l c c c c c c c} 
 \toprule
 dataset &  $\alpha_{\text{init}}$ & $n_{\mathrm{epoch}}$ & warm up/ decay \\ 
 \midrule
  ENB & $0.004$ & 1500 &  (0.1/0.5)  \\
  Red Wine &  $0.0004$ & 300 &  (0.3/0.3) \\
  California & $0.0004$ & 100 &  (0.3/0.5)  \\
  Naval Propulsion & $0.0004$ & 100 & (0.3/0.5)  \\
 \bottomrule
 \end{tabular}
\end{table}

The architecture used for corrupted MNIST \cite{Mu2019mnistc}, trained on MNIST \cite{LeCun1998mnist}, and for FashionMNIST \cite{Xiao2017FashionMNIST} is a small hand-designed convolutional network (CNN) with 2d-convolutions, max-pooling, batch normalization and ReLU activation function. Before the softmax function a linear layer is applied. The exact architecture can be found in the code linked to this article. To train the CNNs, the input data is mapped to the interval $\left[ 0,1\right]$ and then normalized with ``mean'' and ``standard deviation'' $0.5$. Additional details are given in Table \ref{tab:CNNTrainingArchitecture} and its caption.   
\begin{table}[H]
\centering
 \caption{The models are trained on $n_{\mathrm{epoch}}$ epochs with learning rate $\alpha$ and batch size $b=256$. For the fraction of epochs `warm up' the learning rate is linearely increased to $\alpha$ and starting from the fraction of epochs `decay' the learning rate is linearly decreased.}
 \label{tab:CNNTrainingArchitecture}
 \begin{tabular}{l c c c c c c c} 
 \toprule
 dataset & $\alpha$ & $n_{\mathrm{epoch}}$ & warm up/ decay \\
 \midrule
  MNIST &  $ 0.004 $ & 20 & (0.1/0.3) \\
 FashionMNIST & $0.002$ & 40 & (0.1/0.5)  \\
 \bottomrule
 \end{tabular}
\end{table}

CIFAR10 \cite{Krizhevsky2009cifar} and ImageNet10 \cite{ImageNet2009} are classified by ResNet architectures \cite{He2015resnet}. CIFAR10 is trained from scratch with ResNet9, but for ImageNet10 the pretrained ResNet18 from \texttt{Pytorch} with weights \texttt{IMAGENET1K\_V1} is used as initialization, where the last layer is replaced by a linear layer with 10 classes. During training the images are normalized with respect to their channelwise pixel mean and pixel standard deviation. In addition, random flips are applied on both datasets. For CIFAR10 greyscale and random crops are used, too. More information is provided in Table \ref{tab:ResNetTrainingArchitecture} and its caption.
\begin{table}[H]
\centering
 \caption{The models are trained on $n_{\mathrm{epoch}}$ epochs with learning rate $\alpha$ and batch size $b=256$. For the fraction of epochs `warm up' the learning rate is linearly increased to $\alpha$ and starting from the fraction of epochs `decay' the learning rate is linearly decreased.}
 \label{tab:ResNetTrainingArchitecture}
 \begin{tabular}{l c c c c c c c} 
  \toprule
 dataset & $\alpha$ & $n_{\mathrm{epoch}}$ &  warm up/ decay \\ 
 \midrule
 CIFAR10 &  $\ 0.004$ & 100 & (0.1/0.7) \\
 ImageNet10 & 0.0004 & 10 & (0.5/0.5)  \\
 \bottomrule
 \end{tabular}
\end{table}

To evaluate the quality of the dimensional reduction, the size of the different models that are used for predictions are required. Table \ref{tab:NumParamsModel} lists the number of model parameters. The number of model parameters of the MLP and CNN has been chosen large enough such that the prediction performance is satisfying, but is also limited to be able to compute $P_{\mathrm{lowrankopt-GGN}}$ and the full LA.

\begin{table}[H]
\centering
\caption{Number of trainable parameters $p$ for each model trained on the corresponding dataset, and relative values for $s=100$ and $s=500$.}
\label{tab:NumParamsModel}
\begin{tabular}{l c c c c} 
\toprule
dataset & model & $p$ & $s=100$ & $s=500$ \\
\midrule
ENB & MLP & 17,922 & 0.56\% & 2.79\% \\
Red Wine & MLP & 18,177 & 0.55\% & 2.75\% \\
California & MLP & 17,793 & 0.56\% & 2.81\% \\
Naval & MLP & 18,690 & 0.53\% & 2.68\% \\
MNIST & CNN & 12,458 & 0.80\% & 4.01\% \\
FashionMNIST & CNN & 12,458 & 0.80\% & 4.01\% \\
CIFAR10 & ResNet9 & 668,234 & 0.01\% & 0.07\% \\
ImageNet10 & ResNet18 & 11,181,642 & $<0.01\%$ & -- \\
\bottomrule
\end{tabular}
\end{table}

\subsection{ImageNet10 Classes}

ImageNet10 is a proper subset of ImageNet \cite{ImageNet2009}. The selection of classes used for ImageNet10 is given in Table \ref{tab:ImagNet10}.

\begin{table}[H]
\centering
 \caption{These ten labels are selected from ImageNet to construct ImageNet10.}
 \label{tab:ImagNet10}
 \begin{tabular}{l c c c c c c c} 
 \toprule
 label & motifs \\ [0.5ex] 
 \midrule
 \multirow{2}{*}{n01968897} & pearly nautilus, nautilus, \\
           & chambered nautilus \\
 \multirow{2}{*}{n01770081} & harvestman, daddy longlegs, \\
           & Phalangium opilio \\
 \multirow{2}{*}{n01496331} & crampfish, numbfish,\\
           &  torpedo, electric ray \\
 \multirow{2}{*}{n01537544} & indigo bunting, indigo finch, \\
           & indigo bird, Passerina cyanea \\ 
 n01818515 & macaw \\
 n02011460 & bittern \\
 n01847000 & drake \\ 
 n01687978 & agama \\
 n01740131 & night snake, Hypsiglena torquata \\ 
 n01491361 & tiger shark, Galeocerdo cuvieri \\
 \bottomrule
 \end{tabular}
\end{table}

\subsection{Extended Discussion on the Computational Bottlenecks of our Method}
\label{subsec:size_of_training_data_subset}
For the low rank methods we construct $P$ as described in Section \ref{sec:thm_in_practice} as
\begin{align}
    \label{eq:P_from_training_data}
    P = \Psi_{\mathrm{approx}} J_{X'}^\intercal U_s\,.
\end{align}
All three objects in \eqref{eq:P_from_training_data}, $\Psi_{\mathrm{approx}}$, $J_{X'}$ and $U_s$, are constructed from the training data. While we can take the full training data for the construction of $\Psi_{\mathrm{approx}}$, both, $J_{X'}$ and $U_s$, are constructed from a subset $X'$ of size $n$ of the training data. Ideally, we would of course like to take $X'$ to be full training data. However, doing so presents us with two difficulties:
\begin{enumerate}
    \item The object $U_s$ needs to be computable. 
    \item The computation of the product $J_{X'}^\intercal U_s$ needs to be feasible.
\end{enumerate}
Obstacle 2 is rather straightforward to circumvent as we can compute the matrix product via mini-batches from the training data. It turns out that Obstacle 1 sets the actual limit on the subset of training data as we compute $U_s$ via an SVD of the object $J_{X'} \Psi_{\mathrm{approx}} J_{X'}^\intercal\in \RR^{nC\times nC}$. For Red Wine and Naval we pick $n=1000$. For ENB, the training set has only $514$ data points which is why the entire training dataset is considered. For California we can analyze the subspace models until $s=5000$ as the Jacobian of the model has full rank. To allow for this analysis we choose $n=5000$. For the classification problems, i.e. MNIST, FashionMNIST, CIFAR10 and ImageNet10, we pick $n=100$ so that we have $nC=1000$ for these datasets. This choice allows for a substantially faster computation of $P$. Our methods demand the explicit storage of $P$, which limits the the maximum value of $s$ from a computational perspective.

For our experiments, we choose to use the same size $n$ for $X'$ (from the training data) and $X$ (from the test data). This is done only for convenience and is by no means a necessary assumption for the workflow from Section \ref{sec:thm_in_practice}.

The explicit computation and storage of the projector is a limitation of methods based on linear affine transformation. For large neural networks, this is a bottleneck, because the memory complexity grows with at least $\mathcal{O}\left(s(p+s)\right)$. The subset methods of \cite{Daxberger2021}, by contrast, use heuristics to avoid the explicit construction of $P$. For example, they can directly construct a subspace by selecting the parameters with the highest weights. In this case, the memory complexity is only $\mathcal{O}\left(s^2\right)$. Thus, in such cases, subset methods can be applied, whereas lowrank methods cannot. The order of memory complexity $s(p+s)$ for all datasets used in this work together with typical order of $s$ used here is listed in Table \ref{tab:MemoryLoad}.

\begin{table}[H]
\centering
\caption{Order of the memory load $s(p+s)$ required for the application of lowrank based subspace methods for all datasets and typical orders of $s$ used in this article.}
\label{tab:MemoryLoad}
\begin{small}
\begin{tabular}{l l c c c c c c} 
\toprule
dataset & $p$ & \multicolumn{6}{c}{$s(s+p)$} \\
\cline{3-8} & &
 $s=100$ & $s=200$ & $s=300$ & $s=400$ & $s=500$ & $s=1000$ \\
 \midrule
 ENB & 17922 & $1.802 \cdot 10^{6}$ & $3.624 \cdot 10^{6}$ & $5.467 \cdot 10^{6}$ & $7.329 \cdot 10^{6}$ & $9.211 \cdot 10^{6}$ & $1.892 \cdot 10^{7}$ \\
 Red Wine & 18,177 & $1.828 \cdot 10^{6}$ & $3.675 \cdot 10^{6}$ & $5.543 \cdot 10^{6}$ & $7.431 \cdot 10^{6}$ & $9.338 \cdot 10^{6}$ & $1.918 \cdot 10^{7}$ \\
California & 17,793 & $1.789 \cdot 10^{6}$ & $3.599 \cdot 10^{6}$ & $5.428 \cdot 10^{6}$ & $7.277 \cdot 10^{6}$ & $9.146 \cdot 10^{6}$ & $1.879 \cdot 10^{7}$ \\
Naval & 18,690 & $1.879 \cdot 10^{6}$ & $3.778 \cdot 10^{6}$ & $5.697 \cdot 10^{6}$ & $7.636 \cdot 10^{6}$ & $9.595 \cdot 10^{6}$ & $1.969 \cdot 10^{7}$ \\
MNIST & 12,458 & $1.256 \cdot 10^{6}$ & $2.532 \cdot 10^{6}$ & $3.827 \cdot 10^{6}$ & $5.143 \cdot 10^{6}$ & $6.479 \cdot 10^{6}$ & $1.346 \cdot 10^{7}$ \\
FashionMNIST & 12,458 & $1.256 \cdot 10^{6}$ & $2.532 \cdot 10^{6}$ & $3.827 \cdot 10^{6}$ & $5.143 \cdot 10^{6}$ & $6.479 \cdot 10^{6}$ & $1.346 \cdot 10^{7}$ \\
CIFAR10 & 668,234 & $6.683 \cdot 10^{7}$ & $1.337 \cdot 10^{8}$ & $2.006 \cdot 10^{8}$ & $2.675 \cdot 10^{8}$ & $3.344 \cdot 10^{8}$ & $6.692 \cdot 10^{8}$ \\
ImageNet10 & 11,181,642  & $1.118 \cdot 10^{9}$ & $2.236 \cdot 10^{9}$ & $3.355 \cdot 10^{9}$ & $4.473 \cdot 10^{9}$ & $5.591 \cdot 10^{9}$ & $1.118 \cdot 10^{10}$ \\
\bottomrule
\end{tabular}
\end{small}
\end{table}


\subsection{Prior Distribution}

For all problems the prior distribution of the full parameter $\theta\in \RR^p$ is chosen to be a centred Gaussian prior $p(\theta)=\mathcal{N}(\theta|0,\lambda^{-1})$. 
For each dataset one fixed prior precision is used. The precisions were obtained by applying the marginal likelihood optimization for the KFAC LA from the \texttt{laplace} software library \cite{LaplaceRedux2021}. For all the datasets but ImageNet10 we averaged the prior precision over five seeds. Due to computational resources and the fact that the prior precision does not vary much across all sets, we estimated the prior precision for ImageNet10 for seed 1 only. The corresponding values are listed in Table \ref{tab:prior_precision}.

\begin{table}[H]
    \centering
    \caption{Used prior precision for the experiments in this article.}
    \label{tab:prior_precision}
    \begin{tabular}{l c}
        \toprule
         dataset & prior precision $\lambda$  \\
         \midrule
         ENB & 2.0 \\
         Red Wine & 6.5 \\
         California & 8.8 \\
         Naval Propulsion & 4.3 \\
          MNIST & 18.9 \\
         FashionMNIST & 32.7 \\
         CIFAR10 & 68.1 \\
         ImageNet10 & 13.0 \\
         \bottomrule
    \end{tabular}
\end{table}

\subsection{Computation Costs}
\label{Sec:Computational_Costs}
The main factor for the computational costs of all subspace methods depends on the approximation scheme of the LA. In general, to compute $P$ for the subset or low rank methods needs similar time. The only exception is $P_{\mathrm{subset-magnitude}}$ because it selects the weights depending on their magnitude which is easy to extract for trained NNs. The time to compute the projector using an Nvidia A100 graphics card is given in Table \ref{tab:ComputeTime} for ImageNet. For the other experiments the computational time is much less, e.g. FashionMNIST needs only seconds for any projector.

\begin{table}[H]
    \centering
     \caption{Computation costs of the projector for ImageNet10. The time to compute $P$ for the lowrank methods (lr) or the subset (sb) methods depends on the low rank approximation or selection methods.}
 \label{tab:ComputeTime}
    \begin{tabular}{ccccc}
    \toprule
    lowrank-KFAC & lowrank-Diagonal & subset-Diagonal & subset-Magnitude & subset-SWAG \\
    \midrule
    $17$min & $28$min & $28$min & $<1$min &  $20$min \\
    \bottomrule
    \end{tabular}
\end{table}

\section{EXISTENCE OF AN OPTIMAL SUBSPACE MODEL FOR THE
LAPLACE APPROXIMATION}
    \label{sec:AppendixProofSubmodel}

In the following we prove Theorem \ref{thm:OptSubmodel} which we restate here.

\begin{theorem*}
Consider the problem \eqref{eq:minimization_problem} with $s\leq s_{\mathrm{max}}=\min(nC,p)$.
Suppose that $J_X\in \RR^{nC\times p}$ has full rank. For any invertible $Q\in \RR^{s\times s}$ the matrix
\begin{align}
    \label{eq:Appendix_optimal_P}
    P^* = \Psi J_X^\intercal U_s Q
\end{align}
solves \eqref{eq:minimization_problem}.
 For any such $P^*$ we have
\begin{align}
    \label{eq:Appendix_low_rank_laplace_optimal_Sigma}
    \Sigma_{P^*,X} = U_s \Lambda_s U_s^\intercal.
\end{align}

If the diagonal elements of $\Lambda$ satisfy $\sigma_s > \sigma_{s+1}$ then the solution of the minimization problem \eqref{eq:minimization_problem} is unique up to reparametrization: any minimizer of \eqref{eq:minimization_problem} is of the form \eqref{eq:Appendix_optimal_P}.
\end{theorem*}

\begin{proof}

    For the sake of this proof it will be convenient to work with a more general family of $P\in \mathbb{R}^{p \times s}$ than \eqref{eq:Appendix_optimal_P}:
    \begin{align}
        \label{eq:P_plus_Kernel}
        P = \left( \Psi J_X^\intercal U_s + K \right) Q\,.
    \end{align}
    Here $Q\in \mathbb{R}^{s \times s}$ is an arbitrary non-singular matrix as in the statement of the theorem and $K\in \mathbb{R}^{p\times s}$ is an arbitrary matrix that satisfies $J_X K=0$. 
    The proof splits in three parts:
    \begin{enumerate}
        \item We show that for any $P$ of the form \eqref{eq:P_plus_Kernel} we have
        \begin{align}
            \label{eq:Sigma_P_with_kernel}
            \Sigma_{P,X} = U_s \Lambda_s U_s^\intercal - U_s K^\intercal A K  U_s^\intercal
        \end{align}
        where $A=(\Psi + K \Lambda_s^{-1} K^\intercal)^{-1}\in \mathbb{R}^{s \times s}$ is a positive definite matrix. For the special case $K=0$, which corresponds to \eqref{eq:Appendix_optimal_P}, we obtain in particular the identity \eqref{eq:Appendix_low_rank_laplace_optimal_Sigma},
        which minimizes indeed \eqref{eq:minimization_problem} due to the Eckart-Young-Mirsky theorem.
        \item Assuming $\sigma_s > \sigma_{s+1}$ we show that any $P\in \mathbb{R}^{p\times s}$ that solves \eqref{eq:minimization_problem} must be of the form \eqref{eq:P_plus_Kernel}. 
        \item Still assuming $\sigma_{s} > \sigma_{s+1}$ we show that for $P$ of form \eqref{eq:P_plus_Kernel} $\Sigma_{P,X}$ can only minimize \eqref{eq:minimization_problem} if $K=0$. Together with part 1-2 the claim follows.
    \end{enumerate}

    For the first part of the proof let us now assume that $P$ is as in \eqref{eq:P_plus_Kernel}. We then have
    \begin{align}
        \label{eq:optimal_Sigma_P_outer_part}
        J_X P = \overbrace{J_X \Psi J_X^\intercal}^{=\Sigma_X} U_s Q + \overbrace{J_X K }^{=0}Q = \Sigma_X U_s Q = U_s\Lambda_s Q
    \end{align}
    and further
    \begin{align*}
        P^\intercal \Psi^{-1} P & = Q^\intercal ( U_s^\intercal J_X \Psi+  K^\intercal) \Psi^{-1} (\Psi J_X^\intercal U_s  + K)Q  \\
        & = Q^\intercal (U_s^\intercal J_X \Psi J_X^\intercal U_s  + K^\intercal \Psi^{-1} K)Q
    \end{align*}
    where we used in the last equation that $J_X K = 0$ and $K^\intercal J_X^\intercal = 0$. Using $J_X \Psi J_X^\intercal = \Sigma_X$ and $U_s^\intercal \Sigma_X U_s = \Lambda_s$ this can be further simplified to 
    \begin{align}
        \label{eq:optimal_Sigma_P_inner_part}
        P^\intercal \Psi^{-1} P = Q^\intercal (\Lambda_s + K^\intercal \Psi^{-1} K) Q\,.
    \end{align}
   Applying the Woodbury matrix identity (Wb) we obtain with \eqref{eq:optimal_Sigma_P_outer_part} and \eqref{eq:optimal_Sigma_P_inner_part} 
    \begin{align*}
        \Sigma_{P,X} &= J_X P (P^\intercal \Psi^{-1} P)^{-1} P^\intercal J_X^\intercal \\
        &= U_s\Lambda_s Q \left(Q^\intercal (\Lambda_s + K^\intercal \Psi^{-1} K) Q\right)^{-1} Q^\intercal \Lambda_s U_s^\intercal
        \\
        &= U_s \Lambda_s (\Lambda_s + K^\intercal \Psi^{-1} K)^{-1} \Lambda_s U_s^\intercal \\
        &\overset{\mbox{\tiny Wb}}{=}U_s \Lambda_s (\Lambda_s^{-1} - \Lambda_{s}^{-1} K^\intercal (\Psi + K \Lambda_{s}^{-1} K^\intercal)^{-1} K \Lambda_{s}^{-1}) \Lambda_s U_s^\intercal \\
        & = U_s \Lambda_s U_s - U_s K^\intercal (\Psi + K \Lambda_s^{-1} K^\intercal)^{-1} K U_s^\intercal \\
        &=   U_s \Lambda_s U_s^\intercal - U_s K^\intercal A K U_s^\intercal \,,
    \end{align*}
    which concludes part 1 of the proof.
    
    For part 2-3 of the proof we use that, according to the Eckart-Young-Mirsky theorem, the minimizer $U_s \Lambda_s U_s^\intercal$ of \eqref{eq:minimization_problem} is \emph{unique} if $\sigma_s > \sigma_{s+1}$.

    Considering part 2 of the proof, this uniqueness implies that for any $P$ that solves \eqref{eq:minimization_problem} the projected covariance matrix has to be of the form $\Sigma_{P,X}=U_s \Lambda_s U_s^\intercal$ and in particular
    \begin{align*}
       (\Sigma_{P,X} - \Sigma_X) U_s = U_s \Lambda_s - U_s \Lambda_s = 0 \,.
    \end{align*}
    Plugging the definition of $\Sigma_{P,X}$ and $\Sigma_X$ in this identity we obtain
    \begin{align*}
        J_X \left(P(P^\intercal \Psi^{-1} P)^{-1} P^\intercal J_X^\intercal U_s - \Psi J_X^\intercal U_s\right) = 0\,,
    \end{align*}
    so that we know that the matrix
    \begin{align*}
        K =P(P^\intercal \Psi^{-1} P)^{-1} P^\intercal J_X^\intercal U_s - \Psi J_X^\intercal U_s =: P B - \Psi J_X^\intercal U_s
    \end{align*}
    satisfies $J_X K=0$. Note that $B$ is non-singular because $\det(U_s^\intercal J_X P) \det(B) = \det(U_s^\intercal J_X P B) = \det(U_s^\intercal \Sigma_{P,X} U_s) = \det(\Lambda_s) \neq 0$. The choice $Q:=B^{-1}$ therefore concludes part 2 of the proof.

    We are left with part 3 of the proof. Assume therefore that $P$ is of the form \eqref{eq:P_plus_Kernel} \emph{and} that $\Sigma_{P,X}$ minimizes \eqref{eq:minimization_problem}. We have by part 1 of the proof
    \begin{align*}
        \Sigma_{P,X} = U_s \Lambda_s U_s^\intercal - U_s K^\intercal A K U_s^\intercal\,,
    \end{align*}
    where $A$ is positive definite. Now, under the assumption that $\sigma_s > \sigma_{s+1}$ we can, once more, use the uniqueness part of the Eckart-Young-Mirsky theorem to conclude that $\Sigma_{P,X} =U_s \Lambda_s U_s^\intercal$ which implies that
    \begin{align}
         U_s K^\intercal A K U_s^\intercal = 0
    \end{align}
    Multiplying both sides with $U_s$ and using that $U_s^\intercal U_s = \mathbb{1}_{s}$ we obtain 
    \begin{align*}
         U_s^\intercal U_s K^\intercal A K U_s^\intercal U_s = K^\intercal A K = 0
    \end{align*}
    which implies that $K=0$ due to the positive definiteness of $A$.
\end{proof}

Note that all we used in the proof of this Theorem were the identities \eqref{eq:Sigma_X} and \eqref{eq:Sigma_p} so that the statement of the theorem does not really require $\Psi$ to be derived via a Laplace approximation. 
\subsection{Additional Remarks to Theorem \ref{thm:OptSubmodel}}
\label{subsec:additional_remark_to_theorem}
\paragraph{Optimality of $P^\ast$ According to the Trace Ordering.}
Note, that Lemma \ref{lem:TraceOrder} below implies that the ordered eigenvalues (in decreasing order) of any $\Sigma_{P,X}$ satisfy $\lambda_i(\Sigma_{P,X}) \leq \lambda_i(\Sigma_X)$ as follows from the min-max theorem \citep[Theorem 4.2.6]{horn2012matrix}. Since any $\Sigma_{P,X}$ has rank $s$ and as $\Sigma_{P^\ast,X}$ shares the $s$ largest eigenvalues with $\Sigma_X$ we obtain 
\begin{align}
\label{eq:eigenvalue_ordering}
\lambda_i(\Sigma_{P,X}) \leq \lambda_i(\Sigma_{P^\ast,X}) = \lambda_i(\Sigma_X) 1_{i\leq s}
\end{align}
so that $\Sigma_{P^\ast,X}$ is actually optimal w.r.t. the trace criterion \eqref{eq:TraceOrderMain}:
\begin{align*}
    \Tr\Sigma_{P,X}  \leq \Tr \Sigma_{P^\ast,X} \,.
\end{align*}

\paragraph{Implications on Coverage (for Regression Problems). } Theorem \ref{thm:OptSubmodel} also implies, in a regression setting, that among any ellipsoidal region of the form $E_{P^\ast}(c)=\{Y: \,(Y-f_{\hat{\theta}}(X))^\intercal (\Sigma_{P,X}+\sigma \mathbb{1}_{nC})^{-1} (Y-f_{\hat{\theta}}(X)) \leq c\}$ the coverage of samples of the full LA is maximal for the choice $P=P^\ast$ for $c>0$ large enough.

In the following, we show that this is true. Using $Z\sim \mathcal{N}(0,\mathbb{1}_{nC})$ and the invariance of the standard multivariate normal with respect to orthogonal transformations, we obtain via diagonalization of $\Sigma_P$ and $\Sigma_{P,X}$
\begin{align*}
    & P_{Y\sim \mathcal{N}(f_{\hat{\theta}}(X),\Sigma_X+\sigma^2 \mathbb{1}_{nC})} (Y \in E_P(c)) \\ 
    &= P(Z^\intercal (\Sigma_{X}+\sigma^2 \mathbb{1}_{nC})^{1/2} ( \Sigma_{P,X}+ \sigma^2 \mathbb{1}_{nC} )^{-1} ( \Sigma_{X} + \sigma^2 \mathbb{1}_{nC})^{1/2} Z \leq c) \\ 
    &= P(Z^\intercal ( \Lambda+ \sigma^2 \mathbb{1}_{nC})^{1/2} U ( \Lambda(\Sigma_{P,X})+ \sigma^2 \mathbb{1}_{nC})^{-1} U^\intercal (\Lambda +\sigma^2 \mathbb{1}_{nC}  )^{1/2} Z \leq c) \\ 
    &\leq P(Z^\intercal A_U Z \leq c)\,,
\end{align*} 
where $U$ is an orthogonal matrix (depending on the choice of $P$), 
$\Lambda(\Sigma_{P,X})$ and $\Lambda=\Lambda(\Sigma_X)$ denote the corresponding diagonal matrices with decreasing eigenvalues on the diagonal and where we wrote
$A_U = (\Lambda+\sigma^2 \mathbb{1}_{nC} )^{1/2} U (\Lambda(\Sigma_{P^\ast,X})+\sigma^2 \mathbb{1}_{nC})^{-1} U^\intercal (\Lambda+\sigma^2\mathbb{1}_{nC})^{1/2}$. The inequality above follows from \eqref{eq:eigenvalue_ordering}. For $P=P^\ast$ one easily gets
using $\Sigma_{P^\ast,X}= U_s \Lambda_s U_s^\intercal$ (Theorem \ref{thm:OptSubmodel})
\begin{align*}
P_{Y\sim \mathcal{N}(f_{\hat{\theta}}(X),\sigma^2\Sigma_X)} (Y\in E_{P^\ast}(c)) = P(Z^\intercal A_{\mathbb{1}_{nC}} Z \leq c) \,.
\end{align*}
Once we can show 
\begin{align}
\label{eq:A_ordering}
 P(Z^\intercal A_U Z \leq c) \leq P(Z^\intercal A_{\mathbb{1}_{nC}} Z \leq c)
\end{align}
the proof of the claim above is therefore finished. The ordering \eqref{eq:A_ordering} follows from \citep{szekely2003extremal} once we know that for any $1\leq k \leq n$
\begin{align*}
    \sum_{i=1}^k \lambda_i(A_{\mathbb{1}_{nC}}) \leq \sum_{i=1}^k \lambda_i(A_U) \,.
\end{align*}
Due to \citep[Theorem 20.A.2]{marshall1979inequalities} we have, writing $\lambda_i$ for the diagonal elements of $\Lambda =\Lambda(\Sigma_X)$,
\begin{align}
    \sum_{i=1}^k \lambda_i (A_U) & = \max_{V\in \RR^{k\times nC}: V V^\intercal =\mathbb{1}_k} \Tr (V A_U V^\intercal) \notag  \\ &\geq \Tr \mathbb{1}_{k\times nC}  A(U) \mathbb{1}_{k\times nC}^\intercal 
     = \sum_{i,j=1}^k ( \lambda_i+ \sigma^2 ) |U_{ij}|^2 (\lambda_j1_{j\leq s} + \sigma^2)  
    = \sum_{m} \theta_m  \sum_{i,j=1}^k( \lambda_i +\sigma^2  ) P_m (\lambda_j 1_{j\leq s} +\sigma^2)^{-1} 
    \label{eq:Birkhoff}
\end{align}
where $\mathbb{1}_{k\times nC}$ denotes the projection on the first $k$ components and where we used in the second line Birkhoff's theorem \citep[Theorem 8.7.2]{horn2012matrix} to replace the doubly stochastic matrix $(|U_{ij}|^2)_{ij}$ by $\sum_{m} \theta_m P_m$ with $\sum_{m}\theta_m=1,\theta_m\geq 0$ and permutation matrices $P_m$. By the rearrangement inequality \citep[Theorem 368]{hardy1988inequalities} the object \eqref{eq:Birkhoff} becomes minimal when all $P_m$ equal $\mathbb{1}_{nC}$ and in this case we just obtain $\sum_{i=1}^k \lambda_{i}(A_{\mathbb{1}_{nC}})$.

\section{TRACE CRITERION}
    \label{sec:AppendixTraceCriterion}
Ideally we would like to choose a $P$ such that the predictive distribution of the full \eqref{eq:PredDistrGauss}, \eqref{eq:PredDistrCat} and subspace model \eqref{eq:LaplaceSubmodel} are as close as possible. In Section \ref{sec:LA_for_subspace_models} we introduced the KL divergence between these distributions.
However, in practice the full LA is unknown, which makes this metric infeasible. 

As an alternative criterion we propose for our purposes to use the trace $\Tr \Sigma_{P,X}$ instead. This criterion is feasible to compute, aligns well with the KL divergence as we show empirically in Section \ref{sec:Experiment} and can be motivated by the following lemma: 

\begin{lemma}
    \label{lem:TraceOrder}
For any $P \in \mathbb{R}^{p \times s}$ we have
\begin{equation}
    \label{eq:SigmaLoewner}
    \Sigma_{P,X} \preccurlyeq \Sigma_X
\end{equation}
in the Loewner ordering, i.e. $\Sigma_X - \Sigma_{P,X}$ is positive semi-definite. In particular we have 
\begin{equation}
    \label{eq:TraceOrder}
    \Tr \Sigma_{P,X} \leq \Tr \Sigma_X \,.
\end{equation}
\end{lemma}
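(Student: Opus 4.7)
The strategy is to reduce the Loewner inequality \eqref{eq:SigmaLoewner} to the elementary fact that any orthogonal projection is dominated by the identity. The trace inequality \eqref{eq:TraceOrder} will then follow immediately, since the trace of a positive semi-definite matrix is nonnegative.

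First, I would note that $\Psi = (NH_{\mathrm{GGN}} + \lambda \mathbb{1}_p)^{-1}$ is positive definite, because $H_{\mathrm{GGN}}$ is positive semi-definite and $\lambda > 0$. Hence $\Psi$ admits a symmetric positive definite square root $A = \Psi^{1/2}$ with $A^2 = \Psi$ and $A^{-2} = \Psi^{-1}$. Setting $M = J_X A \in \RR^{nC\times p}$, the expression \eqref{eq:Sigma_X} becomes
\[
    \Sigma_X \;=\; J_X \Psi J_X^\intercal \;=\; M M^\intercal.
\]

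Next, I would rewrite the subspace covariance \eqref{eq:Sigma_p} in the same coordinates. Let $B := A^{-1} P \in \RR^{p\times s}$. Then $P^\intercal \Psi^{-1} P = P^\intercal A^{-1} A^{-1} P = B^\intercal B$, and $J_X P = (J_X A)(A^{-1} P) = M B$. Assuming $P$ has full column rank (which is implicit in the definition of $\Sigma_{P,X}$, so that the inverse exists), $B$ has full column rank as well, and so
\[
    \Sigma_{P,X} \;=\; M B (B^\intercal B)^{-1} B^\intercal M^\intercal \;=\; M \Pi_B M^\intercal,
\]
where $\Pi_B := B(B^\intercal B)^{-1} B^\intercal$ is the orthogonal projection onto the column space of $B$.

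The key observation is that $\Pi_B \preccurlyeq \mathbb{1}_p$ in the Loewner order, since for any $v \in \RR^p$ we have $v^\intercal \Pi_B v = \|\Pi_B v\|^2 \leq \|v\|^2 = v^\intercal v$. Conjugating by $M$ preserves the Loewner order, so
\[
    \Sigma_{P,X} \;=\; M \Pi_B M^\intercal \;\preccurlyeq\; M M^\intercal \;=\; \Sigma_X,
\]
which is \eqref{eq:SigmaLoewner}. Taking the trace, which is monotone on positive semi-definite matrices (indeed, $\Tr(\Sigma_X - \Sigma_{P,X}) \geq 0$ since the diagonal entries of a positive semi-definite matrix are nonnegative), then yields \eqref{eq:TraceOrder}.

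The only point that requires care is the invertibility of $\Psi$ and of $P^\intercal \Psi^{-1} P$; both are harmless under the standing assumptions ($\lambda > 0$ and $P$ full rank). The geometric content of the argument is that $\Sigma_{P,X}$ is obtained from $\Sigma_X$ by restricting the Gaussian measure with covariance $\Psi$ to an affine subspace and pushing it forward through $J_X$, so it is always ``smaller'' than the unconstrained pushforward.
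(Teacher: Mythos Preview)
Your proof is correct and follows essentially the same approach as the paper: both introduce the substitution $B = \Psi^{-1/2} P$ (the paper writes $W=\Psi^{-1/2}P$), recognize $B(B^\intercal B)^{-1}B^\intercal$ as an orthogonal projection dominated by the identity, and conjugate to conclude. Your presentation is slightly more streamlined in that you work directly with $\Sigma_X = MM^\intercal$ and $\Sigma_{P,X} = M\Pi_B M^\intercal$, whereas the paper first shows $P(P^\intercal\Psi^{-1}P)^{-1}P^\intercal \preccurlyeq \Psi$ and then implicitly conjugates by $J_X$; but the substance is identical.
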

\begin{proof}
Due to the identities \eqref{eq:Sigma_X} and \eqref{eq:Sigma_p} it suffices to show that $ P \left(P^\intercal \Psi^{-1} P\right)^{-1}P^\intercal \preccurlyeq \Psi$, i.e., that the matrix 
\[
\Psi - P(P^\intercal\Psi^{-1} P)^{-1} P^\intercal= \Psi^{1/2} (\mathbb{1}-B) \Psi^{1/2}
\]
is positive semi-definite, where we introduced $B=
W \left(W^\intercal W\right)^{-1} W^\intercal$ with $W=\Psi^{-1/2}P$. It's easy to check that $B$ is a projection ($B^2=B$ and $B^\intercal=B$) which thus has only eigenvalues contained in $\{0,1\}$. From this it follows that $\mathbb{1}-B$ and $\Psi^{1/2} (\mathbb{1}-B) \Psi^{1/2}$ are positive semi-definite and thus \eqref{eq:SigmaLoewner}. 

From \eqref{eq:SigmaLoewner} we obtain $\Tr(\Sigma_X - \Sigma_{P,X})=\Tr \Sigma_X - \Tr \Sigma_{P,X} \geq 0$ from which \eqref{eq:TraceOrder} follows.
\end{proof}

The relation \eqref{eq:TraceOrder} shows that $\Tr (\Sigma_X  - \Sigma_{P,X})\geq 0$ is a non-negative quantity that quantifies the closeness between $\Sigma_X$ and $\Sigma_{P,X}$.
Since $\Sigma_X$ does not depend on $P$ we can judge whether for two $P_1,P_2$ we have $\Tr(\Sigma_X - \Sigma_{P_1,X}) \geq \Tr(\Sigma_X - \Sigma_{P_2,X})$ by simply comparing whether $\Tr\Sigma_{P_1,X} \geq \Tr \Sigma_{P_2,X}$. In other words, we can take $\Tr {\Sigma_{P,X}}$ to rank the quality of different $P$. Relation \eqref{eq:TraceOrder} ensures that there is an upper bound for this quantity. Recall that the trace is the sum of all eigenvalues of a matrix. If the trace of one approximation is greater than another one, it means that this affine subspace covers an eigenspace of greater eigenvalues. 

\section{ADDITIONAL RESULTS ON UNCERTAINTY QUANTIFICATION}
\label{sec:addtional_results_on_UQ}

Table \ref{tab:eval_UQ_extended} extends Table \ref{tab:eval_UQ} by including a broader range of subspace dimensions $s$ and reporting the standard error.
For the Brier score, we use in this work the convention $\frac{1}{N} \sum_{i=1}^N \sum_{c=1}^C (1_{y_i=c} - \hat{p}_{ic})^2$ that averages over datapoints but sums over classes.
Figure \ref{fig:all_nll_plots} summarizes all results of the NLL metric for all methods and datasets studied in this article with the same colour coding as in Figure \ref{fig:plots_with_feasible_optimal_solution}.

\begin{table}[h]
\begin{scriptsize}
\begin{tabular}{lllcccccc}
\toprule
metric & dataset & $s$ & lowrank-D. & lowrank-KFAC & subset-D. & subset-M. & subset-SWAG & $\hat{\theta}$ \\
\midrule
\multirow{15}{*}{NLL} & \multirow{5}{*}{FashionMNIST} & 120 & 0.289 (0.004) & \textbf{0.248} (0.002) & 0.292 (0.004) & 0.292 (0.004) & 0.292 (0.004) & 0.293 (0.004) \\
 &  & 230 & 0.286 (0.004) & \textbf{0.248} (0.002) & 0.291 (0.004) & 0.290 (0.004) & 0.290 (0.004) & 0.293 (0.004) \\
 &  & 450 & 0.282 (0.004) & \textbf{0.249} (0.002) & 0.290 (0.004) & 0.288 (0.004) & 0.288 (0.004) & 0.293 (0.004) \\
 &  & 670 & 0.278 (0.004) & \textbf{0.249} (0.002) & 0.288 (0.004) & 0.286 (0.004) & 0.286 (0.004) & 0.293 (0.004) \\
 &  & 1000 & 0.273 (0.003) & \textbf{0.250} (0.002) & 0.286 (0.004) & 0.283 (0.004) & 0.283 (0.004) & 0.293 (0.004) \\
 & \multirow{5}{*}{CIFAR10} & 120 & 0.287 (0.001) & \textbf{0.263} (0.001) & 0.289 (0.001) & 0.287 (0.001) & 0.287 (0.001) & 0.289 (0.001) \\
 &  & 230 & 0.285 (0.001) & \textbf{0.261} (0.001) & 0.289 (0.001) & 0.285 (0.001) & 0.285 (0.001) & 0.289 (0.001) \\
 &  & 450 & 0.281 (0.001) & \textbf{0.259} (0.001) & 0.289 (0.001) & 0.282 (0.001) & 0.283 (0.001) & 0.289 (0.001) \\
 &  & 670 & 0.277 (0.001) & \textbf{0.258} (0.001) & 0.289 (0.001) & 0.280 (0.001) & 0.280 (0.001) & 0.289 (0.001) \\
 &  & 1000 & 0.273 (0.001) & \textbf{0.256} (0.001) & 0.289 (0.001) & 0.277 (0.001) & 0.277 (0.001) & 0.289 (0.001) \\
 & \multirow{5}{*}{ImageNet10} & 10 & 0.276 (0.016) & \textbf{0.268} (0.012) & 0.277 (0.016) & 0.276 (0.016) & 0.277 (0.016) & 0.277 (0.016) \\
 &  & 20 & 0.275 (0.016) & \textbf{0.266} (0.011) & 0.277 (0.016) & 0.276 (0.016) & 0.277 (0.016) & 0.277 (0.016) \\
 &  & 50 & 0.274 (0.015) & \textbf{0.270} (0.010) & 0.277 (0.016) & 0.276 (0.016) & 0.277 (0.016) & 0.277 (0.016) \\
 &  & 70 & \textbf{0.272} (0.015) & 0.273 (0.010) & 0.277 (0.016) & 0.276 (0.016) & 0.276 (0.016) & 0.277 (0.016) \\
 &  & 100 & \textbf{0.271} (0.015) & 0.277 (0.009) & 0.277 (0.016) & 0.276 (0.016) & 0.276 (0.016) & 0.277 (0.016) \\
\midrule
\multirow{15}{*}{ECE} & \multirow{5}{*}{FashionMNIST} & 120 & 0.039 (0.001) & \textbf{0.010} (0.001) & 0.040 (0.001) & 0.040 (0.001) & 0.040 (0.001) & 0.041 (0.001) \\
 &  & 230 & 0.039 (0.001) & \textbf{0.012} (0.001) & 0.040 (0.001) & 0.040 (0.001) & 0.040 (0.001) & 0.041 (0.001) \\
 &  & 450 & 0.037 (0.001) & \textbf{0.013} (0.001) & 0.040 (0.001) & 0.039 (0.001) & 0.039 (0.001) & 0.041 (0.001) \\
 &  & 670 & 0.035 (0.001) & \textbf{0.015} (0.001) & 0.039 (0.001) & 0.039 (0.001) & 0.039 (0.001) & 0.041 (0.001) \\
 &  & 1000 & 0.033 (0.001) & \textbf{0.016} (0.001) & 0.038 (0.001) & 0.038 (0.001) & 0.038 (0.001) & 0.041 (0.001) \\
 & \multirow{5}{*}{CIFAR10} & 120 & 0.037 (0.001) & \textbf{0.029} (0.000) & 0.038 (0.001) & 0.037 (0.001) & 0.037 (0.001) & 0.038 (0.001) \\
 &  & 230 & 0.037 (0.001) & \textbf{0.029} (0.000) & 0.038 (0.001) & 0.037 (0.001) & 0.037 (0.001) & 0.038 (0.001) \\
 &  & 450 & 0.035 (0.001) & \textbf{0.027} (0.000) & 0.038 (0.001) & 0.036 (0.001) & 0.036 (0.001) & 0.038 (0.001) \\
 &  & 670 & 0.034 (0.001) & \textbf{0.026} (0.000) & 0.038 (0.001) & 0.035 (0.001) & 0.035 (0.001) & 0.038 (0.001) \\
 &  & 1000 & 0.033 (0.001) & \textbf{0.025} (0.001) & 0.038 (0.001) & 0.035 (0.001) & 0.035 (0.001) & 0.038 (0.001) \\
 & \multirow{5}{*}{ImageNet10} & 10 & 0.039 (0.002) & \textbf{0.036} (0.003) & 0.040 (0.003) & 0.040 (0.003) & 0.040 (0.003) & 0.040 (0.003) \\
 &  & 20 & 0.040 (0.003) & \textbf{0.036} (0.002) & 0.040 (0.003) & 0.041 (0.002) & 0.039 (0.003) & 0.040 (0.003) \\
 &  & 50 & \textbf{0.039} (0.003) & 0.041 (0.004) & 0.040 (0.003) & 0.040 (0.002) & 0.040 (0.003) & 0.040 (0.003) \\
 &  & 70 & \textbf{0.038} (0.002) & 0.044 (0.004) & 0.040 (0.003) & 0.039 (0.002) & 0.040 (0.003) & 0.040 (0.003) \\
 &  & 100 & \textbf{0.038} (0.001) & 0.048 (0.006) & 0.040 (0.003) & 0.039 (0.002) & 0.040 (0.003) & 0.040 (0.003) \\
\midrule
\multirow{15}{*}{Brier} & \multirow{5}{*}{FashionMNIST} & 120 & 0.135 (0.001) & \textbf{0.128} (0.001) & 0.135 (0.001) & 0.135 (0.001) & 0.135 (0.001) & 0.135 (0.001) \\
 &  & 230 & 0.134 (0.001) & \textbf{0.129} (0.001) & 0.135 (0.001) & 0.135 (0.001) & 0.135 (0.001) & 0.135 (0.001) \\
 &  & 450 & 0.134 (0.001) & \textbf{0.129} (0.001) & 0.135 (0.001) & 0.135 (0.001) & 0.135 (0.001) & 0.135 (0.001) \\
 &  & 670 & 0.134 (0.001) & \textbf{0.129} (0.001) & 0.135 (0.001) & 0.135 (0.001) & 0.135 (0.001) & 0.135 (0.001) \\
 &  & 1000 & 0.133 (0.001) & \textbf{0.129} (0.001) & 0.135 (0.001) & 0.134 (0.001) & 0.134 (0.001) & 0.135 (0.001) \\
 & \multirow{5}{*}{CIFAR10} & 120 & 0.125 (0.001) & \textbf{0.123} (0.001) & 0.126 (0.001) & 0.125 (0.001) & 0.125 (0.001) & 0.126 (0.001) \\
 &  & 230 & 0.125 (0.001) & \textbf{0.122} (0.001) & 0.126 (0.001) & 0.125 (0.001) & 0.125 (0.001) & 0.126 (0.001) \\
 &  & 450 & 0.125 (0.001) & \textbf{0.122} (0.001) & 0.126 (0.001) & 0.125 (0.001) & 0.125 (0.001) & 0.126 (0.001) \\
 &  & 670 & 0.124 (0.001) & \textbf{0.122} (0.001) & 0.126 (0.001) & 0.125 (0.001) & 0.125 (0.001) & 0.126 (0.001) \\
 &  & 1000 & 0.124 (0.001) & \textbf{0.122} (0.001) & 0.126 (0.001) & 0.124 (0.001) & 0.124 (0.001) & 0.126 (0.001) \\
 & \multirow{5}{*}{ImageNet10} & 10 & 0.129 (0.008) & \textbf{0.127} (0.007) & 0.129 (0.008) & 0.129 (0.008) & 0.129 (0.008) & 0.129 (0.008) \\
 &  & 20 & 0.129 (0.008) & \textbf{0.127} (0.006) & 0.129 (0.008) & 0.129 (0.008) & 0.129 (0.008) & 0.129 (0.008) \\
 &  & 50 & 0.129 (0.008) & \textbf{0.127} (0.006) & 0.129 (0.008) & 0.129 (0.008) & 0.129 (0.008) & 0.129 (0.008) \\
 &  & 70 & 0.128 (0.007) & \textbf{0.127} (0.006) & 0.129 (0.008) & 0.129 (0.008) & 0.129 (0.008) & 0.129 (0.008) \\
 &  & 100 & 0.128 (0.007) & \textbf{0.127} (0.006) & 0.129 (0.008) & 0.129 (0.008) & 0.129 (0.008) & 0.129 (0.008) \\
\bottomrule
\end{tabular}
\end{scriptsize}
\caption{Extension of Table \ref{tab:eval_UQ} with additional values of $s$ and standard errors over 5 seeds (in parentheses).}
\label{tab:eval_UQ_extended}
\end{table}

\newpage

\begin{figure*}[t!]
    \centering
    \begin{subfigure}{0.195\textwidth}
        \centering
        \includegraphics[width=\linewidth]{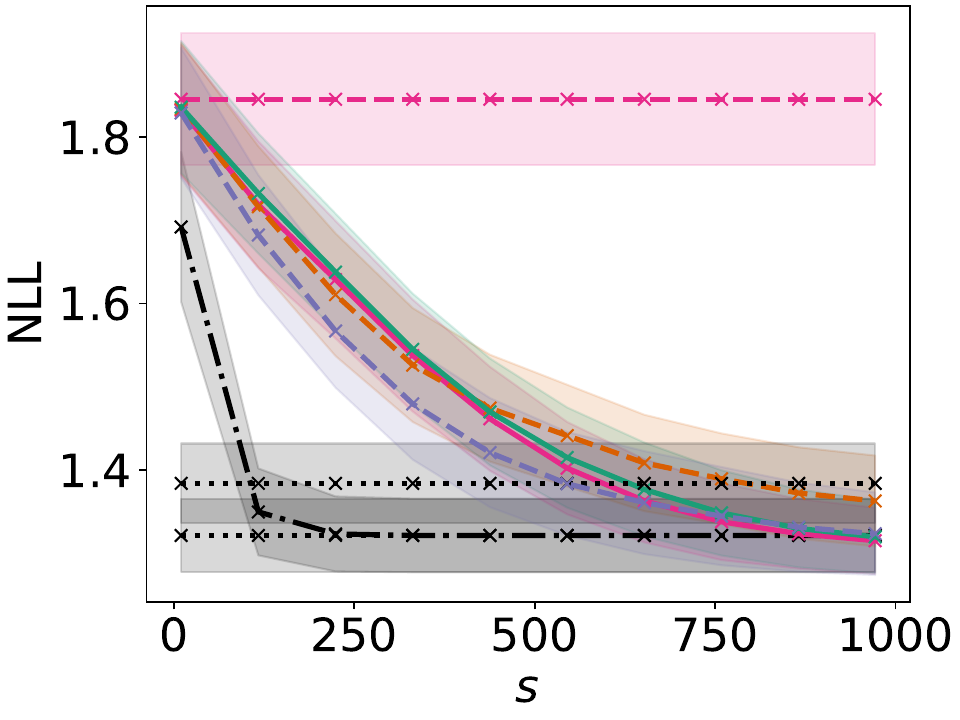}
        \caption{ENB}
    \end{subfigure}
    \begin{subfigure}{0.195\textwidth}
        \centering
        \includegraphics[width=\linewidth]{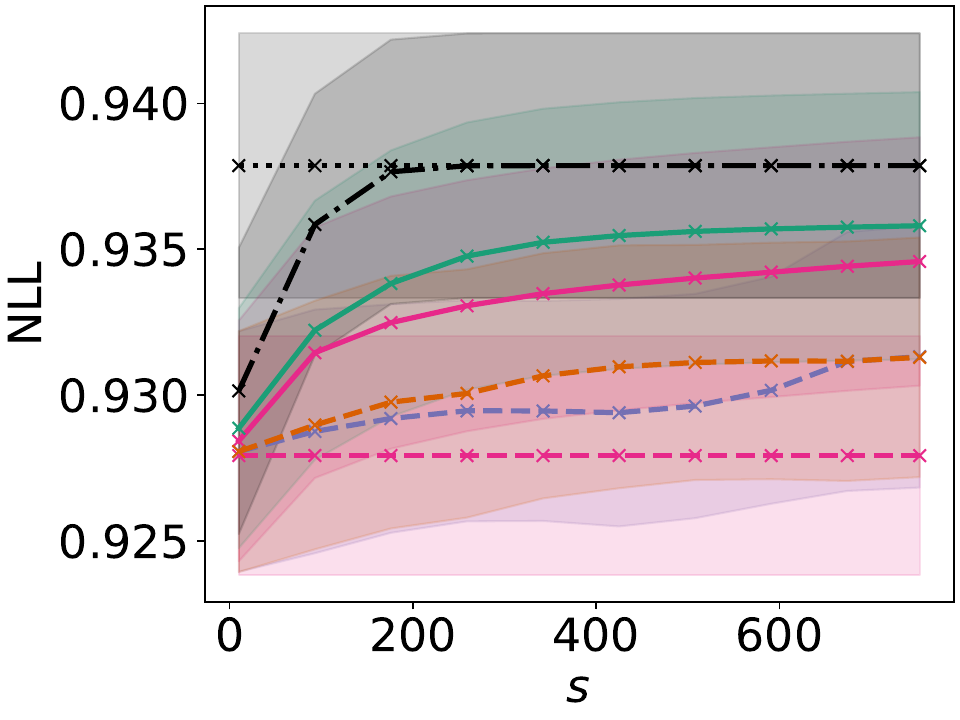}
        \caption{Red Wine}
    \end{subfigure}
    \begin{subfigure}{0.195\textwidth}
        \centering
        \includegraphics[width=\linewidth]{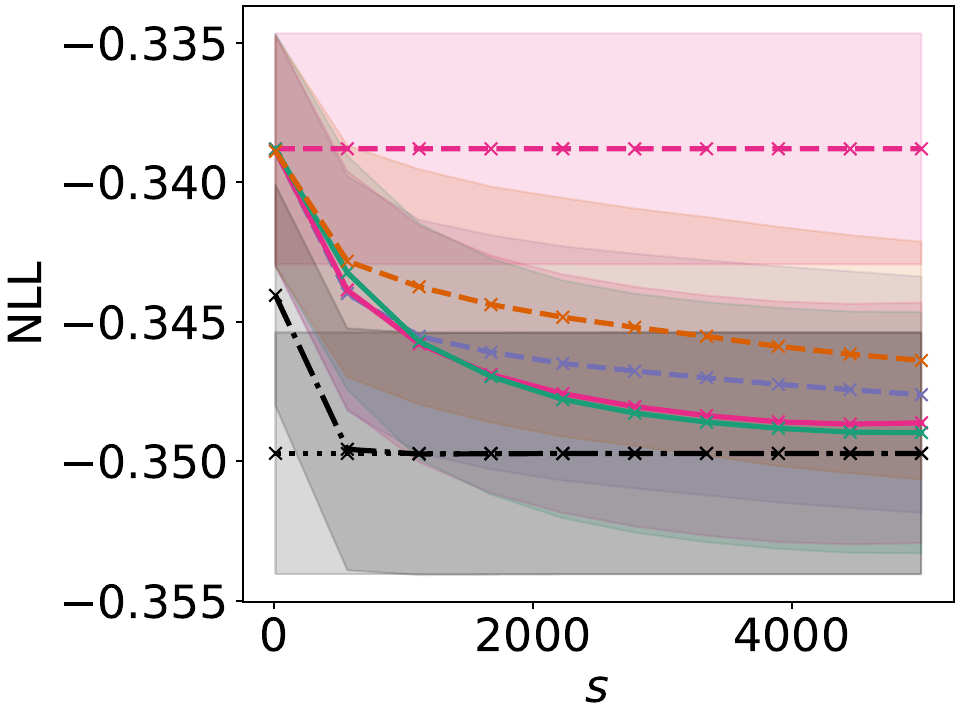}
        \caption{California}
    \end{subfigure}
    \begin{subfigure}{0.195\textwidth}
        \centering
        \includegraphics[width=\linewidth]{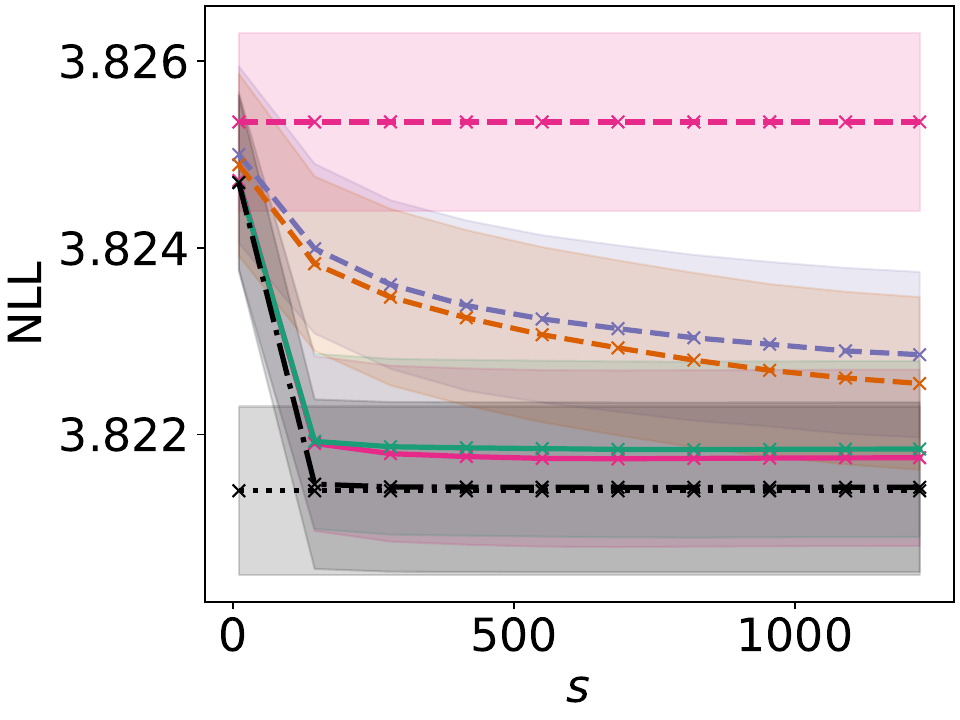}
        \caption{Naval Propulsion}
    \end{subfigure}
    \begin{subfigure}{0.195\textwidth}
        \centering
        \includegraphics[width=\linewidth]{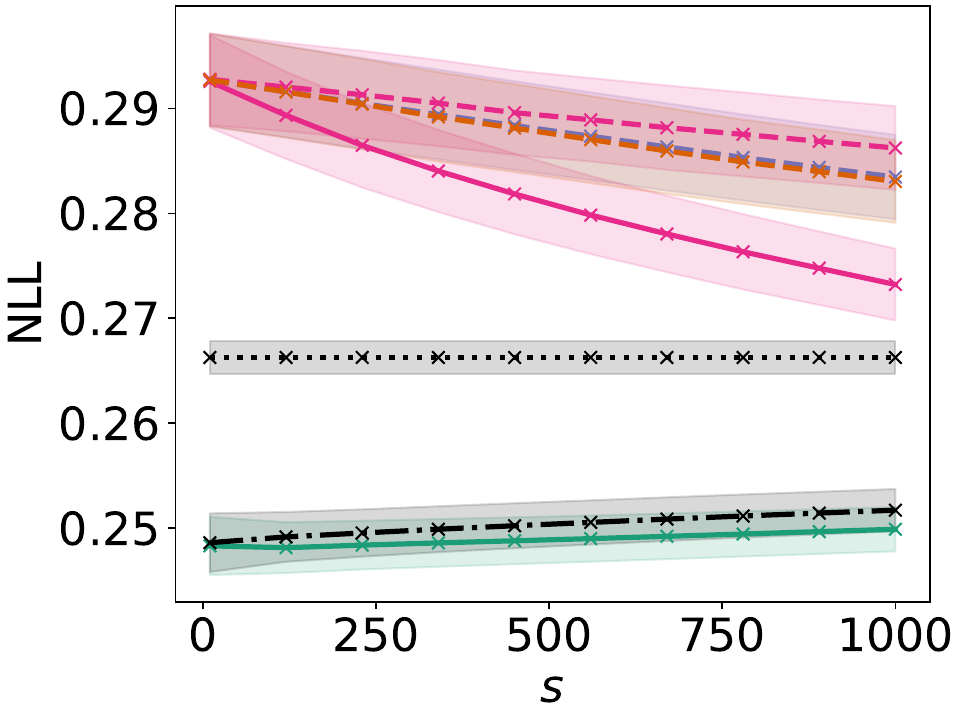}
        \caption{Fashion-MNIST}
    \end{subfigure}
    \begin{subfigure}{0.195\textwidth}
        \centering
        \includegraphics[width=\linewidth]{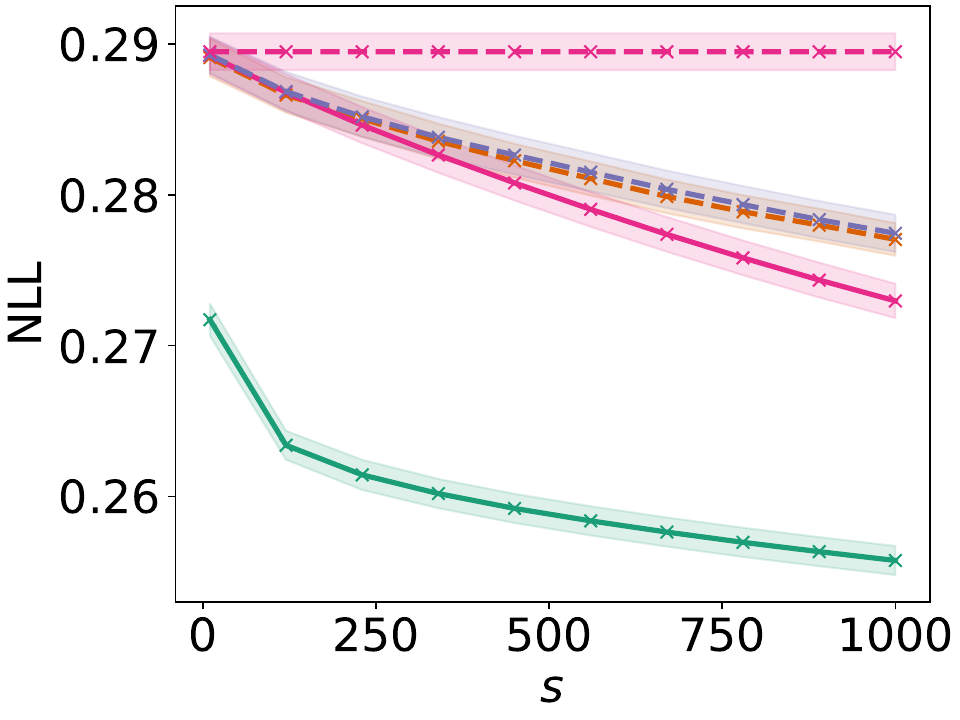}
        \caption{CIFAR10}
    \end{subfigure}
    \begin{subfigure}{0.195\textwidth}
        \centering
        \includegraphics[width=\linewidth]{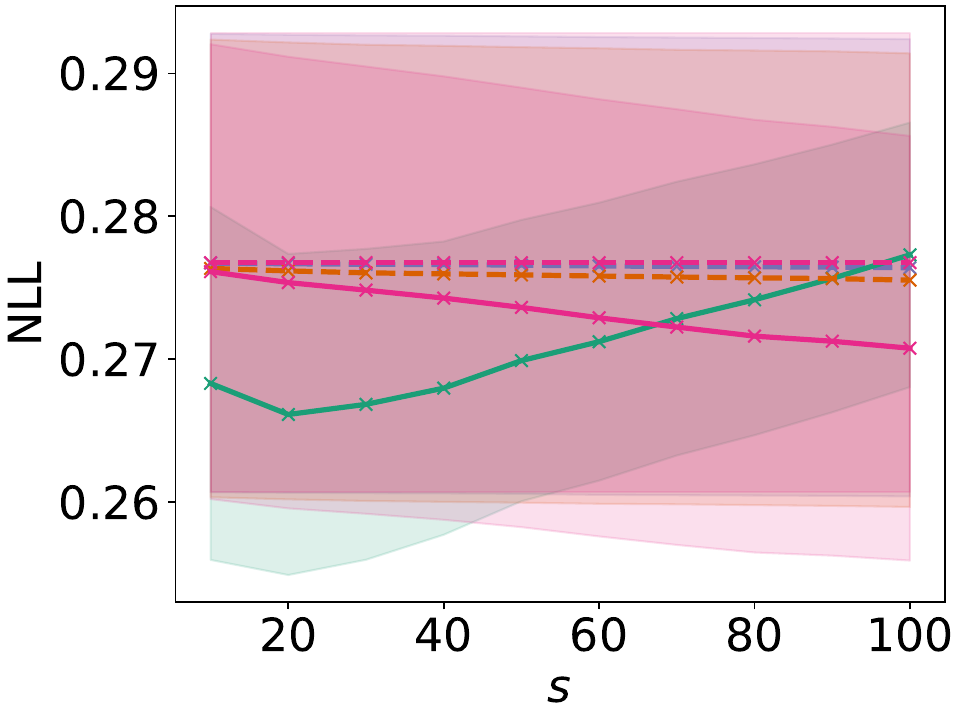}
        \caption{ImageNet10}
    \end{subfigure}
    \vspace{0.1cm} 
    \caption{Comparison of the NLL for all subspace models and datasets studied in this article and various subspace dimension $s$. Different choices of $P$ are marked by different colours and line types with the same colour coding as in Figure \ref{fig:plots_with_feasible_optimal_solution}.}
    \label{fig:all_nll_plots}
\end{figure*}

\section{ADDITIONAL MATERIAL ON CORRUPTED DATA}

\subsection{Corrupted CIFAR10}

We study 7 corruptions from \cite{hendrycks2019cifarc}, namely
\begin{itemize}
    \item 3 types of noise: impulse, shot and speckle noise,
    \item 2 types of blurring: Gaussian and motion blur,
    \item a distortion via an elastic transform,
    \item an increased luminance (brightness).
\end{itemize}
We use the data available at \href{https://doi.org/10.5281/zenodo.2535967}{https://doi.org/10.5281/zenodo.2535967} that contains the corruptions with different levels of severity. We do not distinguish between different levels of severity in our analysis. The entire set of results for all studied corruptions and subspace dimensions $s$ can be found in Table \ref{tab:OOD_extended} including standard errors obtained via 5 different seeds.

\begin{table}[p]
\begin{scriptsize}
\begin{tabular}{lllcccccc}
\toprule
metric & corruption & $s$ & lowrank-D. & lowrank-KFAC & subset-D. & subset-M. & subset-SWAG & $\hat{\theta}$ \\
\midrule
\multirow{21}{*}{NLL} & brightness & 100 & 0.342 (0.002) & \textbf{0.316} (0.002) & 0.344 (0.002) & 0.341 (0.002) & 0.342 (0.002) & 0.344 (0.002) \\
 & brightness & 200 & 0.340 (0.002) & \textbf{0.313} (0.002) & 0.344 (0.002) & 0.340 (0.002) & 0.340 (0.002) & 0.344 (0.002) \\
 & brightness & 300 & 0.338 (0.002) & \textbf{0.312} (0.002) & 0.344 (0.002) & 0.338 (0.002) & 0.338 (0.002) & 0.344 (0.002) \\
 & elastic transform & 100 & 0.524 (0.004) & \textbf{0.492} (0.004) & 0.527 (0.004) & 0.524 (0.004) & 0.524 (0.004) & 0.527 (0.004) \\
 & elastic transform & 200 & 0.522 (0.004) & \textbf{0.490} (0.004) & 0.527 (0.004) & 0.522 (0.004) & 0.522 (0.004) & 0.527 (0.004) \\
 & elastic transform & 300 & 0.520 (0.004) & \textbf{0.488} (0.004) & 0.527 (0.004) & 0.520 (0.004) & 0.521 (0.004) & 0.527 (0.004) \\
 & gaussian blur & 100 & 0.353 (0.002) & \textbf{0.342} (0.002) & 0.354 (0.002) & 0.353 (0.002) & 0.353 (0.002) & 0.354 (0.002) \\
 & gaussian blur & 200 & 0.353 (0.002) & \textbf{0.341} (0.002) & 0.354 (0.002) & 0.352 (0.002) & 0.353 (0.002) & 0.354 (0.002) \\
 & gaussian blur & 300 & 0.352 (0.002) & \textbf{0.341} (0.002) & 0.354 (0.002) & 0.352 (0.002) & 0.352 (0.002) & 0.354 (0.002) \\
 & impulse noise & 100 & 1.841 (0.057) & \textbf{1.679} (0.048) & 1.861 (0.059) & 1.848 (0.057) & 1.846 (0.056) & 1.861 (0.059) \\
 & impulse noise & 200 & 1.822 (0.055) & \textbf{1.660} (0.047) & 1.861 (0.059) & 1.836 (0.056) & 1.831 (0.055) & 1.861 (0.059) \\
 & impulse noise & 300 & 1.804 (0.054) & \textbf{1.647} (0.045) & 1.861 (0.059) & 1.823 (0.056) & 1.820 (0.055) & 1.861 (0.059) \\
 & motion blur & 100 & 0.589 (0.003) & \textbf{0.560} (0.003) & 0.591 (0.003) & 0.589 (0.003) & 0.589 (0.003) & 0.591 (0.003) \\
 & motion blur & 200 & 0.587 (0.003) & \textbf{0.558} (0.002) & 0.591 (0.003) & 0.587 (0.003) & 0.587 (0.003) & 0.591 (0.003) \\
 & motion blur & 300 & 0.585 (0.003) & \textbf{0.556} (0.002) & 0.591 (0.003) & 0.585 (0.003) & 0.586 (0.003) & 0.591 (0.003) \\
 & shot noise & 100 & 1.531 (0.037) & \textbf{1.400} (0.033) & 1.548 (0.038) & 1.536 (0.037) & 1.535 (0.037) & 1.548 (0.038) \\
 & shot noise & 200 & 1.515 (0.036) & \textbf{1.383} (0.033) & 1.548 (0.038) & 1.526 (0.036) & 1.524 (0.037) & 1.548 (0.038) \\
 & shot noise & 300 & 1.501 (0.036) & \textbf{1.371} (0.032) & 1.548 (0.038) & 1.516 (0.036) & 1.516 (0.036) & 1.548 (0.038) \\
 & speckle noise & 100 & 1.481 (0.035) & \textbf{1.352} (0.032) & 1.497 (0.036) & 1.485 (0.035) & 1.484 (0.035) & 1.497 (0.036) \\
 & speckle noise & 200 & 1.466 (0.034) & \textbf{1.335} (0.031) & 1.497 (0.036) & 1.476 (0.034) & 1.474 (0.034) & 1.497 (0.036) \\
 & speckle noise & 300 & 1.452 (0.033) & \textbf{1.324} (0.030) & 1.497 (0.036) & 1.466 (0.034) & 1.465 (0.034) & 1.497 (0.036) \\
\midrule
\multirow{21}{*}{ECE} & brightness & 100 & 0.045 (0.000) & \textbf{0.036} (0.000) & 0.046 (0.000) & 0.045 (0.000) & 0.045 (0.000) & 0.046 (0.000) \\
 & brightness & 200 & 0.044 (0.000) & \textbf{0.035} (0.000) & 0.046 (0.000) & 0.044 (0.000) & 0.045 (0.000) & 0.046 (0.000) \\
 & brightness & 300 & 0.044 (0.000) & \textbf{0.034} (0.000) & 0.046 (0.000) & 0.044 (0.000) & 0.044 (0.000) & 0.046 (0.000) \\
 & elastic transform & 100 & 0.064 (0.001) & \textbf{0.051} (0.001) & 0.065 (0.001) & 0.064 (0.001) & 0.064 (0.001) & 0.065 (0.001) \\
 & elastic transform & 200 & 0.063 (0.001) & \textbf{0.050} (0.001) & 0.065 (0.001) & 0.064 (0.001) & 0.064 (0.001) & 0.065 (0.001) \\
 & elastic transform & 300 & 0.063 (0.001) & \textbf{0.049} (0.001) & 0.065 (0.001) & 0.063 (0.001) & 0.063 (0.001) & 0.065 (0.001) \\
 & gaussian blur & 100 & 0.032 (0.000) & \textbf{0.023} (0.000) & 0.032 (0.000) & 0.031 (0.000) & 0.032 (0.000) & 0.032 (0.000) \\
 & gaussian blur & 200 & 0.031 (0.000) & \textbf{0.022} (0.000) & 0.032 (0.000) & 0.031 (0.000) & 0.031 (0.000) & 0.032 (0.000) \\
 & gaussian blur & 300 & 0.030 (0.000) & \textbf{0.021} (0.000) & 0.032 (0.000) & 0.030 (0.000) & 0.031 (0.000) & 0.032 (0.000) \\
 & impulse noise & 100 & 0.230 (0.004) & \textbf{0.206} (0.004) & 0.233 (0.004) & 0.231 (0.004) & 0.231 (0.004) & 0.233 (0.004) \\
 & impulse noise & 200 & 0.227 (0.004) & \textbf{0.203} (0.004) & 0.233 (0.004) & 0.229 (0.004) & 0.229 (0.004) & 0.233 (0.004) \\
 & impulse noise & 300 & 0.225 (0.004) & \textbf{0.200} (0.004) & 0.233 (0.004) & 0.228 (0.004) & 0.227 (0.004) & 0.233 (0.004) \\
 & motion blur & 100 & 0.071 (0.001) & \textbf{0.060} (0.001) & 0.072 (0.001) & 0.071 (0.001) & 0.071 (0.001) & 0.072 (0.001) \\
 & motion blur & 200 & 0.071 (0.001) & \textbf{0.059} (0.001) & 0.072 (0.001) & 0.071 (0.001) & 0.071 (0.001) & 0.072 (0.001) \\
 & motion blur & 300 & 0.070 (0.001) & \textbf{0.057} (0.001) & 0.072 (0.001) & 0.070 (0.001) & 0.070 (0.001) & 0.072 (0.001) \\
 & shot noise & 100 & 0.198 (0.006) & \textbf{0.181} (0.006) & 0.200 (0.006) & 0.198 (0.005) & 0.198 (0.006) & 0.200 (0.006) \\
 & shot noise & 200 & 0.196 (0.006) & \textbf{0.178} (0.006) & 0.200 (0.006) & 0.197 (0.005) & 0.197 (0.006) & 0.200 (0.006) \\
 & shot noise & 300 & 0.194 (0.006) & \textbf{0.176} (0.006) & 0.200 (0.006) & 0.196 (0.005) & 0.196 (0.006) & 0.200 (0.006) \\
 & speckle noise & 100 & 0.189 (0.004) & \textbf{0.172} (0.005) & 0.191 (0.004) & 0.190 (0.004) & 0.190 (0.004) & 0.191 (0.004) \\
 & speckle noise & 200 & 0.188 (0.004) & \textbf{0.170} (0.005) & 0.191 (0.004) & 0.189 (0.004) & 0.188 (0.004) & 0.191 (0.004) \\
 & speckle noise & 300 & 0.186 (0.004) & \textbf{0.168} (0.005) & 0.191 (0.004) & 0.188 (0.004) & 0.187 (0.004) & 0.191 (0.004) \\
\midrule
\multirow{21}{*}{Brier} & brightness & 100 & 0.149 (0.001) & \textbf{0.145} (0.001) & 0.149 (0.001) & 0.149 (0.001) & 0.149 (0.001) & 0.149 (0.001) \\
 & brightness & 200 & 0.149 (0.001) & \textbf{0.145} (0.001) & 0.149 (0.001) & 0.149 (0.001) & 0.149 (0.001) & 0.149 (0.001) \\
 & brightness & 300 & 0.148 (0.001) & \textbf{0.145} (0.001) & 0.149 (0.001) & 0.148 (0.001) & 0.148 (0.001) & 0.149 (0.001) \\
 & elastic transform & 100 & 0.230 (0.001) & \textbf{0.226} (0.001) & 0.231 (0.001) & 0.230 (0.001) & 0.230 (0.001) & 0.231 (0.001) \\
 & elastic transform & 200 & 0.230 (0.001) & \textbf{0.225} (0.001) & 0.231 (0.001) & 0.230 (0.001) & 0.230 (0.001) & 0.231 (0.001) \\
 & elastic transform & 300 & 0.230 (0.001) & \textbf{0.225} (0.001) & 0.231 (0.001) & 0.230 (0.001) & 0.230 (0.001) & 0.231 (0.001) \\
 & gaussian blur & 100 & 0.166 (0.001) & \textbf{0.164} (0.001) & 0.166 (0.001) & 0.165 (0.001) & 0.166 (0.001) & 0.166 (0.001) \\
 & gaussian blur & 200 & 0.165 (0.001) & \textbf{0.164} (0.001) & 0.166 (0.001) & 0.165 (0.001) & 0.165 (0.001) & 0.166 (0.001) \\
 & gaussian blur & 300 & 0.165 (0.001) & \textbf{0.164} (0.001) & 0.166 (0.001) & 0.165 (0.001) & 0.165 (0.001) & 0.166 (0.001) \\
 & impulse noise & 100 & 0.614 (0.008) & \textbf{0.597} (0.008) & 0.616 (0.008) & 0.615 (0.008) & 0.615 (0.008) & 0.616 (0.008) \\
 & impulse noise & 200 & 0.612 (0.008) & \textbf{0.595} (0.008) & 0.616 (0.008) & 0.614 (0.008) & 0.613 (0.008) & 0.616 (0.008) \\
 & impulse noise & 300 & 0.611 (0.008) & \textbf{0.593} (0.008) & 0.616 (0.008) & 0.612 (0.008) & 0.612 (0.008) & 0.616 (0.008) \\
 & motion blur & 100 & 0.259 (0.001) & \textbf{0.255} (0.001) & 0.259 (0.001) & 0.259 (0.001) & 0.259 (0.001) & 0.259 (0.001) \\
 & motion blur & 200 & 0.259 (0.001) & \textbf{0.254} (0.001) & 0.259 (0.001) & 0.259 (0.001) & 0.259 (0.001) & 0.259 (0.001) \\
 & motion blur & 300 & 0.258 (0.001) & \textbf{0.254} (0.001) & 0.259 (0.001) & 0.259 (0.001) & 0.259 (0.001) & 0.259 (0.001) \\
 & shot noise & 100 & 0.509 (0.009) & \textbf{0.496} (0.009) & 0.510 (0.010) & 0.509 (0.009) & 0.509 (0.009) & 0.510 (0.010) \\
 & shot noise & 200 & 0.507 (0.009) & \textbf{0.495} (0.009) & 0.510 (0.010) & 0.508 (0.009) & 0.508 (0.010) & 0.510 (0.010) \\
 & shot noise & 300 & 0.506 (0.009) & \textbf{0.493} (0.009) & 0.510 (0.010) & 0.507 (0.009) & 0.507 (0.009) & 0.510 (0.010) \\
 & speckle noise & 100 & 0.490 (0.008) & \textbf{0.478} (0.008) & 0.491 (0.008) & 0.490 (0.008) & 0.490 (0.008) & 0.491 (0.008) \\
 & speckle noise & 200 & 0.489 (0.008) & \textbf{0.476} (0.008) & 0.491 (0.008) & 0.489 (0.008) & 0.489 (0.008) & 0.491 (0.008) \\
 & speckle noise & 300 & 0.488 (0.008) & \textbf{0.475} (0.008) & 0.491 (0.008) & 0.489 (0.008) & 0.489 (0.008) & 0.491 (0.008) \\
\bottomrule
\end{tabular}
\end{scriptsize}
\caption{Extension of Table \ref{tab:OOD} for corrupted CIFAR10 including more corruptions, several values of $s$ and standard errors over 5 seeds (in parentheses). Smallest values in a row are marked in bold.}
\label{tab:OOD_extended}
\end{table}

\newpage

\subsection{Corrupted MNIST}
\label{subsec:corrupted_mnist}

In addition to corrupted CIFAR10 we also study a corrupted version of MNIST. We use the 15 corruption types from \cite{Mu2019mnistc}. These include:
\begin{itemize}
    \item Noise: shot noise and impulse noise simulate random corruptions during the imaging process.
    \item Blur: glass blur (viewing through frosted glass via local pixel shuffling) and motion blur (blurring along a random line).
    \item Affine transformations: shear, translate, scale, and rotate.
    \item Occlusions and overlays: stripe (inverted vertical stripe), spatter (random splotches), zigzag and dotted line (superimposed patterns with exponentially controlled brightness).
    \item Other: brightness (increased luminance) and canny edges (edge detection filter), fog (simulation via diamond square algorithm).
\end{itemize}
The model was trained on MNIST. We used the same architecture as for FashionMNIST. For details on the training procedure, see Section \ref{subsec:architectures_and_training}. For all subspace methods we fixed the subspace dimension to be $s=100$. Tables \ref{tab:MNIST_c_approximation_quality} and \ref{tab:MNIST_c_UQ_eval} report the result on approximation quality (compared to the full LA) and performance in uncertainty quantification. The approximation quality results show a clear preference for the lowrank methods, especially for $P_{\mathrm{lowrank-KFAC}}$. For the evaluation of uncertainty quantification the results are more mixed but in the majority of the corruptions we obtain the best scores for lowrank methods. We suspect the mixed behavior for this dataset to be linked to a non-optimal performance of the full LA itself, so that a larger deviation from the latter can lead to a better score.

\begin{table*}[h]
\begin{scriptsize}
\centering
\begin{tabular}{llccccc}
\toprule
metric & corruption & lowrank-Diagonal & lowrank-KFAC & subset-Diagonal & subset-Magnitude & subset-SWAG \\
\midrule
KL & brightness & 0.868 (0.006) & \textbf{0.445} (0.020) & 1.759 (0.083) & 1.500 (0.063) & 1.587 (0.069) \\
 & canny edges & 1.081 (0.033) & \textbf{0.481} (0.006) & 3.183 (0.112) & 2.338 (0.070) & 2.488 (0.072) \\
 & dotted line & 0.765 (0.015) & \textbf{0.320} (0.008) & 2.073 (0.046) & 1.645 (0.037) & 1.687 (0.037) \\
 & fog & 0.305 (0.008) & \textbf{0.212} (0.005) & 0.414 (0.015) & 0.386 (0.012) & 0.393 (0.015) \\
 & glass blur & 0.673 (0.005) & \textbf{0.356} (0.008) & 1.219 (0.026) & 1.030 (0.019) & 1.080 (0.023) \\
 & impulse noise & 1.224 (0.029) & \textbf{0.647} (0.011) & 3.067 (0.124) & 2.378 (0.064) & 2.572 (0.075) \\
 & motion blur & 0.636 (0.011) & \textbf{0.335} (0.008) & 1.210 (0.030) & 1.065 (0.024) & 1.069 (0.024) \\
 & rotate & 0.693 (0.013) & \textbf{0.283} (0.006) & 1.752 (0.032) & 1.453 (0.025) & 1.449 (0.024) \\
 & scale & 0.916 (0.011) & \textbf{0.373} (0.010) & 1.923 (0.056) & 1.616 (0.038) & 1.627 (0.042) \\
 & shear & 0.641 (0.010) & \textbf{0.270} (0.007) & 1.607 (0.032) & 1.342 (0.033) & 1.352 (0.031) \\
 & shot noise & 0.796 (0.009) & \textbf{0.360} (0.008) & 1.956 (0.050) & 1.544 (0.039) & 1.611 (0.039) \\
 & spatter & 0.636 (0.008) & \textbf{0.264} (0.007) & 1.666 (0.048) & 1.337 (0.037) & 1.358 (0.035) \\
 & stripe & 1.405 (0.039) & \textbf{0.833} (0.038) & 3.254 (0.449) & 3.442 (0.169) & 3.586 (0.141) \\
 & translate & 1.133 (0.029) & \textbf{0.536} (0.011) & 2.980 (0.025) & 2.526 (0.026) & 2.482 (0.024) \\
 & zigzag & 0.963 (0.022) & \textbf{0.444} (0.010) & 2.664 (0.051) & 2.209 (0.043) & 2.237 (0.042) \\
\midrule
Log-Trace & brightness & 10.681 (0.063) & \textbf{11.273} (0.075) & -inf & 8.876 (0.043) & 8.557 (0.018) \\
 & canny edges & 11.602 (0.082) & \textbf{12.208} (0.058) & -inf & 9.730 (0.049) & 9.488 (0.054) \\
 & dotted line & 11.572 (0.048) & \textbf{12.173} (0.022) & -inf & 9.420 (0.048) & 9.346 (0.032) \\
 & fog & 8.786 (0.033) & \textbf{9.371} (0.035) & -inf & 7.221 (0.040) & 7.015 (0.079) \\
 & glass blur & 10.286 (0.054) & \textbf{10.913} (0.041) & -inf & 8.600 (0.059) & 8.299 (0.023) \\
 & impulse noise & 11.337 (0.075) & \textbf{11.874} (0.059) & -inf & 9.607 (0.049) & 9.336 (0.033) \\
 & motion blur & 10.433 (0.054) & \textbf{10.997} (0.032) & -inf & 8.344 (0.053) & 8.304 (0.047) \\
 & rotate & 11.258 (0.045) & \textbf{11.909} (0.015) & -inf & 9.091 (0.048) & 9.075 (0.038) \\
 & scale & 10.783 (0.061) & \textbf{11.593} (0.028) & -inf & 8.833 (0.080) & 8.818 (0.033) \\
 & shear & 11.319 (0.039) & \textbf{11.922} (0.016) & -inf & 9.147 (0.041) & 9.114 (0.026) \\
 & shot noise & 11.247 (0.050) & \textbf{11.835} (0.030) & -inf & 9.259 (0.049) & 9.099 (0.032) \\
 & spatter & 11.466 (0.045) & \textbf{12.069} (0.020) & -inf & 9.277 (0.052) & 9.221 (0.033) \\
 & stripe & 11.719 (0.059) & \textbf{12.154} (0.039) & -inf & 9.751 (0.046) & 9.599 (0.022) \\
 & translate & 11.348 (0.049) & \textbf{11.972} (0.018) & -inf & 9.288 (0.062) & 9.305 (0.041) \\
 & zigzag & 11.621 (0.051) & \textbf{12.190} (0.025) & -inf & 9.486 (0.042) & 9.435 (0.029) \\
\bottomrule
\end{tabular}
\end{scriptsize}
\caption{Quality of approximation of the full Laplace approximation via the subspace models studied in this work on corrupted versions of MNIST for subspace dimension $s=100$. We use KL-divergence \ref{eq:KL} and logarithm of the trace \ref{eq:TraceOrderMain} to measure approximation quality. The standard error, across 5 seeds, is reported in parentheses. Smallest (KL) and highest (Log-Trace) values in a row are marked in bold. For $P_{\mathrm{subset-Diagonal}}$ the uncertainties are identical or almost identitical to zero for the considered $s$, which yields a Log-Trace of $-\infty$.}
\label{tab:MNIST_c_approximation_quality}
\end{table*}

\begin{table*}[p]
\begin{scriptsize}
\begin{tabular}{llcccccc}
\toprule
metric & corruption & lowrank-D. & lowrank-KFAC & subset-D. & subset-M. & subset-SWAG & $\hat{\theta}$ \\
\midrule
\multirow{15}{*}{NLL} & brightness & 0.106 (0.018) & 0.145 (0.023) & 0.100 (0.019) & 0.102 (0.019) & \textbf{0.099} (0.019) & 0.100 (0.019) \\
 & canny edges & 1.744 (0.156) & \textbf{1.334} (0.104) & 2.767 (0.303) & 2.324 (0.242) & 2.339 (0.237) & 2.784 (0.301) \\
 & dotted line & \textbf{0.066} (0.002) & 0.086 (0.002) & 0.076 (0.002) & 0.071 (0.002) & 0.069 (0.002) & 0.075 (0.002) \\
 & fog & 0.315 (0.017) & 0.350 (0.017) & 0.293 (0.017) & 0.300 (0.017) & 0.298 (0.017) & \textbf{0.291} (0.018) \\
 & glass blur & 0.233 (0.003) & 0.256 (0.002) & 0.235 (0.003) & 0.234 (0.003) & \textbf{0.232} (0.003) & 0.235 (0.003) \\
 & impulse noise & \textbf{0.327} (0.038) & 0.336 (0.032) & 0.407 (0.055) & 0.369 (0.047) & 0.363 (0.045) & 0.407 (0.055) \\
 & motion blur & 0.194 (0.025) & 0.218 (0.023) & 0.192 (0.028) & 0.192 (0.027) & \textbf{0.191} (0.027) & 0.192 (0.028) \\
 & rotate & 0.227 (0.003) & \textbf{0.220} (0.001) & 0.306 (0.004) & 0.281 (0.003) & 0.281 (0.002) & 0.308 (0.004) \\
 & scale & 0.133 (0.006) & 0.169 (0.006) & 0.136 (0.006) & 0.135 (0.007) & \textbf{0.131} (0.006) & 0.136 (0.006) \\
 & shear & \textbf{0.067} (0.001) & 0.082 (0.002) & 0.077 (0.002) & 0.073 (0.002) & 0.074 (0.002) & 0.078 (0.002) \\
 & shot noise & \textbf{0.118} (0.006) & 0.131 (0.006) & 0.138 (0.007) & 0.130 (0.007) & 0.128 (0.007) & 0.138 (0.007) \\
 & spatter & \textbf{0.068} (0.003) & 0.081 (0.002) & 0.083 (0.004) & 0.077 (0.003) & 0.077 (0.004) & 0.083 (0.004) \\
 & stripe & 0.832 (0.244) & \textbf{0.750} (0.188) & 1.460 (0.505) & 1.186 (0.391) & 1.250 (0.428) & 1.485 (0.504) \\
 & translate & 0.940 (0.034) & \textbf{0.799} (0.030) & 1.413 (0.074) & 1.298 (0.060) & 1.277 (0.065) & 1.456 (0.077) \\
 & zigzag & 0.491 (0.027) & \textbf{0.422} (0.020) & 0.770 (0.041) & 0.665 (0.039) & 0.654 (0.033) & 0.753 (0.044) \\
\midrule
\multirow{15}{*}{ECE} & brightness & 0.025 (0.005) & 0.062 (0.009) & 0.008 (0.002) & 0.010 (0.001) & 0.009 (0.001) & \textbf{0.008} (0.001) \\
 & canny edges & 0.235 (0.019) & \textbf{0.164} (0.018) & 0.292 (0.021) & 0.280 (0.020) & 0.271 (0.019) & 0.293 (0.020) \\
 & dotted line & \textbf{0.006} (0.001) & 0.031 (0.001) & 0.010 (0.000) & 0.007 (0.000) & 0.008 (0.000) & 0.010 (0.000) \\
 & fog & 0.141 (0.010) & 0.169 (0.010) & 0.120 (0.009) & 0.125 (0.009) & 0.124 (0.009) & \textbf{0.118} (0.009) \\
 & glass blur & 0.013 (0.001) & 0.049 (0.002) & 0.014 (0.002) & \textbf{0.009} (0.000) & 0.010 (0.001) & 0.014 (0.002) \\
 & impulse noise & \textbf{0.017} (0.006) & 0.034 (0.003) & 0.058 (0.009) & 0.047 (0.008) & 0.047 (0.008) & 0.058 (0.009) \\
 & motion blur & 0.021 (0.003) & 0.056 (0.002) & 0.010 (0.002) & 0.010 (0.001) & \textbf{0.009} (0.001) & 0.009 (0.002) \\
 & rotate & 0.023 (0.001) & \textbf{0.015} (0.001) & 0.044 (0.001) & 0.040 (0.001) & 0.040 (0.000) & 0.044 (0.001) \\
 & scale & 0.010 (0.001) & 0.058 (0.002) & 0.012 (0.001) & 0.009 (0.001) & \textbf{0.008} (0.001) & 0.012 (0.001) \\
 & shear & \textbf{0.005} (0.000) & 0.025 (0.001) & 0.010 (0.000) & 0.008 (0.000) & 0.009 (0.000) & 0.010 (0.000) \\
 & shot noise & \textbf{0.004} (0.001) & 0.029 (0.001) & 0.017 (0.001) & 0.014 (0.001) & 0.014 (0.001) & 0.017 (0.001) \\
 & spatter & \textbf{0.004} (0.000) & 0.022 (0.001) & 0.011 (0.000) & 0.009 (0.000) & 0.009 (0.001) & 0.011 (0.000) \\
 & stripe & 0.107 (0.056) & \textbf{0.091} (0.051) & 0.197 (0.069) & 0.164 (0.062) & 0.163 (0.065) & 0.180 (0.064) \\
 & translate & 0.123 (0.006) & \textbf{0.057} (0.007) & 0.180 (0.009) & 0.173 (0.008) & 0.170 (0.009) & 0.184 (0.010) \\
 & zigzag & 0.067 (0.005) & \textbf{0.026} (0.003) & 0.105 (0.007) & 0.097 (0.007) & 0.094 (0.006) & 0.103 (0.007) \\
\midrule
\multirow{15}{*}{Brier} & brightness & 0.049 (0.009) & 0.061 (0.011) & 0.050 (0.010) & 0.051 (0.010) & \textbf{0.049} (0.010) & 0.050 (0.010) \\
 & canny edges & 0.592 (0.036) & \textbf{0.545} (0.033) & 0.632 (0.039) & 0.624 (0.038) & 0.605 (0.035) & 0.633 (0.038) \\
 & dotted line & 0.032 (0.001) & 0.037 (0.001) & 0.034 (0.001) & 0.033 (0.001) & \textbf{0.032} (0.001) & 0.033 (0.001) \\
 & fog & 0.128 (0.008) & 0.140 (0.009) & 0.121 (0.008) & 0.123 (0.008) & 0.122 (0.008) & \textbf{0.120} (0.008) \\
 & glass blur & 0.115 (0.001) & 0.121 (0.001) & 0.116 (0.002) & 0.117 (0.001) & \textbf{0.115} (0.001) & 0.116 (0.002) \\
 & impulse noise & \textbf{0.160} (0.018) & 0.162 (0.016) & 0.173 (0.019) & 0.169 (0.019) & 0.162 (0.017) & 0.170 (0.020) \\
 & motion blur & 0.092 (0.012) & 0.097 (0.011) & 0.090 (0.013) & 0.091 (0.012) & 0.090 (0.012) & \textbf{0.090} (0.013) \\
 & rotate & 0.109 (0.001) & \textbf{0.107} (0.001) & 0.117 (0.001) & 0.115 (0.001) & 0.115 (0.001) & 0.117 (0.001) \\
 & scale & 0.065 (0.003) & 0.073 (0.003) & 0.064 (0.003) & 0.065 (0.004) & \textbf{0.063} (0.003) & 0.064 (0.003) \\
 & shear & \textbf{0.032} (0.001) & 0.035 (0.001) & 0.033 (0.001) & 0.033 (0.001) & 0.034 (0.001) & 0.033 (0.001) \\
 & shot noise & \textbf{0.055} (0.002) & 0.057 (0.002) & 0.056 (0.002) & 0.057 (0.002) & 0.055 (0.003) & 0.056 (0.002) \\
 & spatter & \textbf{0.034} (0.001) & 0.036 (0.001) & 0.035 (0.001) & 0.035 (0.001) & 0.035 (0.002) & 0.035 (0.001) \\
 & stripe & 0.364 (0.102) & \textbf{0.358} (0.100) & 0.473 (0.128) & 0.403 (0.121) & 0.398 (0.124) & 0.415 (0.124) \\
 & translate & 0.394 (0.013) & \textbf{0.370} (0.013) & 0.428 (0.017) & 0.426 (0.016) & 0.420 (0.017) & 0.432 (0.018) \\
 & zigzag & 0.217 (0.013) & \textbf{0.205} (0.011) & 0.244 (0.013) & 0.234 (0.014) & 0.230 (0.013) & 0.237 (0.015) \\
\bottomrule
\end{tabular}
\end{scriptsize}
\caption{Performance of uncertainty quantification on corrupted MNIST for all subspace methods studied in this work (with $s=100)$ and the MAP solution \eqref{eq:MAP} ($\hat{\theta}$). The standard error (in parantheses) is reported for 5 seeds.}
\label{tab:MNIST_c_UQ_eval}
\end{table*}

\newpage

\section{BEHAVIOR FOR SMALLER AND LARGER $s$}
\label{sec:behavior_for_smaller_and_larger_s}

\paragraph{Very Small $s$ Values. }Due to issues in computational complexity other methods in the literature, such as \citep{Izmailov2019}, use very small values of $s$. For completeness, we therefore report the performance of the lowrank and subset methdos in the very small $s$ regime for the Red Wine dataset. Figures \ref{fig:redwine_small_s_kl} and \ref{fig:redwine_small_s_logtrace} show the KL-divergence \ref{eq:KL} and logarithm of the trace criterion \eqref{eq:TraceOrderMain} for $s$ between 1 and 20. We observe no fundamentaly different behavior in this regime. For these small $s$ the approximation quality is insufficient as revealed by the KL-divergence. This is in line with the observation from \cite{Izmailov2019} that a temperature is needed in this regime to obtain reasonable uncertainties. In this work, we do not consider such modifications of the posterior.

\paragraph{Large Values of $s$. } In Figure \ref{fig:plots_with_feasible_optimal_solution} it appears that $P_{\mathrm{lowrank-Diagonal}}$ yields a KL-divergence that does not improve at all with increasing $s$ for the Red Wine dataset. However, this is only due to the fact that we only consider $s$ below 1000. Once we consider very large values of $s$ the Kl divergence starts to drop finally around $s=10,000$ as shown in Figure \ref{fig:redwine_large_s_kl}. Note, that Figure \ref{fig:redwine_large_s_kl} only shows subset methods as the low rank methods, introduced in this work, can only be used for $s$ below the rank of the Jacobian (which is around $700$ for Red Wine).

\begin{figure}[h]
    \centering
    \begin{subfigure}{0.32\linewidth}
    \includegraphics[width=\textwidth]{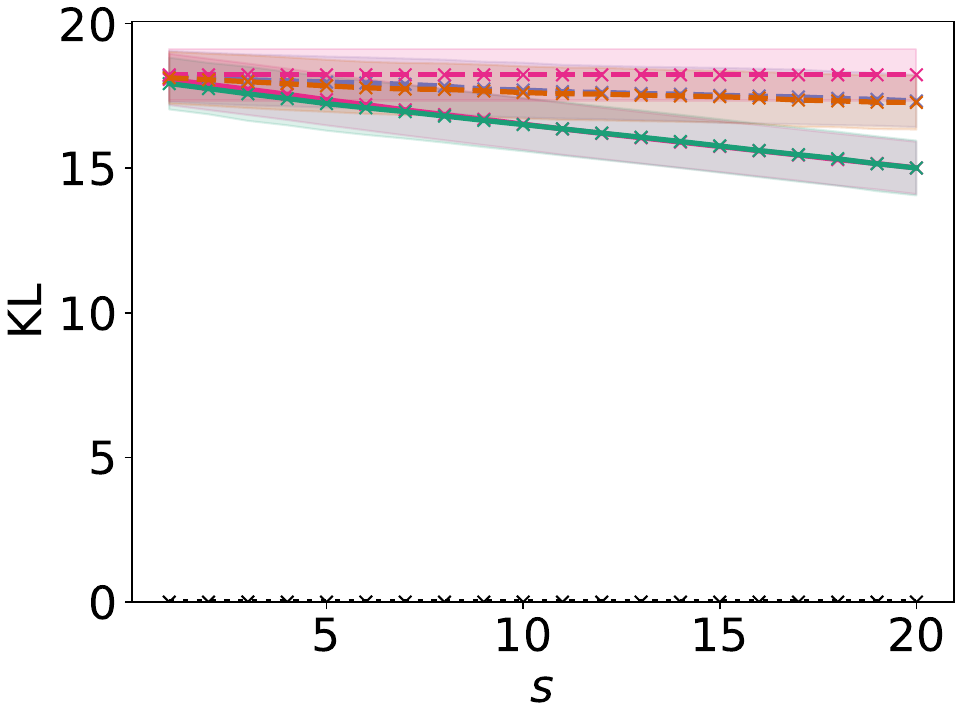}
    \caption{Red Wine (small $s$) - KL divergence}
    \label{fig:redwine_small_s_kl}
    \end{subfigure}%
    \begin{subfigure}{0.32\linewidth}
    \includegraphics[width=\textwidth]{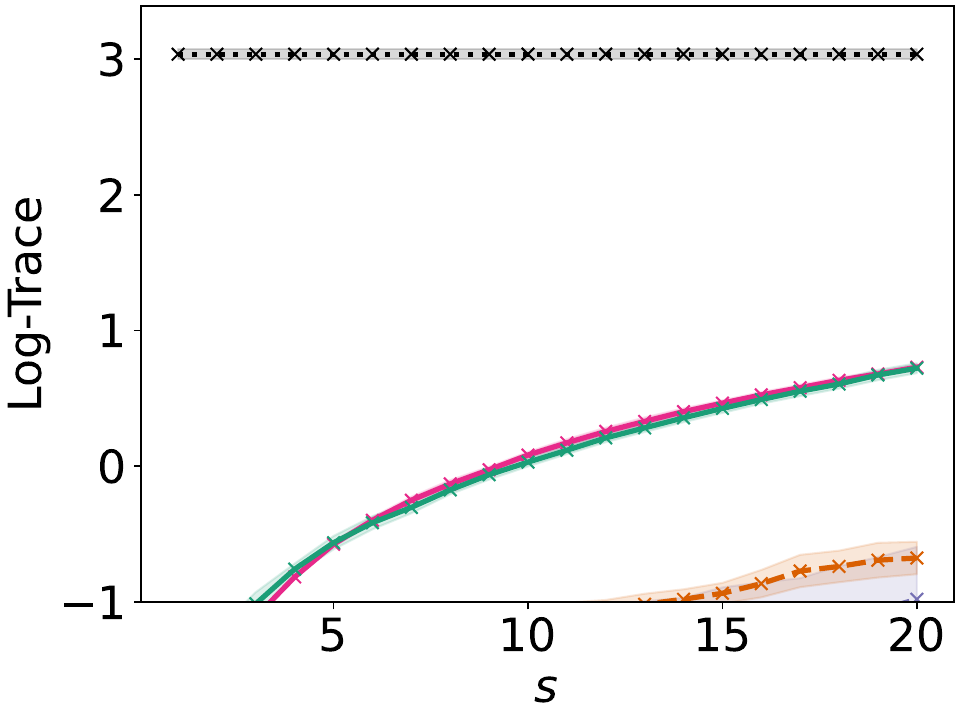}
    \caption{Red Wine (small $s$) - Log-Trace}
    \label{fig:redwine_small_s_logtrace}
    \end{subfigure} \\
    \begin{subfigure}{0.32\linewidth}
    \includegraphics[width=\textwidth]{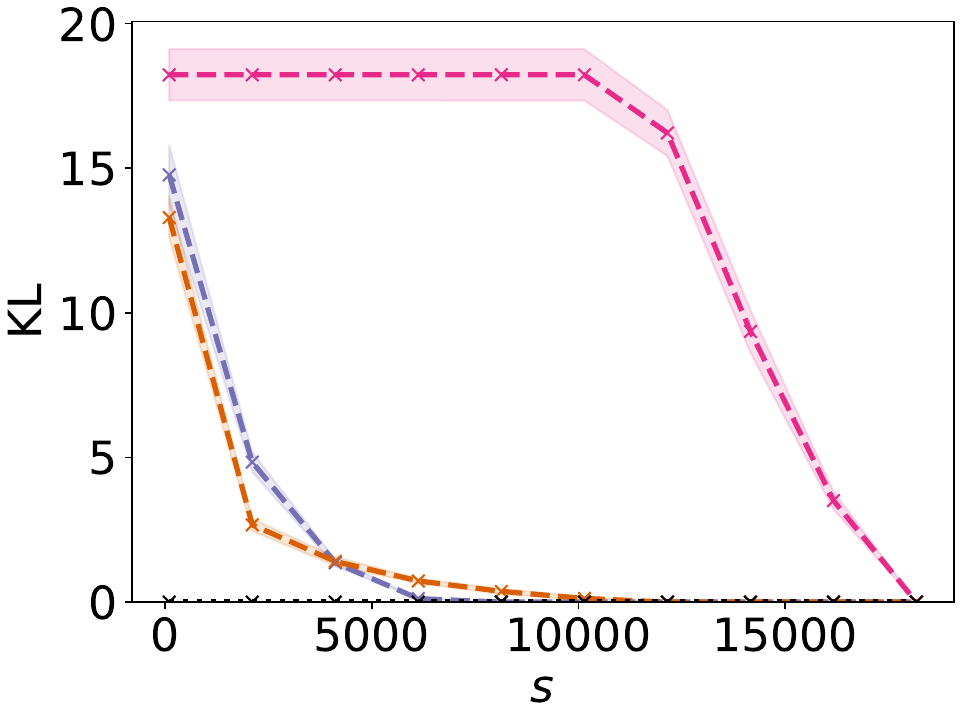}
    \caption{Red Wine (large $s$) - KL-divergence}
    \label{fig:redwine_large_s_kl}
    \end{subfigure}%
    \begin{subfigure}{0.32\linewidth}
    \includegraphics[width=\textwidth]{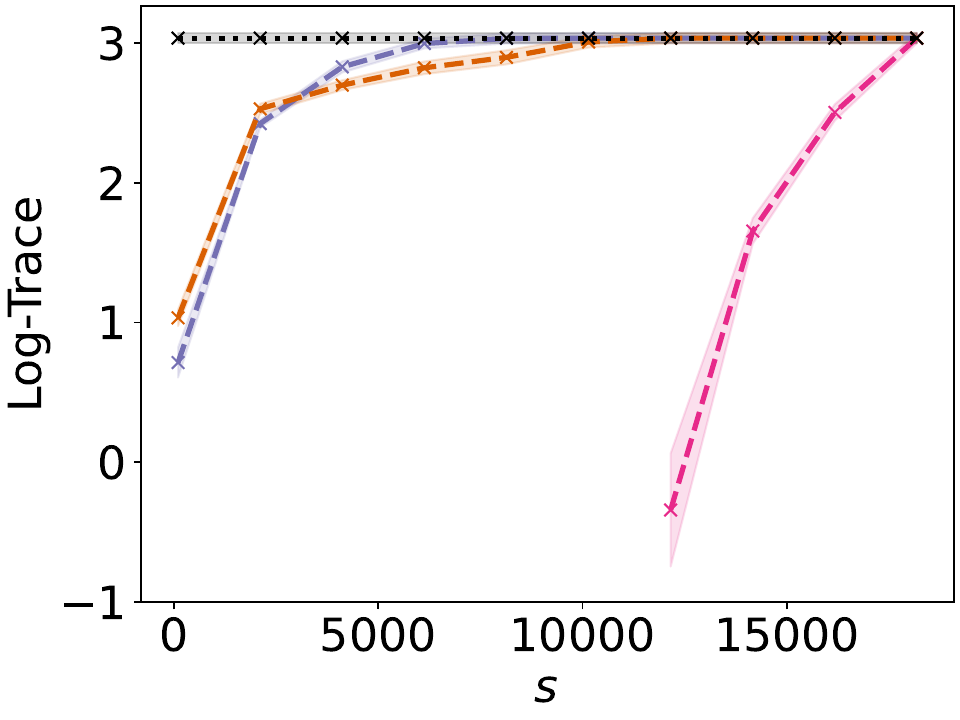}
    \caption{Red Wine (large $s$) - Log-Trace}
    \label{fig:redwine_large_s_logtrace}
    \end{subfigure}
    \caption{KL-divergence \eqref{eq:KL} (left column)) and logarithm of the trace \ref{eq:TraceOrderMain} (right column) for the Red Wine dataset and very small (top row) and very large (bottom row) values of $s$. The colour coding is identical to the one in Figure \ref{fig:plots_with_feasible_optimal_solution}. The bottom row does not show low rank methods as explained in Section \ref{sec:behavior_for_smaller_and_larger_s}.}
\end{figure}

\section{DEAD PARAMETERS}
    \label{sec:AppendixDeadParam}

\begin{table}[h!]
\centering
 \caption{Relative number of parameters $p$ that are insensitive to the input data for the regression datasets studied in this work. The standard deviation is taken over five seeds.}
 \label{tab:DeadParam}
 \begin{tabular}{c c c c c c c c} 
 \toprule
 dataset & ENB & Red Wine & California & Naval \\
 \midrule
  dead $p$  & $92 \pm 1\% $ & $89 \pm 2\%$ & $60\pm 2\%$ & $34\pm 4\%$ \\
 \bottomrule
 \end{tabular}
\end{table}

Note that our low rank methods allow for a wider class of subspace models as the subset methods, because they allow for linear combinations of the parameters instead of a simple selection.
 In fact, all subset solutions could be found by the low rank approximations, however, the opposite is not true. One reason why subset methods could nevertheless outperform low rank methods is that most of the parameters are irrelevant for a certain problem, i.e. have a gradient of zero w.r.t. the input. Indeed, Table \ref{tab:DeadParam} confirms this hypotheses, because the number of insensitive parameters positively correlates with an improved performance of the subset methods compared to the low rank methods. ENB, with most dead parameters, is the only experiment in which the subspace models are superior to the low rank subspace models according to the trace criterion \eqref{eq:TraceOrder}, cf. Figure \ref{fig:plots_with_feasible_optimal_solution}. For California or Naval Propulsion on the other hand, with much fewer dead parameters, low rank approximations clearly outperform subset approximations (cf. Figure \ref{fig:plots_with_feasible_optimal_solution}).

This effect is visualized in Figure \ref{fig:dead_parameters}. The top displays a heatmap that highlights the sensitivity of parameters (the gradient w.r.t. the input) for a subset of the test dataset of ENB (left) and California (right). Light colours denote high sensitivity and dark colours low sensitivity. Below the average gradient of all data points w.r.t. a certain parameter is shown. For ENB both plots indicate that only a few parameters are highly responsive for most data points. If a subset method can capture these parameters, it shall perform well. In contrast, for California the sensitivity is more spread and hence, a linear combination can be more appropriate.

 \begin{figure*}[!t]
    \centering
    \begin{subfigure}[b]{0.45\textwidth}
        \includegraphics[width=\textwidth]{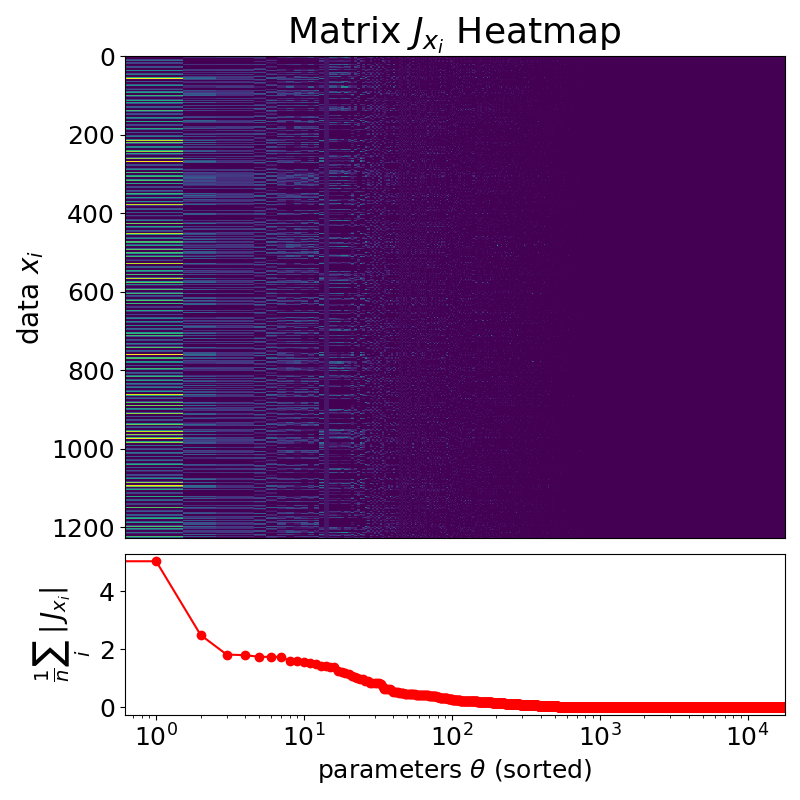}
        \caption{ENB}
    \end{subfigure}
    \hfill
    \begin{subfigure}[b]{0.45\textwidth}
        \includegraphics[width=\textwidth]{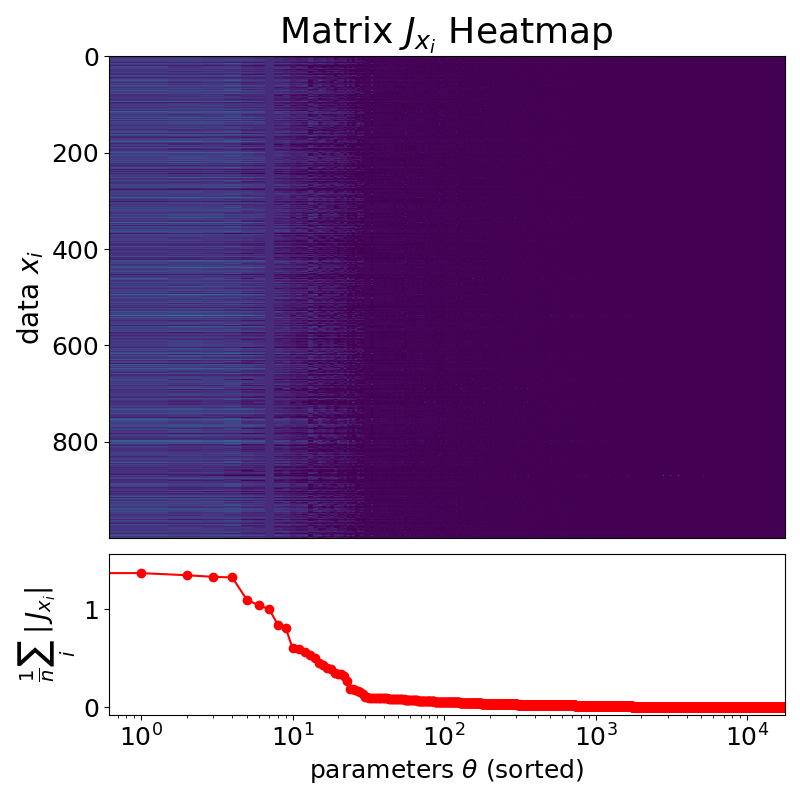}
        \caption{California}
    \end{subfigure}
    \caption{The top displays a heatmap which highlights the activity of the gradients corresponding to the parameter $\theta_i$ for a subset of the test data for the regression datasets ENB (left) and Carlifornia (right). The parameters are sorted according to their sensitivity. A dark bluish colour implies that the gradient w.r.t. the data point $x_i$ is negligible. The lower plot summarizes the magnitude of the sensitivity over all data points.}
    \label{fig:dead_parameters}
\end{figure*}

\section{FISHER INFORMATION, GENERALIZED GAUSS NEWTON AND HESSIAN}
\label{sec:FI_GGN_and_H}

For the computations in our experiments we use the Fisher information matrix $\mathcal{I}$ instead of the generalized Gauss-Newton matrix $H_{\mathrm{GGN}}$ which are identical objects for the cases considered in this work \cite{Heskes2000, Martens14, Dauphin2014}.
We summarize the relation between $\mathcal{I}$, $H_{\mathrm{GGN}}$ and the Hessian below in Section \ref{sec:FI}  -\ref{sec:FI_and_GGN}. For a more detailed analysis, we refer to the survey \cite{Martens14}.

As follows from the identities in \ref{sec:FI} - \ref{sec:FI_and_GGN} we have $\mathcal{I}=VV^\intercal$ with a $V\in \RR^{p\times NC}$ that can be computed via minibatches from $\mathcal{D}$ and expressions that involve first order derivatives of $f_\theta$. This allows us to compute for any matrix $P\in \RR^{p\times s}$ the expression 
\begin{align}
\label{eq:quadratic_Fisher_Form}
P^\intercal H_{\mathrm{GGN}}P =P^\intercal \mathcal{I} P  = (V^\intercal P)^\intercal V^\intercal P \in \RR^{s\times s}
\end{align}
in a scalable manner if $s$ is sufficiently small. Thus, while we often can't actually compute $H_{\mathrm{GGN}}$ or $\mathcal{I}$ in practice, we can usually compute quadratic forms such as \eqref{eq:quadratic_Fisher_Form}.


\subsection{Fisher Information}
\label{sec:FI}

\subsubsection{Multivariate Regression}

\[
    p(y_i | x_i, \theta) = \frac{1}{\sqrt{(2 \pi)^{C} \det(\Sigma)}}
    e^{- \frac{1}{2} \norm{y_i - f(x_i, \theta)}^2_{\Sigma^{-1}}}
\]
is a common choice to model multivariate regression problems. For simplicity we assume that the covariance matrix $\Sigma \in \mathbb{R}^{C\times C}$ is independent of the parameter $\theta$. The explicit form of the information matrix for a single input $x_i$ is 
\eqn{
    \mathcal{I}_{kl}(x_i) & = \frac{1}{2}
    \E{y \sim p(y | x_i, \theta)}{\partial_{\theta_k}\partial_{\theta_l} \norm{y_i - f_i}^2_{\Sigma^{-1}}} \\
    & = \sum_{c_1,c_2=1}^C \partial_{\theta_k} f_i^{c_1} \left(\Sigma^{-1}\right)^{c_1 c_2} \partial_{\theta_l} f^{c_2}_i \\
    & = \left(\left(\nabla_{\theta} f_i\right)^{\intercal} \Sigma^{-1} \nabla_{\theta} f_i \right)_{kl}\,. 
}{eq:IMultiRegr}
The abbreviation $f_i=f_{\theta}(x_i)$ is used for readability. 
For $\Sigma = \sigma^2 \mathbb{1}$ (as in this work), we obtain 
\eqnn{
    \mathcal{I}_{kl}(x_i) = \sigma^{-2}\left(\left(\nabla_{\theta} f_i\right)^{\intercal} \nabla_{\theta} f_i \right)_{kl} \,.
}
For the Fisher information matrix of the joint distribution we arrive at
\eqnn{
    \mathcal{I}_{kl} & = \frac{1}{2}
    \E{(x, y) \sim p(y,  x | \theta)}{\partial_{\theta_k}\partial_{\theta_l} \norm{y_i - f_i}^2_{\sigma^{-2}\mathbb{1}}} \\
    & \simeq \frac{\sigma^{-2}}{N} \sum_{i=1}^N \left(\left(\nabla_{\theta} f_i\right)^{\intercal} \nabla_{\theta} f_i \right)_{kl}\,,
}
where in the last line $q(x)$ is approximated by $\hat{q}(x)$.

\subsubsection{Softmax Classifier}

For classification we consider the categorical distribution $y| x, \theta \sim \mathrm{Cat}(y | \phi(f_{\theta}(x))$ with probability vector
\[
    \phi^c_i = \phi^c(f_{\theta}(x_i)) 
    = \frac{e^{f_\theta^c(x_i)}}{\sum_{\tilde{c}= 1}^C e^{f^{\tilde{c}}_\theta(x_i)}}
    = \frac{e^{f^c_i}}{\sum_{\tilde{c}= 1}^C e^{f^{\tilde{c}}_i}}.
\]
The general form of the Fisher information matrix is given by 
\eqnn{ 
    \mathcal{I}_{kl} 
    & = \E{(x, y) \sim p(x, y | \theta)}{\partial_{\theta_k} \ln p(x, y | \theta) \partial_{\theta_l} \ln p(x, y | \theta)}  \\
    & = \E{y \sim p(y | x, \theta), x \sim q(x)}{\partial_{\theta_k} \ln p(y|x, \theta) \partial_{\theta_l} \ln p(y|x, \theta)}  \\
    & \simeq \frac{1}{N} \sum_{i=1}^N \sum_{c=1}^C \phi^c_i \partial_{\theta_k} \ln \phi^c_i \partial_{\theta_l} \ln \phi^c_i  \\
    & = \frac{4}{N} \sum_{i=1}^N \sum_{c=1}^C  \partial_{\theta_k} \sqrt{\phi^c_i} \partial_{\theta_l} \sqrt{\phi^c_i} \,,
}
where in the third line the empirical distribution $\hat{q}(x)$ is used to compute the expected value of the random variable $x$.

\subsection{Relation Between Hessian and Fisher Information Matrix}
\label{sec:FI_and_H}
Given the averaged log-likelihood $\frac{1}{N} \sum_{i} \ln p(y_i | f_{\theta}(x_i))$ of the data its Hessian w.r.t $\theta$ can be written as
\eqn{
    H & = - \frac{1}{N} \sum_{i=1}^N \nabla^2_{\theta}\ln p(y_i | f_{\theta}(x_i)) \\
    & = \E{(x,y) \sim \hat{q}(x, y)}{H_{\text{-}\ln p(y | x, \theta)}} \\
    & = \frac{1}{N} \sum_{i=1}^N \E{y \sim \hat{q}(y| x_i)}{H_{\text{-}\ln p(y | x_i, \theta)}} \,,
}{eq:HessianSimilarFisher}
where we wrote $H_{\text{-}\ln p(y | x, \theta)} = - \nabla^2_{\theta}\ln p(y | f_{\theta}(x))$. 

The Fisher information matrix $\mathcal{I}$ of $p(x, y | \theta)$ w.r.t the parameter $\theta$ is
\eqnn{
    \mathcal{I} & = \E{(x,y) \sim p(x, y | \theta)}{\nabla_{\theta} \ln p(x, y | \theta)^\intercal \nabla_{\theta} \ln p(x, y | \theta)} \\
    & = \E{y \sim p(y | x, \theta), x \sim q(x)}{\nabla_{\theta} \ln p(y | x, \theta)^\intercal \nabla_{\theta} \ln p(y | x, \theta)} \\
    & = - \E{y \sim p(y | x, \theta), x \sim q(x)}{\nabla_{\theta}^2 \ln p(y | x, \theta)} \\
    & = \E{y \sim p(y | x, \theta), x \sim q(x)}{H_{\text{-}\ln p(y | x, \theta)}} \,.
}
Since $q(x)$ is not analytically known, we shall use the empirical distribution $\hat{q}(x)$ instead.
\eqn{
    \mathcal{I} = \frac{1}{N} \sum_{i=1}^N \E{y \sim p(y | x=x_i, \theta)}{H_{\text{-}\ln p(y | x, \theta)}} \,.
}{eq:FisherSimilarHessian}
The equations \eqref{eq:FisherSimilarHessian} and \eqref{eq:HessianSimilarFisher} are quite similar. The difference is the distribution under which the expectation is computed. However, note that \eqref{eq:HessianSimilarFisher} and \eqref{eq:FisherSimilarHessian} are different from the empirical Fisher information matrix
\eqnn{
    \mathcal{I_{\mathrm{empirical}}} & = \E{y \sim \hat{q}(x, y | \theta)}{\nabla_{\theta} \ln p(x, y | \theta)^\intercal \nabla_{\theta} \ln p(x, y | \theta)} \\
    & = \frac{1}{N} \sum_{i=1}^N \nabla_{\theta} \ln p(y_i | x_i,\theta)^\intercal \nabla_{\theta} \ln p(y_i | x_i, \theta).
}

\subsection{Relation Between Hessian and Generalized Gauss-Newton Matrix}
\label{sec:GGN_and_H}

The generalized Gauss-Newton matrix is often used as a substitute of the Hessian because it is positive semi-definite and easier to compute \cite{Martens14}. For generalized linear models both quantities coincide. Let us write the Jacobian w.r.t. the log-likelihood as $\nabla_{\theta} \ln p(y_i | f_{\theta}(x_i)) =  \nabla_{f_i} \ln p(y_i | f_i) \nabla_{\theta} f_{\theta}(x_i)=  \nabla_{f_i} \ln p(y_i | f_i)J_{f_i}$ and $H_{f_i^c} = \nabla_{\theta}^2 f_{\theta}(x_i)^c$ for $1\leq c\leq C$. Then the Hessian can be decomposed into
\eqn{
    H  = &\frac{-1}{N} \sum_{i=1}^N \Big(J_{f_i}^\intercal \nabla_{f_i}^2 \ln p(y_i | f_i) J_{f_i}   \\
     & + \sum_{c=1}^C H_{f_i^c} \partial_{f_i^c} \ln p(y_i | f_i) \Big)\\
     = &H_{\mathrm{GGN}} - \frac{1}{N} \sum_{i=1}^N \sum_{c=1}^C H_{f_i^c} \partial_{f_i^c} \ln p(y_i | f_i)
}{eq:HessinaSimilarGGN}
with the generalized Gauss-Newton matrix 
\eqn{
    H_{\mathrm{GGN}} & = - \frac{1}{N} \sum_{i=1}^N J_{f_i}^\intercal \nabla_{f_i}^2 \ln p(y_i | f_i) J_{f_i} \\
    & = \frac{1}{N} \sum_{i=1}^N J_{f_i}^\intercal H_{\text{-}\ln p(y_i | {f_i})} J_{f_i}\,.
}{eq:AppendixGGN}
A sufficient condition that the generalized Gauss-Newton matrix and the Hessian coincide is that the model is linear, because for linear models $H_{f_i^c} = 0$ for $1\leq c \leq C$. In the definition of the generalized Gauss-Newton matrix a choice about where the cut between the loss and the network function has to be made. This is to some degree arbitrary, however, \cite{Schraudolph2002} recommends to perform as much as possible of the computation in the loss such that $\ln p(y_i | f)$ is still convex to ensure positive semi-definiteness of $H_{\mathrm{GGN}}$.

\subsection{Relation Between Fisher Information Matrix and Generalized Gauss-Newton Matrix}
\label{sec:FI_and_GGN}

Rewriting $\nabla_{\theta} \ln p(y_i | f_{\theta}(x_i)) = \nabla_{f_i} \ln p(y_i | f_i)  J_{f_i}$ the Fisher information matrix is of the form 
\eqn{
    \mathcal{I} & = \E{y \sim p(y | x, \theta), x \sim q(x)}{\nabla_{\theta} \ln p(y | x, \theta)^\intercal \nabla_{\theta} \ln p(y | x, \theta)} \\
    & = \E{x}{J_f^\intercal \E{y}{\nabla_f \ln p(y | f)^\intercal \left( \nabla_f \ln p(y | f)\right)} J_f} \\
    & := \E{x}{J_f^\intercal \mathcal{I}_{\ln p(y|f)} J_f} \,,
}{eq:FisherSimilarGGN}
where we write shorthand $f=f_\theta(x)$ and
\eqnn{
    \mathcal{I}_{\ln p(y|f)} & = \E{y}{\nabla_f \ln p(y | f)^\intercal \nabla_f \ln p(y | f)} \\
    & = - \E{y}{\nabla_f^2 \ln p(y | f)} \\
    & = \E{y}{H_{\text{-}\ln p(y | f)}}
}
is the ``Fisher information matrix of the predictive distribution''. 

From these two identities it easily follows that if we substitute $q(x)$ by its empirical distribution $\hat{q}(x)$, the generalized Gauss-Newton matrix \eqref{eq:AppendixGGN} is identical to the Fisher information matrix \eqref{eq:FisherSimilarGGN} if $H_{\text{-}\ln p(y|f)}$ is constant in $y$. This is the case for squared error loss and cross-entropy loss \cite{Heskes2000,Pascanu2013,Martens14}. Indeed, for squared error loss we have
\[
     H_{\text{-}\ln p(y|f)} = \nabla_f^2 \frac{1}{2} \norm{f - y}^2_{\Sigma^{-1}} = \Sigma^{-1}
\]
and for cross-entropy loss we obtain
\eqnn{
     H_{\ln p(y|f);c^\prime c^{\prime\prime}} & = \partial_{f^{c^\prime}} \partial_{f^{c^{\prime\prime}}} \sum_{c=1}^C y^c \ln \phi^c \\
    &  = \partial_{f^{c^\prime}} \partial_{f^{c^{\prime\prime}}} \sum_{c=1}^C y^c \ln \frac{e^{f^c}}{\sum_{{\tilde{c}}=1}^C e^{f^{\tilde{c}}}} \\
     & = \partial_{f^{c^\prime}} \partial_{f^{c^{\prime\prime}}} \left(\sum_{c=1}^C y^c f^c - \sum_c y^c \ln \sum_{{\tilde{c}}=1}^C e^{f^{\tilde{c}}}  \right) \\
    & = \partial_{f^{c^\prime}} \partial_{f^{c^{\prime\prime}}} \left(\sum_{c=1}^C y^c f^c - \ln \sum_{{\tilde{c}}=1}^C e^{f^{\tilde{c}}}  \right) \\
    & = - \partial_{f^{c^\prime}} \partial_{f^{c^{\prime\prime}}} \ln \sum_{{\tilde{c}}=1}^C e^{f^{\tilde{c}}}
    = - \partial_{f^{c^{\prime\prime}}} \phi^{c^\prime} \\
    & = - \delta_{c^{\prime}c^{\prime\prime}} \phi^{c^{\prime}} + \phi^{c^{\prime}} \phi^{c^{\prime\prime}} \,,
}
which are both constant in $y$.

\end{document}